\documentclass{article}

\usepackage{microtype}
\usepackage{wrapfig}
\usepackage{graphicx}
\usepackage{subcaption}
\usepackage{booktabs} 

\usepackage{hyperref}
\usepackage{tcolorbox}

\usepackage[preprint]{icml2026}

\usepackage{amsmath}
\usepackage{amssymb}
\usepackage{mathtools}
\usepackage{amsthm}

\usepackage[capitalize,noabbrev]{cleveref}

\theoremstyle{plain}
\newtheorem{theorem}{Theorem}[section]

\newtheorem{lemma}[theorem]{Lemma}

\theoremstyle{definition}

\newtheorem{assumption}[theorem]{Assumption}
\theoremstyle{remark}
\newtheorem{remark}[theorem]{Remark}

\DeclareMathOperator*{\argmin}{argmin}

\usepackage[textsize=tiny]{todonotes}

\usepackage{xspace}
\newcommand{\name}{PD-VI\xspace}
\newcommand{\pname}{P$^2$D-VI\xspace}

\newcommand{\vecx}{\boldsymbol{x}}

\newcommand{\vecz}{\boldsymbol{z}}
\newcommand{\vecbeta}{\boldsymbol{\beta}}

\newcommand{\veczeta}{\boldsymbol{\zeta}}

\newcommand{\matA}{\mathbf{A}}

\icmltitlerunning{Scalable Mean-Field Variational Inference}

\begin{document}

\twocolumn[
  \icmltitle{Scalable Mean-Field Variational Inference via Preconditioned Primal–Dual Optimization}
  


  \icmlsetsymbol{equal}{*}

  \begin{icmlauthorlist}
    \icmlauthor{Jinhua Lyu}{iems}
    \icmlauthor{Tianmin Yu}{math}
    \icmlauthor{Ying Ma}{biology}
    \icmlauthor{Naichen Shi}{iems}
  \end{icmlauthorlist}

  \icmlaffiliation{iems}{Department of Industrial Engineering and Management Science, Northwestern University}
  \icmlaffiliation{math}{Department of Mathematics, Northwestern University}
  \icmlaffiliation{biology}{Department of Biostatistics, Brown University}
  
  \icmlcorrespondingauthor{Ying Ma}{ying\_ma@brown.edu}
  \icmlcorrespondingauthor{Naichen Shi}{naichen.shi@northwestern.edu}

  \icmlkeywords{Machine Learning, ICML}

  \vskip 0.3in
]



\printAffiliationsAndNotice{}  

\begin{abstract}
In this work, we investigate the large-scale mean-field variational inference (MFVI) problem from a mini-batch primal-dual perspective. By reformulating MFVI as a constrained finite-sum problem, we develop a novel primal--dual algorithm based on an augmented Lagrangian formulation, termed primal--dual variational inference (PD-VI). PD-VI jointly updates global and local variational parameters in the evidence lower bound in a scalable manner. To further account for heterogeneous loss geometry across different variational parameter blocks, we introduce a block-preconditioned extension, P$^2$D-VI, which adapts the primal–dual updates to the geometry of each parameter block and improves both numerical robustness and practical efficiency. We establish convergence guarantees for both PD-VI and P$^2$D-VI under properly chosen constant step size, without relying on conjugacy assumptions or explicit bounded-variance conditions. In particular, we prove $\mathcal{O}(1/T)$ convergence to a stationary point in general settings and linear convergence under strong convexity. Numerical experiments on synthetic data and a real large-scale spatial transcriptomics dataset demonstrate that our methods consistently outperform existing stochastic variational inference approaches in terms of convergence speed and solution quality.
\end{abstract}

\section{Introduction}
A fundamental problem in Bayesian inference is to characterize the posterior distribution of latent variables $\vecz$ given observations $\vecx$, i.e., $p(\vecz \mid \vecx)$. 
In many statistical applications, this posterior is \emph{non-conjugate}, rendering exact inference intractable. Furthermore, $\vecz$ is often \emph{high-dimensional}, creating additional challenges for inference. Variational inference (VI)~\cite{blei2017variational, zhang2018advances} has emerged as an alternative by reframing Bayesian inference as an optimization problem: approximating the posterior with a tractable variational distribution $q(\vecz)$ that minimizes the Kullback--Leibler divergence to $p(\vecz \mid \vecx)$. 
By leveraging optimization rather than sampling, VI often achieves orders-of-magnitude speedups over Markov chain Monte Carlo (MCMC)-based methods~\cite{hastings1970monte, geman1984stochastic,robert1999monte}, and has seen widespread success in applications such as Bayesian image segmentation~\cite{orbanz2008nonparametric, gao2023bayeseg}, hidden Markov model decoding~\cite{eddy1996hidden, mor2021systematic}, and visual embedding tokenization~\citep{roy2018theory}.

Among the many variational families, \emph{mean-field} (MF) variational inference remains one of the most widely adopted due to its simplicity and computational efficiency~\cite{cohn2010mean, tolle2021mean}. 
By assuming independence among latent variables, MF variational distributions factorize the joint posterior, enabling efficient optimization through coordinate ascent~\citep{blei2017variational}, gradient flow~\citep{yao2022mean}, or proximal-gradient methods~\citep{khan2015faster,baque2016principled,JMLR:v26:23-0573}. 
These algorithms are effective in moderate-scale settings where full-batch optimization is feasible.

Despite its popularity, two fundamental challenges arise when MFVI is deployed in modern large-scale Bayesian inference problems. 
The first challenge is \textit{scalability}. In many contemporary Bayesian models, latent variables are tied to individual observations, so the number of latent variables grows linearly with the dataset size. As a result, standard alternating minimization or full-batch gradient methods become impractical. 
A representative example is \emph{spatial transcriptomics}~\cite{zhao2021spatial,ma2024accurate}, where state-of-the-art datasets routinely contain hundreds of thousands of spatial locations, each associated with its own latent variables. Moreover, modeling spatial dependencies is critical in this setting, further increasing the computational and optimization complexity of MFVI. 
To address this issue, a number of large-scale MFVI algorithms have been proposed, among which stochastic variational inference (SVI)~\cite{hoffman2013stochastic} and its extensions~\cite{pmlr-v15-wang11a, hoffman2015structured, ranganath2013adaptive, khan2015faster, amari1998natural, martens2020new} are the most widely used. 
Despite their scalability, these methods suffer from inherent limitations: they often require diminishing step sizes to ensure convergence to stationary points, and they can become inefficient when mini-batches are non-i.i.d.
This limitation is illustrated in the middle panel of Figure~\ref{fig:distribution}, where SVI with a diminishing step size is applied to a Gaussian mixture model using poorly mixed (non-i.i.d.) mini-batches. The resulting reconstructed probability density deviates substantially from the ground truth, illustrating the suboptimal optimization performance. 

\begin{figure}[ht]
  \begin{center}
    \centerline{\includegraphics[width=\columnwidth]{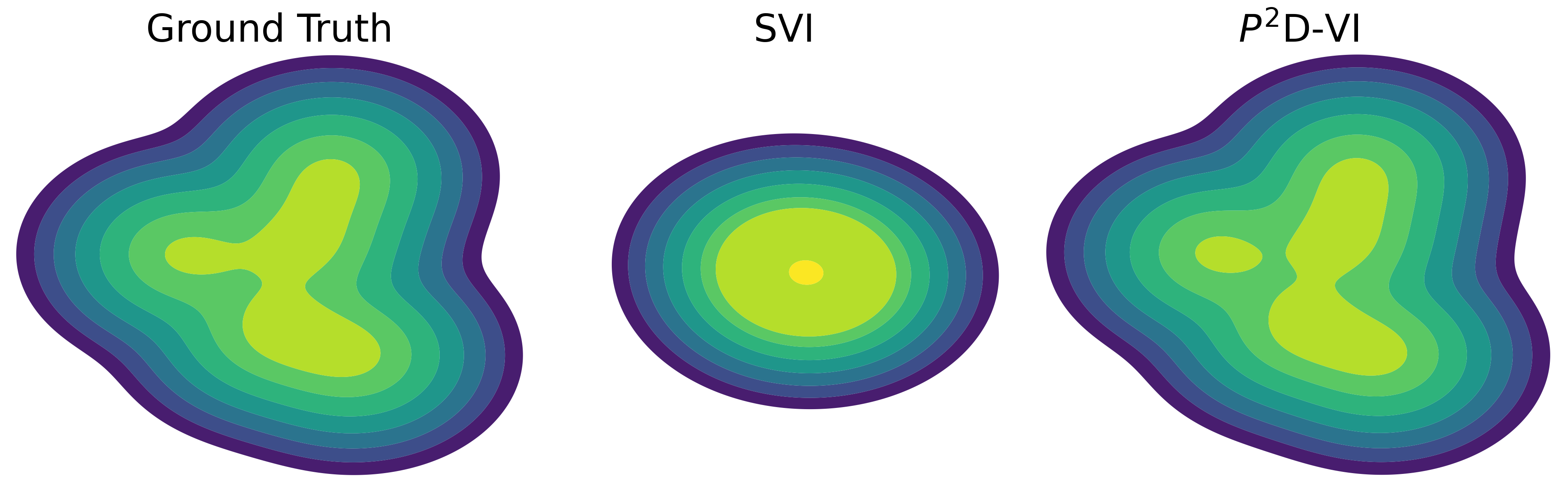}}
    \caption{Ground truth probability density of a Gaussian mixture model, and the probability density reconstructed by SVI with diminishing stepsizes, and by \pname (our algorithm). }
    \label{fig:distribution}
  \end{center}
  \vspace{-0.9cm}
\end{figure}

The second challenge is \emph{parameter heterogeneity}.
As models grow more complex, parameter blocks often operate at different scales and exhibit distinct curvatures.
Treating them uniformly can result in poor conditioning, slow convergence, and numerical instability, especially in large-scale variational inference.




To address the first challenge, we adopt a primal–dual perspective and reformulate mini-batch MFVI as a constrained finite-sum optimization problem.
This leads to a mini-batch primal–dual algorithm, termed primal–dual variational inference (\name).
Instead of relying on diminishing step size, \name implicitly adapts the effective constraint enforcement
through dual-variable updates, ensuring stable optimization under a constant step size.

 To tackle the second challenge of \textit{parameter heterogeneity}, we further introduce a block-preconditioned augmented Lagrangian that rescales updates for each parameter block.
This leads to preconditioned primal-dual variational inference (\pname), a variant that allows different parameter blocks to update at different scales, thus enabling faster and more robust numerical optimization. 

From a theoretical perspective, we show that \name and \pname converge to a stationary point under properly chosen constant step size in mini-batch settings, without requiring conjugacy assumptions or bounded gradient variance. 
To our knowledge, this is \textbf{the first primal--dual framework for large-scale MFVI} that handles global and local variables simultaneously.

Our main contributions are summarized as follows:
\begin{itemize}
\item We propose \name, a mini-batch primal--dual optimization framework for large-scale mean-field variational inference that jointly handles global and local latent variables. We further develop a preconditioned variant \pname that improves numerical efficiency and stability under parameter heterogeneity.
\item We establish convergence guarantees with \emph{constant step size}, proving $\mathcal{O}(1/T)$ convergence rates in nonconvex settings, and linear convergence in the strongly convex case without assuming conjugacy or bounded gradient variance. Technically, we introduce a novel error bound to handle the simultaneous update of global and local variables. 
\item We validate the proposed methods on a synthetic Gaussian mixture model
and on a non-conjugate real-world spatial transcriptomics dataset, demonstrating consistent improvements in both convergence speed and empirical performance over existing stochastic variational inference methods.
\end{itemize}

\begin{table*}[t]
\centering
\caption{Comparison of representative algorithms for VI. The first three columns show the convergence rate under the corresponding assumptions. 
Here, $\sigma^2$ denotes the variance of the stochastic gradient, and $\rho$ and $r$ are problem-dependent constants.The ``Local variables'' column indicates whether the algorithm can explicitly work with models with local latent variables associated with each sample. The “Constant step size” column indicates whether the method involves a constant step size.}
\label{tab:lmo_methods}
\small
\setlength{\tabcolsep}{8pt}
\renewcommand{\arraystretch}{1.2}
\begin{tabular}{p{4cm}  p{1.6cm} p{1.6cm} p{2.6cm} p{1.2cm}  p{1.2cm} }
\hline
Method  & General & Convex & Strongly convex & Local variables  &  Constant step size \\
\hline

SVI~\cite{hoffman2013stochastic}

& $O(\frac{1}{\sqrt{T}})$
& -
& -
& \checkmark
& $\times$ \\

PG-SVI~\cite{khan2015faster}

& $O(\frac{1}{T}+\sigma^2)$
& -
& -
& $\times$
& \checkmark\\
SNGVI~\cite{wu2024understanding}

& $O(\frac{1}{T})$
& -
& -
& $\times$
& $\times$ \\
CV~\citep{wang2024joint} & - & - & - & $\times$ & $\times$\\
Adam~\citep{zhang2022adam} & $O\left(\frac{\log T}{\sqrt{T}}\right)$ & - & - &\checkmark & $\times$\\
SGD~\cite{garrigos2023handbook}

& $O(\frac{1}{\sqrt{T}} + \sigma^2)$
& $O(\frac{1}{\sqrt{T}} + \sigma^2)$
& $O(\rho^T + \sigma^2)$, $\rho <1$
&\checkmark
& \checkmark \\

\name and \pname (Ours)

& $O(\frac{1}{T})$
& $O(\frac{1}{T})$
& $O(\frac{1}{r^{T}})$, $r>1$
&\checkmark
& \checkmark \\

\hline
\end{tabular}
\vspace{-0.5cm}
\end{table*}

\section{Related Works} \label{sec:related_work}

\paragraph{Variational inference} Variational inference~\cite{blei2017variational} is a widely used approach for approximating the posterior distribution of a Bayesian model by optimizing over a tractable family of distributions. A popular algorithm in VI is coordinate ascent variational inference (CAVI)~\cite{bhattacharya2023convergence, wang2025bayesian}, which iteratively updates each variational factor while holding the others fixed, yielding an elegant optimization algorithm with robust convergence performance~\cite{blei2017variational, wang2025bayesian}. However, CAVI must iterate over all data points sequentially each round, which becomes prohibitively slow on large datasets~\cite{hoffman2013stochastic}.

\textbf{Stochastic Variational Inference.}
To improve computational efficiency, extensive work has focused on stochastic or mini-batch VI. 
SVI~\cite{hoffman2013stochastic} is a seminal method that scales VI to massive datasets by combining stochastic optimization~\cite{robbins1951stochastic} with natural-gradient updates~\cite{amari1998natural} in conditionally conjugate models. 
In SVI, updates of the global variational parameters are equivalent to performing stochastic natural gradient descent, which replaces the Euclidean gradient with the natural gradient defined by the Fisher information geometry~\cite{amari1998natural, martens2020new}. 
Under standard Robbins--Monro conditions on the stepsize~\cite{robbins1951stochastic}, SVI is guaranteed to converge to a stationary point of the evidence lower bound (ELBO).

Several extensions have been proposed to improve the convergence behavior of stochastic VI, including parameter reweighting~\citep{dhaka2020robust}, alternative divergence measures~\citep{dhaka2021challenges,cai2024batch}, and adaptive or increasing mini-batch sizes~\citep{dinkel2025dynamic}. 
From a theoretical perspective, the best-known convergence rate for SVI-type algorithms is $\mathcal{O}(1/\sqrt{T})$, established for an asynchronous parallel variant of SVI~\cite{mohamad2019asynchronous}. 
A closely related method is stochastic natural gradient variational inference (SNGVI)~\cite{wu2024understanding}, which focuses exclusively on optimizing global variational parameters and does not explicitly introduce local variational variables. 
Despite these advances, both SVI and SNGVI typically rely on diminishing stepsizes to ensure convergence to stationary points. 
While this assumption is standard in stochastic optimization, diminishing stepsizes often lead to slow progress in the later stages of training and make performance sensitive to stepsize schedules, which can be difficult to tune and unstable in large-scale practice.

Another closely related approach is proximal-gradient SVI (PG-SVI)~\cite{khan2015faster}, which generalizes SVI and mirror-descent-based VI updates to handle non-conjugate models. 
By incorporating proximal operators, PG-SVI admits closed-form updates for a broader class of models than vanilla SVI. 
However, its theoretical guarantees establish $\mathcal{O}(1/T)$ convergence only to a neighborhood of the optimum, where the radius of this neighborhood depends explicitly on the variance of the mini-batch gradient estimator.

\textbf{Primal-dual algorithms}
Our proposed \name and \pname leverage primal-dual algorithms to solve finite-sum problems. Former algorithms such as~\citet{zhang2021fedpd, acar2021federated} share similar primal–dual formulations. However, in contrast to~\citet{zhang2021fedpd}, we develop a mini-batch variant that updates the model using only a randomly sampled subset at each iteration, thereby avoiding a full pass over all data points before performing a single global update. Compared with~\citet{acar2021federated}, our formulation explicitly introduces per-sample (or per-client) local variables, leading to a consensus-constrained problem with fundamentally different structure and update dynamics. Another line of work proposes control variates (CV) to reduce gradient variance in SVI~\citep{geffner2018using, geffner2020approximation, wang2024joint}.
Although control variates share conceptual similarities with dual variables, these methods do not explicitly incorporate local latent variables and generally do not provide convergence guarantees.
Our empirical comparisons indicate that \name and \pname achieve more stable convergence than CV-based approaches in large-scale settings.

\textbf{Preconditioned algorithms} 
In light of parameter heterogeneity, \pname employs preconditioning in the design of augmented Lagrangian. Preconditioning is a principled technique for accelerating first-order methods by reshaping the optimization geometry to mitigate ill-conditioning. 
A wide range of optimization algorithms incorporate preconditioning~\citep{davidon1991variable, liu1989limited,zhangadam}. In addition, adaptive gradient methods such as AdaGrad~\cite{duchi2011adaptive}, RMSProp~\cite{tieleman2012lecture}, and Adam~\cite{kingma2014adam} can also be interpreted as preconditioned stochastic gradient methods~\cite{ye2024preconditioning, scott2025designing}. 
A particularly practical and scalable strategy is diagonal preconditioning, which rescales decision variables using inexpensive coordinate-wise metrics. The work~\cite{pock2011diagonal} employs diagonal preconditioning for first-order primal–dual methods in convex saddle-point (min–max) optimization. Our work is motivated by the same principle that appropriate preconditioning can improve numerical stability and convergence speed, but we focus on designing and analyzing preconditioned updates for a mini-batch VI problem. To our best knowledge, \pname is the first algorithm that introduces block-diagonal preconditioning in the augmented Lagrangian construction for mini-batch MFVI. A detailed comparison of VI algorithms is presented in Table~\ref{tab:lmo_methods}.

\section{Methods}
\subsection{Notations}
Throughout the paper, we use $\|\cdot\|$ and $\|\cdot\|_2$ interchangeably to denote the Euclidean (i.e., $\ell_2$) norm and $\|\mathbf{v}\|_{\mathbf{A}}$ to define $\matA$-norm for positive semi-definite $\matA$ as $\|\mathbf{v}\|_{\mathbf{A}} = \langle\mathbf{v}, \mathbf{A}\mathbf{v} \rangle $.  Moreover, for any integer $n \ge 1$, we write $[n] := \{1,2,\ldots,n\}$. We use boldface letters (e.g., $\boldsymbol{x}$ and $\boldsymbol{A}$) to denote vectors and matrices, respectively, and use the corresponding non-bold letters (e.g., $\eta$) to denote scalars. For a distribution $p(\cdot)$, we use subscripts to indicate its parameters; for instance, $p_{\theta}(\cdot)$ denotes a distribution parameterized by $\theta$.
\subsection{Preliminary}
\label{sec:preliminary}
Let $\mathbf{x}_{1:n} := \{\mathbf{x}_1,\ldots,\mathbf{x}_n\}$ denote a collection of observations, where each $\mathbf{x}_i \in \mathbb{R}^{d_x}$ for all $i \in [n]$. We use $\veczeta_{1:n} := \{\veczeta_1,\ldots,\veczeta_n\}$ denote a collection of local latent random variables associated with each observations (e.g., assignment variables in Gaussian misture models), where each $\veczeta_i \in \mathbb{R}^{d_\zeta}$ for all $i \in [n]$. Let $\boldsymbol{\beta} \in \mathbb{R}^{d_\beta}$ denote a set of global latent random variables shared across all observations. For notational simplicity, we use $\mathbf{x}$ and $\veczeta$ to denote $\mathbf{x}_{1:n}$ and $\veczeta_{1:n}$, respectively, when no ambiguity arises. The latent variable $\vecz$ consists of both $\veczeta$ and $\vecbeta$, $\vecz=[\veczeta,\vecbeta]$.

In many hierarchical Bayesian models, the  latent variables $\veczeta_i$, and $\vecbeta$ are drawn from a prior distribution $p(\veczeta_i, \boldsymbol{\beta})$. These latent variables help govern the distribution of the data, specifically, each observation is assumed to follow the conditional distribution $\mathbf{x}_i \sim p(\cdot \mid \veczeta_i, \vecbeta)$. Conditioned on \(\{\veczeta_i\}_{i=1}^n\) and \(\vecbeta\), the observations $\{\mathbf{x}_i\}_{i=1}^n$ are assumed to be conditionally independent, i.e.,
\(
p(\mathbf{x}_{1:n}\mid \veczeta_{1:n},\vecbeta)=\prod_{i=1}^n p(\mathbf{x}_i\mid \veczeta_i,\vecbeta).
\)


We are interested in maximum a posteriori (MAP) estimation, which requires computing the posterior distribution $
p(\veczeta, \boldsymbol{\beta} \mid \mathbf{x})
=\frac{
p(\mathbf{x} \mid \veczeta, \boldsymbol{\beta}) \,
p(\veczeta, \boldsymbol{\beta})}{p(\mathbf{x})}$, 
where the denominator \(p(\mathbf{x})\) is the marginal likelihood (also known as the evidence). Computing this quantity requires marginalizing over all latent variables and is generally intractable for many statistical models~\citep{song2022learning}.

In block MFVI, we consider a variational family $\mathcal{Q}$ that factorizes across blocks of latent variables.
Specifically, we assume that the local latent variables are independent across data points, with each local variational factor $q(\veczeta_i)$ parameterized by a set of variational parameters $\boldsymbol{\phi}_i$.
The global variational factor $q(\boldsymbol{\beta})$ is parameterized by a set of variational parameters $\boldsymbol{\lambda}$. Additional factorization or independence assumptions among the components of $\boldsymbol{\beta}$ can be imposed and analyzed case by case.
Accordingly, the variational distribution takes the form
\[
q_{\boldsymbol{\phi},\boldsymbol{\lambda} }(\veczeta, \boldsymbol{\beta})
=
\left(\prod_{i=1}^n q_{\boldsymbol{\phi}_i}(\veczeta_i)\right)
q_{\boldsymbol{\lambda}}(\boldsymbol{\beta}).
\]
We denote the collection of local variational parameters by
\(
\boldsymbol{\phi}_{1:n} := \{\boldsymbol{\phi}_1,\ldots,\boldsymbol{\phi}_n\},
\)
and use $\boldsymbol{\phi}$ as a shorthand for $\boldsymbol{\phi}_{1:n}$ when the context is clear. 

The idea of MFVI is to identify a distribution within the variational family $\mathcal{Q}$ that best approximates the true posterior distribution.
Accordingly, MFVI can be formulated as the following optimization problem~\citep{blei2017variational}:
\begin{align*}
    \boldsymbol{\phi}^\ast, \boldsymbol{\lambda}^\ast = \argmin_{\boldsymbol{\phi}, \boldsymbol{\lambda}} \mathrm{KL}(q_{\boldsymbol{\phi},\boldsymbol{\lambda} }(\mathbf{z}, \boldsymbol{\beta}) ||p(\mathbf{z}, \boldsymbol{\beta} \mid \mathbf{x})).
\end{align*}
Using the identity
\(
\mathrm{KL}\!\left(
q_{\boldsymbol{\phi},\boldsymbol{\lambda}}(\mathbf{z}, \boldsymbol{\beta})
\;\middle\|\;
p(\mathbf{z}, \boldsymbol{\beta} \mid \mathbf{x})
\right)
=
\mathbb{E}_q\!\left[
\log q_{\boldsymbol{\phi},\boldsymbol{\lambda} }(\mathbf{z}, \boldsymbol{\beta})
-
\log p(\mathbf{z}, \boldsymbol{\beta}, \mathbf{x})
+
\log p(\mathbf{x})\right],
\)
and noting that $\log p(\mathbf{x})$ is independent of the variational parameters, the above problem is equivalent to $ \argmin_{\boldsymbol{\phi}, \boldsymbol{\lambda}}
\frac{1}{n}
\mathbb{E}_q\!\left[
\log q_{\boldsymbol{\phi},\boldsymbol{\lambda} }(\mathbf{z}, \boldsymbol{\beta})
-
\log p(\mathbf{z}, \boldsymbol{\beta}, \mathbf{x})
\right]$. 
By the conditional independence assumption, the joint log-density $\log p(\mathbf z,\boldsymbol{\beta},\mathbf x)$ decomposes across observations. Therefore, the objective admits a separable structure and can be equivalently written as
\begin{align} \label{eq:objective}
\min_{\boldsymbol{\phi},\, \boldsymbol{\lambda}}
\; \frac{1}{n}\sum_{i=1}^n f_i(\boldsymbol{\phi}_i, \boldsymbol{\lambda}),
\end{align}
where $f_i(\boldsymbol{\phi}_i, \boldsymbol{\lambda})
:=
\mathbb{E}_q[\log q_{\boldsymbol{\phi}_i}(\mathbf{z}_i)]
-
\mathbb{E}_q[\log p(\mathbf{x}_i \mid \veczeta_i, \vecbeta)]
-
\mathbb{E}_q[\log p_{\boldsymbol{\theta}}(\mathbf{z}_i)]
+
\frac{1}{n}
\mathbb{E}_q[\log q_{\boldsymbol{\lambda}}(\boldsymbol{\beta}) - \log p_{\boldsymbol{\theta}}(\boldsymbol{\beta})]$.
The derivation of this decomposition is provided in Appendix~\ref{section:decomposition_f}.

\subsection{Algorithms}
Our key observation is that the finite-sum optimization problem~\eqref{eq:objective} can be equivalently reformulated as the following constrained problem:
{
\small
\begin{equation}\label{eq:objective_constraint}
\begin{aligned}
\min_{\boldsymbol{\phi}_{1:n},\, \boldsymbol{\lambda}_{0:n}}
\; & \frac{1}{n}\sum_{i=1}^n f_i(\boldsymbol{\phi}_i, \boldsymbol{\lambda}_i)
\quad\text{s.t.}\; \boldsymbol{\lambda}_i = \boldsymbol{\lambda}_0,\; \forall i \in [n].
\end{aligned}
\end{equation}}
To solve the consensus-constrained problem in \eqref{eq:objective_constraint}, a standard approach is the \emph{methods of multipliers}. 
Specifically, consider the augmented Lagrangian (AL) function
{
\small
\begin{align} 
\nonumber
\mathcal{L}\!\left(\boldsymbol{\phi}_{1:n}, \boldsymbol{\lambda}_{0:n}, \boldsymbol{\mu}_{1:n}\right)
&=
\frac{1}{n}\sum_{i=1}^n
\Bigg[
f_i(\boldsymbol{\phi}_i,\boldsymbol{\lambda}_i)
+
\left\langle \boldsymbol{\mu}_i,\, \boldsymbol{\lambda}_i-\boldsymbol{\lambda}_0 \right\rangle \\
&\qquad
+
\frac{1}{2\eta}\left\|\boldsymbol{\lambda}_i-\boldsymbol{\lambda}_0\right\|_2^2
\Bigg],\label{eq:alm}
\end{align}}
where $\boldsymbol{\mu}_i$ denotes the dual variable associated with the consensus constraint
$\boldsymbol{\lambda}_i=\boldsymbol{\lambda}_0$, and $\eta>0$ is the penalty parameter.
For sufficiently small \(\eta\), minimizing \eqref{eq:alm} yields solutions that satisfy the KKT conditions of \eqref{eq:objective_constraint}
(see, e.g., Sec.~17.3 of \citet{wright1999numerical}).

The AL transforms constrained minimization~\eqref{eq:objective_constraint} into unconstrained minimization~\eqref{eq:alm}. The objective in \eqref{eq:alm} is separable over sample index $i$ except for the global variable $\boldsymbol{\lambda}_0$. This structure naturally motivates a mini-batch primal-dual Algorithm~\ref{alg:meta}.

At iteration $t$, we first randomly sample a subset of indices $S_t$ from $[n]$. The size of $S_t$ could be decided by memory constraints. For each sample in $S_t$, we perform a primal update step and a dual update step sequentially. These primal-dual updates could be carried out in parallel for all samples in $S_t$. The updated sample-wise primal variables $\boldsymbol{\lambda}_i^t$ are then aggregated to update shared $\boldsymbol{\lambda}_0^t$.  For samples not in $S_t$, we simply assume their primal and dual variables inherit their values from the previous iteration. The outline of the mini-batch algorithm is summarized in Algorithm~\ref{alg:meta}, while the primal-dual subroutines are detailed in Algorithm~\ref{alg:oracle_1} and~\ref{alg:oracle_2}.

\begin{algorithm}[tb]
  \caption{Mini-batch primal-dual variational inference}
  \label{alg:meta}
  \begin{algorithmic}
    \STATE {\bfseries Input:} $\boldsymbol{\lambda}_0^0,T$
    \STATE {\bfseries Initialize:} $\boldsymbol{\lambda}_i^0 = \boldsymbol{\lambda}_0^0$, $\boldsymbol{\mu}^0 = 0$, $\boldsymbol{h}^0 = 0$
    \FOR{$t = 1,\dots,T$}
      \STATE $S_t \gets (\text{subset of indices from [n]})$
      \FOR{each $i \in S_t$ {\bfseries in parallel}}
        \STATE {\bfseries Local Updates:}
        \vspace{1mm}
        \STATE
        $\begin{aligned}
            \boldsymbol{\phi}_i^{t},\;
            \boldsymbol{\lambda}_i^{t}, \; 
            \boldsymbol{\mu}_i^t, = \texttt{Oracle I}(f_i(\boldsymbol{\phi}_i, \boldsymbol{\lambda}_i), \boldsymbol{\lambda}_0^{t-1}, \boldsymbol{\mu}_i^{t-1})) 
        \end{aligned}$, or $\texttt{Oracle II}(f_i(\boldsymbol{\phi}_i, \boldsymbol{\lambda}_i), \boldsymbol{\lambda}_0^{t-1}, \boldsymbol{\mu}_i^{t-1}))$
      \ENDFOR
      \FOR{$i \notin S_t$ {\bfseries in parallel}}
        \STATE $\boldsymbol{\phi}_i^t = \boldsymbol{\phi}_i^{t-1}$, $\boldsymbol{\lambda}_i^t = \boldsymbol{\lambda}_i^{t-1}$, $\boldsymbol{\mu}_i^{t} = \boldsymbol{\mu}_i^{t-1}$
      \ENDFOR
      \STATE $\boldsymbol{h}^{t} = \boldsymbol{h}^{t-1} + \frac{1}{ n}\sum_{i\in S_t} \left( \boldsymbol{\lambda}_i^{t}- \boldsymbol{\lambda}_{0}^{t-1}\right)$
      \STATE $\boldsymbol{\lambda}_{0}^{t} = \frac{1}{|S_t|} \sum_{i \in S_t} \boldsymbol{\lambda}_i^{t} + \boldsymbol{h}^{t}$
    \ENDFOR
  \end{algorithmic}
\end{algorithm}

\begin{algorithm}[tb]
  \caption{\texttt{Oracle I}}
  \label{alg:oracle_1}
  \begin{algorithmic}
    \STATE {\bfseries Input:} $f_i(\boldsymbol{\phi}_i, \boldsymbol{\lambda}_i), 
    \boldsymbol{\lambda}_0^{t-1}, \boldsymbol{\mu}_i^{t-1}, \eta$
    \STATE $\boldsymbol{\phi}_i^{t},\;
            \boldsymbol{\lambda}_i^{t} = \argmin_{\boldsymbol{\phi}_i, \boldsymbol{\lambda}_i} f_i(\boldsymbol{\phi}_i, \boldsymbol{\lambda}_i)
+
\left\langle \boldsymbol{\mu}^{t-1}_i,\, \boldsymbol{\lambda}_i - \boldsymbol{\lambda}_0^{t-1} \right\rangle
+
\frac{1}{2\eta}\,\|\boldsymbol{\lambda}_i - \boldsymbol{\lambda}_0^{t-1}\|_2^2$
    \STATE $\boldsymbol{\mu}_i^{t}
            = \boldsymbol{\mu}_i^{t-1}
              + \frac{1}{\eta} \left(
                \boldsymbol{\lambda}_i^{t}
                - \boldsymbol{\lambda}_{0}^{t-1} \right)$
  \end{algorithmic}
\end{algorithm}

\begin{remark}
    The final step of Algorithm~\ref{alg:meta} is not a simple average of the \(\boldsymbol{\lambda}_i\) values in the selected subset, which may appear counterintuitive at first. Recall that under the alternating minimization framework, the final step updates \(\boldsymbol{\lambda}_0\) by minimizing \eqref{eq:alm} with all other variables fixed. 
Applying the first-order optimality condition to \eqref{eq:alm} yields $\boldsymbol{\lambda}_0 = \frac{1}{n}\sum_{i=1}^n (\boldsymbol{\lambda}_i + \eta \boldsymbol{\mu}_i)$, which motivates the last two steps in Algorithm~\ref{alg:meta}.
\end{remark}

The dual update in \texttt{Oracle I} ensures that $\boldsymbol{\mu}_i^t=-\nabla_{\boldsymbol{\lambda}_i}f_i(\boldsymbol{\phi}_i^t,\boldsymbol{\lambda}_i^t)$. This term mitigates the gradient differences across different samples and corrects consensus violations while still allowing meaningful updates of primal variables. The quadratic penalty term $\frac{1}{2\eta}\,\|\boldsymbol{\lambda}_i - \boldsymbol{\lambda}_0^{t-1}\|_2^2$ serves as a proximal regularizer that keeps local updates close to the global variable.

The primal minimization problem in Algorithm~\ref{alg:oracle_1} could be solved by a wide range of standard unconstrained methods can be applied. Based on our empirical results for MFVI, we recommend coordinate descent. 


\subsection{Preconditioned Augmented Lagrangian Algorithm}
As discussed, parameter heterogeneity poses a fundamental challenge in mean-field variational inference.
Both our theoretical analysis (Section~\ref{sec:analysis}) and empirical results (Section~\ref{sec:experiments}) demonstrate that pronounced heterogeneity in loss geometry across variational parameter blocks can lead to poor conditioning and optimization inefficiencies.
Such heterogeneity is intrinsic to MFVI, which typically involves optimizing a large number of parameters with distinct roles, scales, and optimization dynamics.

To address this challenge, we introduce a block-wise preconditioned augmented Lagrangian that adapts to the geometry of different variational parameter blocks. Specifically, we partition global variables $\boldsymbol{\lambda}$ into $B$ disjoint blocks and assign a customized quadratic penalty parameter to each block. The dimension of block $j$ is $d_j$. All coordinates within the same block share the same penalty parameter $\eta_j$, while different blocks may use different $\eta_j$'s. Formally, we define a diagonal matrix $
\mathbf{D}_\eta
:=
\mathrm{blkdiag}\bigl(\frac{1}{\eta_1}\mathbf{I}_{d_1},\ldots,\frac{1}{\eta_B}\mathbf{I}_{d_B}\bigr)$,
and formulate the \textit{block-wise preconditioned augmented Lagrangian} as
{
\small
$$
\mathcal{L}'_i(\boldsymbol{\phi}_i, \boldsymbol{\lambda}_i)
=
f_i(\boldsymbol{\phi}_i, \boldsymbol{\lambda}_i)
+
\left\langle \boldsymbol{\mu}_i,\, \boldsymbol{\lambda}_i - \boldsymbol{\lambda}_0 \right\rangle
+
\frac{1}{2}\|\boldsymbol{\lambda}_i - \boldsymbol{\lambda}_0\|_{\mathbf{D}_\eta}^2.
$$}

Accordingly, we introduce Algorithm~\ref{alg:oracle_2} to implement the corresponding oracle updates.
In practice, we recommend dividing these blocks by the scale of the Lipschitz constant, which can be approximated by the upper bound of second order derivatives. When the Lipschitz constants associated with different variables differ substantially in scale, grouping variables into blocks and applying the proposed diagonal preconditioning is particularly beneficial empirically, as it effectively balances the update magnitudes across blocks.

\begin{algorithm}[tb]
  \caption{\texttt{Oracle II}}
  \label{alg:oracle_2}
  \begin{algorithmic}
    \STATE {\bfseries Input:} $f_i(\boldsymbol{\phi}_i, \boldsymbol{\lambda}_i), 
    \boldsymbol{\lambda}_0^{t-1}, \boldsymbol{\mu}_i^{t-1}, \mathbf{D}_\eta$
    \STATE $\boldsymbol{\phi}_i^{t},\;
            \boldsymbol{\lambda}_i^{t} = \argmin_{\boldsymbol{\phi}_i, \boldsymbol{\lambda}_i} f_i(\boldsymbol{\phi}_i, \boldsymbol{\lambda}_i)
+
\left\langle \boldsymbol{\mu}_i^{t-1},\, \boldsymbol{\lambda}_i - \boldsymbol{\lambda}_0^{t-1} \right\rangle
+
\frac{1}{2}\,\|\boldsymbol{\lambda}_i - \boldsymbol{\lambda}_0^{t-1}\|_{\mathbf{D}_\eta}^2$
    \STATE $\boldsymbol{\mu}_i^{t}
            = \boldsymbol{\mu}_i^{t-1}
              + \mathbf{D}_\eta \left(
                \boldsymbol{\lambda}_i^{t}
                - \boldsymbol{\lambda}_{0}^{t-1} \right)$
  \end{algorithmic}
\end{algorithm}

\section{Theoretical Analysis} \label{sec:analysis}
In this section, we present two main theorems establishing the convergence rates of \name and \pname. We first state the assumptions used in the analysis. The first two assumptions concern Lipschitz continuity. Specifically, Assumption~\ref{ass:lsmooth} applies a single Lipschitz constant to all variables.
\begin{assumption}[Uniform Lipschitz Smoothness] \label{ass:lsmooth}
  Each $\nabla f_i(\cdot)$ is $L$-Lipschitz with $L\in (0,\infty), i.e., $ for all $i\in[n]$ and all
$\boldsymbol{x},\boldsymbol{y}$, $
      \|\nabla f_i(\boldsymbol{x}) - \nabla f_i(\boldsymbol{y})\| \leq L\|\boldsymbol{x}-\boldsymbol{y}\|.$
\end{assumption}
Assumption~\ref{ass:block_smooth_nonconvex} provides a more refined analysis by allowing block-wise Lipschitz constants, with each block of variables admitting its own Lipschitz constant.
\begin{assumption}[Block-wise Lipschitz Smoothness] \label{ass:block_smooth_nonconvex}
For each $i\in[n]$, the function $f_i(\boldsymbol{\phi},\boldsymbol{\lambda})$ is continuously differentiable.
Moreover, there exist constants $L_{\phi}\in(0,\infty)$ and $\{L_{\lambda,j}\}_{j=1}^B \subset (0,\infty)$ such that for all
$(\boldsymbol{\phi}_1,\boldsymbol{\lambda}_1),(\boldsymbol{\phi}_2,\boldsymbol{\lambda}_2)\in\mathbb{R}^{d_\phi}\times\mathbb{R}^{d_\lambda}$, and $j\in [B]$, $\left\|
\nabla_{\boldsymbol{\phi}}f_i(\boldsymbol{\phi}_1,\boldsymbol{\lambda}_1)
-
\nabla_{\boldsymbol{\phi}} f_i(\boldsymbol{\phi}_2,\boldsymbol{\lambda}_2)
\right\| \le
L_{\phi}
\|
(\boldsymbol{\phi}_1,\boldsymbol{\lambda}_1)
-
(\boldsymbol{\phi}_2,\boldsymbol{\lambda}_2)
\|$, and $\left\|
\bigl(\nabla_{\boldsymbol{\lambda}} f_i(\boldsymbol{\phi}_1,\boldsymbol{\lambda}_1)\bigr)_j
-
\bigl(\nabla_{\boldsymbol{\lambda}} f_i(\boldsymbol{\phi}_2,\boldsymbol{\lambda}_2)\bigr)_j
\right\|
\le
L_{\lambda,j}
\left\|
(\boldsymbol{\phi}_1,\boldsymbol{\lambda}_1)
-
(\boldsymbol{\phi}_2,\boldsymbol{\lambda}_2)
\right\|.$
Here, $(\cdot)_j$ denotes the $j$-th block of a vector under the partition of $\boldsymbol{\lambda}$.
\end{assumption}
 The next three assumptions further characterize the geometry of the objective $f_i$.  

\begin{assumption}[Convexity] \label{ass:convex}
    Each $f_i(\cdot)$ is convex, i.e., for all $i\in[n]$ and all
$\boldsymbol{x},\boldsymbol{y}$ in the domain, $f_i(\boldsymbol{y}) \geq f_i(\boldsymbol{x}) + \langle\nabla f_i(\boldsymbol{x}), \,  \boldsymbol{y}-\boldsymbol{x}\rangle .$
\end{assumption}
\begin{assumption}[Strong Convexity]\label{ass:strongly_convex}
Each $f_i(\cdot)$ is $\mu$-strongly convex. That is, for all $i\in[n]$ and all
$\boldsymbol{x},\boldsymbol{y}$ in the domain, $f_i(\boldsymbol{y})
\ge
f_i(\boldsymbol{x})
+
\left\langle \nabla f_i(\boldsymbol{x}),\, \boldsymbol{y}-\boldsymbol{x}\right\rangle
+
\frac{\mu}{2}\|\boldsymbol{y}-\boldsymbol{x}\|^2.$
\end{assumption}
\begin{assumption}[Strong Convexity in $\boldsymbol{\phi}$] \label{ass:phi_strong_convex_nonconvex}
For each $i\in[n]$, the function $f_i(\boldsymbol{\phi},\boldsymbol{\lambda})$ is twice continuously differentiable
and there exists
a constant $\mu\in(0,\infty)$ such that for all $i\in[n]$ and all $(\boldsymbol{\phi},\boldsymbol{\lambda})\in\mathbb{R}^{d_\phi}\times\mathbb{R}^{d_\lambda}$, $
\nabla^2_{\boldsymbol{\phi}\boldsymbol{\phi}} f_i(\boldsymbol{\phi},\boldsymbol{\lambda}) \succeq \mu \mathbf{I}_{d_\phi}.$
\end{assumption}

Assumption~\ref{ass:phi_strong_convex_nonconvex} is imposed to facilitate the nonconvex convergence analysis of Algorithm~\ref{alg:oracle_2}. 
In our numerical experiments, which focus on Gaussian mixture models with spatial correlation, this assumption is naturally satisfied. 
Specifically, for the Gaussian mixture model, the Hessian of the objective with respect to $\boldsymbol{\phi}$ takes the form $\nabla_{\boldsymbol{\phi}}^{2} = 1/\boldsymbol{\phi}$ (elementwise). 
Since $\boldsymbol{\phi}$ represents a probability vector, each component satisfies $\phi_k \le 1$, implying $1/\phi_k \ge 1$. 
Therefore, the Hessian is uniformly lower bounded by $1$, and the Assumption \ref{ass:phi_strong_convex_nonconvex} holds automatically.

To help theoretical analysis, we introduce the \emph{virtual variables}
$\{(\tilde{\boldsymbol{\phi}}_i^{t}, \tilde{\boldsymbol{\lambda}}_i^{t})\}$,
defined as the exact minimizers of the following local augmented subproblems: $
(\tilde{\boldsymbol{\phi}}_i^{t},\tilde{\boldsymbol{\lambda}}_i^{t})
=
\argmin_{\boldsymbol{\phi}_i,\boldsymbol{\lambda}_i}
\mathcal{L}_i(\boldsymbol{\phi}_i^{t-1}, \boldsymbol{\lambda}_i^{t-1}), \, \forall \, i\in [n].$ 
That is, $(\tilde{\boldsymbol{\phi}}_i^{t},\tilde{\boldsymbol{\lambda}}_i^{t})$ denotes a virtual full-update iterate at iteration $t$: it is obtained by hypothetically updating all local variables at iteration $t$.
These virtual iterates are introduced purely for theoretical analysis.

With these assumptions, we can state our worst case convergence rates, first for Algorithm \ref{alg:meta} and \ref{alg:oracle_1}, and then for Algorithm \ref{alg:meta} and \ref{alg:oracle_2}. The exact constants for the rates can be found in the proofs in Appendix \ref{app:proofs_section4}. Our rates are non-asymptotic and use big $O$ notation for brevity.
\begin{theorem}\label{thm:convergence_ours}
Assume that at each iteration $t$, the mini-batch $S_t$ is sampled
uniformly at random with fixed size $|S_t|=m$. $\boldsymbol{\phi}$ and $\boldsymbol{\lambda}$ represents local and global variables, respectively. Algorithm \ref{alg:meta} and \ref{alg:oracle_1} satisfies:\\
(i) (Strongly convex). Under Assumptions~\ref{ass:lsmooth} and~\ref{ass:strongly_convex}, for any $\eta = O(\frac{1}{L})$, we have $
\mathbb{E}\!\left[
f(\hat{\boldsymbol{\phi}}^{T}, \hat{\boldsymbol{\gamma}}^{T-1})
-
f^\ast
\right]
=
O\!\left(\frac{1}{r^{T-1}}\right).$

(ii) (Convex). Under Assumptions ~\ref{ass:lsmooth} and~\ref{ass:convex}, for any $\eta = O\!\left(
\frac{1}{L}
\right)$, we have $
\mathbb{E}\left[
f(\bar{\boldsymbol{\phi}}^{T},\bar{\boldsymbol{\gamma}}^{T-1})
-
f^\ast
\right]
=
O\left( \frac{1}{T}\right).$

(iii) (Nonconvex). Let $\kappa:=\frac{L^2}{\mu^2}$. Under Assumptions \ref{ass:lsmooth} and \ref{ass:phi_strong_convex_nonconvex}, for any \( 
\eta = O\left(\frac{\omega}{L(1+\kappa)}\right)\), we have $
\mathbb{E}\left[\frac{1}{T}\sum_{t=1}^T
\Bigl\|
\nabla_{\boldsymbol{\lambda}} f(\boldsymbol{\phi}^{t-1},\boldsymbol{\gamma}^{t-1})
\Bigr\|^2\right]
\;=\;
O\!\left(\frac{1}{T}\right).$

Here $\omega = \frac{m}{n}$, $r = 1 + \mu \eta$, $R = \sum_{t=1}^{T} r^{t-1}$, $\hat{\boldsymbol{\phi}}^{T}
:= \frac{1}{R}\sum_{t=1}^T r^{t-1}\tilde{\boldsymbol{\phi}}^{t}$, $\hat{\boldsymbol{\gamma}}^{T-1}
:= \frac{1}{R}\sum_{t=1}^T r^{t-1}\boldsymbol{\gamma}^{t-1}$, $\boldsymbol{\gamma}^t = \frac{1}{m}\sum_{i \in S_t} \boldsymbol{\lambda}_i^t$, 
\(
\bar{\boldsymbol{\phi}}^{T}
=
\frac{1}{T}\sum_{t=1}^T\tilde{\boldsymbol{\phi}}^{t},
\) \(
\bar{\boldsymbol{\gamma}}^{T-1}
=
\frac{1}{T}\sum_{t=1}^T\boldsymbol{\gamma}^{t-1}
\), 
and $f^* = f(\boldsymbol{\phi}^\ast, \boldsymbol{\lambda}^\ast)$ for convex and strongly convex $f$, where $(\boldsymbol{\phi}^\ast, \boldsymbol{\lambda}^\ast)$ represents the global optimal solution. The expectation is taken over the randomness of mini-batch sampling.
\end{theorem}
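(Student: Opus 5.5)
The plan is to rewrite Algorithm~\ref{alg:meta} with \texttt{Oracle I} as a randomized proximal-point / gradient-type method on the global variable, and then run a Lyapunov argument in expectation over $S_t$. The first step uses the optimality condition of \texttt{Oracle I} together with the dual update. This gives $\nabla_{\boldsymbol{\phi}_i} f_i(\boldsymbol{\phi}_i^t,\boldsymbol{\lambda}_i^t)=0$ and $\boldsymbol{\mu}_i^t=-\nabla_{\boldsymbol{\lambda}_i} f_i(\boldsymbol{\phi}_i^t,\boldsymbol{\lambda}_i^t)$ for every $i\in S_t$. By induction, using the $\boldsymbol{h}^t$ recursion and $\boldsymbol{\mu}^0=0$, I would show the invariant
\[
\boldsymbol{\lambda}_0^t=\tfrac1n\sum_{i=1}^n(\boldsymbol{\lambda}_i^t+\eta\boldsymbol{\mu}_i^t)\ \ \text{(up to the minibatch-average correction encoded by } \boldsymbol{\gamma}^t\text{)},
\]
so that $\boldsymbol{\lambda}_i^t=\boldsymbol{\lambda}_0^{t-1}-\eta(\nabla_{\boldsymbol{\lambda}}f_i(\cdot^t)+\boldsymbol{\mu}_i^{t-1})$. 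This is an implicit, SAGA/Fed-PD-like step in which the stored duals $\boldsymbol{\mu}_i^{t-1}=-\nabla_{\boldsymbol{\lambda}}f_i$ at stale points act as control variates. The virtual iterates $(\tilde{\boldsymbol{\phi}}_i^t,\tilde{\boldsymbol{\lambda}}_i^t)$ are defined for all $i$. Since $S_t$ is uniform with $|S_t|=m$, we have $\mathbb{E}_{S_t}[\frac1m\sum_{i\in S_t}g(\tilde{\cdot}_i^t)]=\frac1n\sum_i g(\tilde{\cdot}_i^t)$, so every conditional expectation can be expressed through the virtual iterates.

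For (i) and (ii), I would use the potential
\[
\Phi^t=\|\boldsymbol{\lambda}_0^t-\boldsymbol{\lambda}^\ast\|^2+\tfrac{c\eta^2}{n}\sum_i\|\boldsymbol{\mu}_i^t+\nabla_{\boldsymbol{\lambda}} f_i(\boldsymbol{\phi}^\ast,\boldsymbol{\lambda}^\ast)\|^2 .
\]
The one-step inequality comes from applying (strong) convexity of $f_i$ at the virtual minimizer $(\tilde{\boldsymbol{\phi}}_i^t,\tilde{\boldsymbol{\lambda}}_i^t)$ against $(\boldsymbol{\phi}_i^\ast,\boldsymbol{\lambda}^\ast)$, where the subproblem's first-order condition supplies the gradient. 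The three-point identity $2\langle a-b,a-c\rangle=\|a-b\|^2+\|a-c\|^2-\|b-c\|^2$ then gives
\[
\mathbb{E}[\Phi^t]\le \tfrac{1}{r}\Phi^{t-1}-c'\eta\,\mathbb{E}\Bigl[\tfrac1n\sum_i f_i(\tilde{\boldsymbol{\phi}}_i^t,\boldsymbol{\gamma}^{t-1})-f^\ast\Bigr],
\]
with $r=1+\mu\eta$ in (i) and $r=1$ in (ii). The switch from $\tilde{\boldsymbol{\lambda}}_i^t$ to $\boldsymbol{\gamma}^{t-1}$ is paid for by $L$-smoothness, and the resulting $\eta^2L^2$ error terms are absorbed by the dual-potential term once $\eta=O(1/L)$. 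In (i) I would multiply by $r^{t-1}$, telescope, divide by $R$ and apply Jensen to $\hat{\boldsymbol{\phi}}^T,\hat{\boldsymbol{\gamma}}^{T-1}$; since $R\gtrsim r^{T-1}$, this gives $O(r^{-(T-1)})$. In (ii) the uniform average gives $\Phi^0/(c'\eta T)=O(1/T)$.

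For (iii), I would use the AL value $\mathcal{L}$ as the Lyapunov function plus a multiple of $\frac1n\sum_i\|\boldsymbol{\lambda}_i^t-\boldsymbol{\lambda}_0^{t}\|^2$. The primal minimization decreases $\mathcal{L}$, the $\boldsymbol{\lambda}_0$ update decreases it further by $\frac{1}{2\eta}$ times a squared step, and the dual ascent increases it by $\eta\|\boldsymbol{\mu}_i^t-\boldsymbol{\mu}_i^{t-1}\|^2$. I expect this to be the main obstacle. Bounding the dual increase requires $\|\boldsymbol{\mu}_i^t-\boldsymbol{\mu}_i^{t-1}\|\le L\|(\boldsymbol{\phi}_i^t,\boldsymbol{\lambda}_i^t)-(\boldsymbol{\phi}_i^{t-1},\boldsymbol{\lambda}_i^{t-1})\|$, but $f_i$ is not strongly convex in $\boldsymbol{\lambda}$, so the $\boldsymbol{\phi}$-movement must be controlled through Assumption~\ref{ass:phi_strong_convex_nonconvex}. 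I would use the error bound that comes from $\nabla_{\boldsymbol{\phi}}f_i=0$ at both iterates and $\mu$-strong monotonicity in $\boldsymbol{\phi}$, namely $\|\Delta\boldsymbol{\phi}_i\|\le (L/\mu)\|\Delta\boldsymbol{\lambda}_i\|$. This bound produces the factor $1+\kappa$.

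The stale indices, which are updated only with probability $\omega$, make the dual increase involve changes accumulated over about $1/\omega$ iterations. This is why the proof needs $\eta=O(\omega/(L(1+\kappa)))$ to keep the net descent $-c\eta\,\mathbb{E}\|\nabla_{\boldsymbol{\lambda}}f(\boldsymbol{\phi}^{t-1},\boldsymbol{\gamma}^{t-1})\|^2$ negative. That gradient is related to $\frac1\eta\|\boldsymbol{\lambda}_0^t-\boldsymbol{\lambda}_0^{t-1}\|$ plus consensus errors. Summing over $t$, using that $\mathcal{L}$ is bounded below (assumed, with $f$ bounded below), and dividing by $T$ gives the $O(1/T)$ rate.
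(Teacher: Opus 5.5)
\textbf{Parts (i)--(ii).} Your dual term is fine. After each index's first refresh it coincides with the paper's $G_t$, because $\boldsymbol{\mu}_i^t=-\nabla_{\boldsymbol{\lambda}} f_i(\boldsymbol{\phi}_i^t,\boldsymbol{\lambda}_i^t)$ and $\nabla_{\boldsymbol{\phi}} f_i$ vanishes at both points. Your skeleton (convexity at the virtual minimizer, $L$-smoothness to pass from $\tilde{\boldsymbol{\lambda}}_i^t$ to $\boldsymbol{\gamma}^{t-1}$, absorbing the $\epsilon_t$-errors for $\eta=O(1/L)$, weighted telescoping, Jensen) is also the paper's.

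The gap is the primal term, which should be $\|\boldsymbol{\gamma}^t-\boldsymbol{\lambda}^\ast\|^2$ rather than $\|\boldsymbol{\lambda}_0^t-\boldsymbol{\lambda}^\ast\|^2$. The exact invariant is $\boldsymbol{\lambda}_0^t=\boldsymbol{\gamma}^t+\boldsymbol{h}^t$ with $\boldsymbol{h}^t=\frac{\eta}{n}\sum_{i=1}^n\boldsymbol{\mu}_i^t$. The identity driving the paper's proof is $\mathbb{E}[\boldsymbol{\gamma}^t-\boldsymbol{\gamma}^{t-1}\mid\mathcal{F}_{t-1}]=-\frac{\eta}{n}\sum_i\nabla_{\boldsymbol{\lambda}} f_i(\tilde{\boldsymbol{\phi}}_i^t,\tilde{\boldsymbol{\lambda}}_i^t)$, and it fails for $\boldsymbol{\lambda}_0$:
\begin{itemize}
\item Since $\boldsymbol{h}$ accumulates a $\frac1n$-scaled minibatch sum, the conditional drift of $\boldsymbol{\lambda}_0^t$ carries an extra term $\frac{\omega}{n}\sum_i(\tilde{\boldsymbol{\lambda}}_i^t-\boldsymbol{\lambda}_0^{t-1})$.
\item A three-point identity centred at $\boldsymbol{\lambda}_0^{t-1}$ leaves the sign-indefinite cross term $\langle\boldsymbol{h}^{t-1},\boldsymbol{\lambda}_0^{t-1}-\boldsymbol{\lambda}^\ast\rangle$. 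This is precisely the term that completes the square into $\|\boldsymbol{\gamma}^{t-1}-\boldsymbol{\lambda}^\ast\|^2$.
\end{itemize}
Your sketch never controls this cross term, so your displayed one-step inequality (with $\boldsymbol{\gamma}^{t-1}$ in the gap) does not follow for a potential in $\boldsymbol{\lambda}_0$. Written in $\boldsymbol{\gamma}$, it does; for $m=1$, $\boldsymbol{\gamma}^t$ is exactly the Point-SAGA iterate.

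The strongly convex step has the same issue. The factor $r=1+\mu\eta$ comes from $\mathbb{E}\|\boldsymbol{\gamma}^t-\boldsymbol{\lambda}^\ast\|^2\le\frac1n\sum_i\mathbb{E}\|\tilde{\boldsymbol{\lambda}}_i^t-\boldsymbol{\lambda}^\ast\|^2$ (Jensen over $S_t$), which has no analogue for $\boldsymbol{\lambda}_0^t$. Also, for the dual part to contract at rate $1/r$ you need $\eta\lesssim\omega/(\mu(1-\omega))$, which is the paper's extra condition $\eta\le\frac{m}{8\mu(n-m)}$.

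\textbf{Part (iii).} Your AL-descent route fails at a concrete step. For $m<n$ the $\boldsymbol{\lambda}_0$-update is \emph{not} the minimizer of $\mathcal{L}$ in $\boldsymbol{\lambda}_0$. That minimizer is $\frac1n\sum_i\boldsymbol{\lambda}_i^t+\boldsymbol{h}^t$ (stale copies included), whereas the algorithm sets $\boldsymbol{\lambda}_0^t=\boldsymbol{\gamma}^t+\boldsymbol{h}^t$. Since $\mathcal{L}$ is quadratic in $\boldsymbol{\lambda}_0$ with Hessian $\frac1\eta\mathbf{I}$, this step can \emph{increase} $\mathcal{L}$ by up to $\frac{1}{2\eta}\|\boldsymbol{\gamma}^t-\frac1n\sum_i\boldsymbol{\lambda}_i^t\|^2$. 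That error has a $1/\eta$ prefactor your accounting omits, and you do not get the claimed decrease $\frac{1}{2\eta}\|\boldsymbol{\lambda}_0^t-\boldsymbol{\lambda}_0^{t-1}\|^2$. In addition, the stale-dual accumulation you identify as the main obstacle is only explained, never bounded.

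The paper avoids both issues by dropping the AL and using the Lyapunov function $\mathbb{E}f(\boldsymbol{\phi}^t,\boldsymbol{\gamma}^t)+c_1G_t$, with $G_t=\frac1n\sum_i\mathbb{E}\|\boldsymbol{\lambda}_i^t-\boldsymbol{\gamma}^t\|^2$:
\begin{itemize}
\item The $\boldsymbol{\gamma}$-identity makes the $\boldsymbol{\lambda}$-step an inexact gradient step. Its mean-square error against $\nabla_{\boldsymbol{\lambda}}f(\boldsymbol{\phi}^{t-1},\boldsymbol{\gamma}^{t-1})$ is at most $L^2(\delta_t+\epsilon_t)$.
\item Your own sensitivity bound $\|\Delta\boldsymbol{\phi}_i\|\le(L/\mu)\|\Delta\boldsymbol{\lambda}_i\|$ gives $\delta_t\le2\kappa(G_{t-1}+\epsilon_t)$, which is the source of $1+\kappa$.
\item Staleness is handled by the recursion $G_t\le\frac{2(n-m)}{2n-m}G_{t-1}+\bigl(\frac{2m}{2n-m}+\frac{2n}{m}\bigr)\epsilon_t$. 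It contracts because each index is refreshed with probability $\omega$.
\end{itemize}
Keeping the $\epsilon_t$-feedback below that contraction gap is where the factor $\omega$ in $\eta=O(\omega/(L(1+\kappa)))$ comes from.
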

Theorem~\ref{thm:convergence_ours} establishes convergence guarantees for three classes of objective functions. In the strongly convex setting, the proposed method achieves a linear convergence rate to the global optimum. In the convex case, a sublinear $O\left(1/T\right)$ convergence rate is obtained. For nonconvex objectives, the algorithm converges sublinearly to a stationary point. 

The proof of Theorem~\ref{thm:convergence_ours} is technically nontrivial, as Algorithm~\ref{alg:meta} updates global and local variational parameters simultaneously in each iteration.
This simultaneous updating inevitably introduces additional error terms, since updates to local variables modify the objective function $f_i$.
The central technical challenge is therefore to control these induced local errors and ensure that they do not obstruct overall convergence.
Our main technical contribution is the development of an error bound for the local variables, enabled by the $\boldsymbol{\phi}$-strongly convex assumption ~\ref{ass:phi_strong_convex_nonconvex}. The full proof is relegated to Appendix~\ref{app:proofs_section4}. 

 For \pname, Theorem~\ref{thm:convergence_preconditioned} below establishes a sublinear convergence rate in the general setting, without requiring any convexity assumption on the global variables $\boldsymbol{\lambda}$.
\begin{theorem} \label{thm:convergence_preconditioned}
Assume that at each iteration $t$, the mini-batch $S_t$ is sampled
uniformly at random with fixed size $|S_t|=m$. Under assumptions~\ref{ass:block_smooth_nonconvex} and~\ref{ass:phi_strong_convex_nonconvex}, for any positive block-wise step size $\{\eta_j\}_{j=1}^B$ satisfying $\sum_{j=1}^B \eta_j^2 L_j^2 \le C$, where $C$ is a constant specified in~\eqref{eq:eta_conditions_all}, Algorithm \ref{alg:meta} and \ref{alg:oracle_2} satisfies:
{
\small
\begin{equation}
\label{eqn:pnameconverge}
\begin{aligned}
\sum_{j=1}^B \eta_j \mathbb{E}\!\left[\frac{1}{T}\sum_{t=1}^T
\Bigl\|
(\nabla_{\boldsymbol{\lambda}} f(\boldsymbol{\phi}^{t-1},\boldsymbol{\gamma}^{t-1}))_j
\Bigr\|^2\right]
\;=\;
O\!\left(
\frac{1}{T}\right).
\end{aligned}
\end{equation}}
Here, $(\cdot)_j$ denotes the $j$-th block of a vector under the partition of $\boldsymbol{\lambda}$.
\end{theorem}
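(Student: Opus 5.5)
The argument is a Lyapunov analysis of the preconditioned augmented Lagrangian, written in the metric $\mathbf{D}_\eta$ so that each block of $\boldsymbol{\lambda}$ carries its own weight. It mirrors the nonconvex part (iii) of Theorem~\ref{thm:convergence_ours}.

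\textbf{Step 1: optimality conditions and the average variable.} The first-order conditions of \texttt{Oracle II} give, for $i\in S_t$,
\[
\nabla_{\boldsymbol{\phi}} f_i(\boldsymbol{\phi}_i^t,\boldsymbol{\lambda}_i^t)=0,
\qquad
\boldsymbol{\mu}_i^t=-\nabla_{\boldsymbol{\lambda}} f_i(\boldsymbol{\phi}_i^t,\boldsymbol{\lambda}_i^t).
\]
Since $\boldsymbol{h}^0=0$ and $\boldsymbol{\mu}^0=0$, an induction on the $\boldsymbol{h}$-recursion should show that the aggregation step preserves the relation $\boldsymbol{\lambda}_0^t=\frac1n\sum_i(\boldsymbol{\lambda}_i^t+\mathbf{D}_\eta^{-1}\boldsymbol{\mu}_i^t)$ in expectation. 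This is the preconditioned analogue of the remark following Algorithm~\ref{alg:oracle_1}. From it I would derive a recursion for $\boldsymbol{\gamma}^t$ that looks like a preconditioned gradient step,
\[
\boldsymbol{\gamma}^{t}\approx\boldsymbol{\gamma}^{t-1}-\mathbf{D}_\eta^{-1}\cdot\frac1n\sum_i\nabla_{\boldsymbol{\lambda}} f_i(\cdot)+\text{errors}.
\]
Here $\mathbf{D}_\eta^{-1}=\mathrm{blkdiag}(\eta_j\mathbf{I})$, so block $j$ moves with step $\eta_j$. This is exactly why the final bound carries the weights $\eta_j$.

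\textbf{Step 2: descent lemma and error terms.} Using Assumption~\ref{ass:block_smooth_nonconvex} blockwise, I will show
\[
\mathbb{E} f(\boldsymbol{\phi}^t,\boldsymbol{\gamma}^t)\le f(\boldsymbol{\phi}^{t-1},\boldsymbol{\gamma}^{t-1})-\tfrac12\sum_j\eta_j\|(\nabla_{\boldsymbol{\lambda}}f)_j\|^2+E_t.
\]
The error $E_t$ collects three pieces:
\begin{itemize}
\item the drift $\sum_j \eta_j L_j^2\|\boldsymbol{\lambda}_i-\boldsymbol{\lambda}_0\|^2$ between local copies and the consensus variable;
\item the mismatch between stale dual variables $\boldsymbol{\mu}_i^{t-1}$ for $i\notin S_t$ and current gradients;
\item the local-variable error from $\boldsymbol{\phi}_i^t$ being evaluated at a stale $\boldsymbol{\lambda}$.
\end{itemize}
The virtual iterates $(\tilde{\boldsymbol{\phi}}_i^t,\tilde{\boldsymbol{\lambda}}_i^t)$ convert the random mini-batch updates into expectations, since each index is refreshed with probability $\omega=m/n$.

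\textbf{Step 3: the local error bound.} This is the key technical ingredient. By Assumption~\ref{ass:phi_strong_convex_nonconvex} and the implicit function theorem, the map $\boldsymbol{\lambda}\mapsto\boldsymbol{\phi}_i^\star(\boldsymbol{\lambda})=\argmin_{\boldsymbol{\phi}}f_i(\boldsymbol{\phi},\boldsymbol{\lambda})$ is $(L_\phi/\mu)$-Lipschitz. Hence
\[
\|\boldsymbol{\phi}_i^t-\boldsymbol{\phi}_i^{t'}\|\le \tfrac{L_\phi}{\mu}\|\boldsymbol{\lambda}_i^t-\boldsymbol{\lambda}_i^{t'}\|,
\]
and each $\boldsymbol{\lambda}$-gradient error is bounded by a $\kappa$-dependent multiple of $\boldsymbol{\lambda}$-movement.

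\textbf{Step 4: the potential function.} I would use
\[
\Phi_t=f(\boldsymbol{\phi}^t,\boldsymbol{\gamma}^t)+\frac{c}{n}\sum_i\|\boldsymbol{\lambda}_i^t-\boldsymbol{\lambda}_0^t\|_{\mathbf{D}_\eta}^2+\frac{c'}{n}\sum_i\|\boldsymbol{\mu}_i^t+\nabla_{\boldsymbol{\lambda}}f_i(\cdot)\|_{\mathbf{D}_\eta^{-1}}^2.
\]
Each auxiliary term contracts by a factor of about $(1-\omega)$ per step in expectation, while being fed by terms of order $\sum_j\eta_j^2L_j^2$ times the gradient and drift. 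Choosing $c,c'$ of order $1/\omega$ and imposing $\sum_j\eta_j^2L_j^2\le C$, with $C$ of order $\omega^2/(1+\kappa)$, will absorb all the feed-in terms into the negative term. This gives
\[
\mathbb{E}\Phi_t\le\Phi_{t-1}-\tfrac14\sum_j\eta_j\mathbb{E}\|(\nabla_{\boldsymbol{\lambda}} f)_j\|^2 .
\]
Telescoping from $t=1$ to $T$ and using that $f$ is bounded below (and that the auxiliary terms vanish at initialization) yields \eqref{eqn:pnameconverge} with constant $\Phi_0-f^\ast$.

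\textbf{Main obstacle.} The hard part is the coupling between the stale duals of unsampled indices and the blockwise metric. The dual-tracking error must be measured in $\mathbf{D}_\eta^{-1}$, the drift in $\mathbf{D}_\eta$, and their cross terms must close using only the aggregated condition $\sum_j\eta_j^2L_j^2\le C$, not per-block conditions. To control these cross terms I would first try Young's inequality with block-specific weights $\eta_j$, relying on the fact that the Lipschitz bounds in Assumption~\ref{ass:block_smooth_nonconvex} are with respect to the full variable.
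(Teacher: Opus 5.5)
Your architecture is the paper's. It has the preconditioned analogue of Lemma~\ref{lemma:gamma_expectation}, a descent inequality with $-\tfrac12\|\nabla_{\boldsymbol{\lambda}} f\|_{\mathbf{H}}^2$ where $\mathbf{H}=\mathbf{D}_\eta^{-1}$, the $L_\phi/\mu$ sensitivity bound for $\boldsymbol{\phi}_i^\star$, and a telescoped potential of the form ``objective plus a multiple of consensus error'' (Lemma~\ref{lemma:main_nonconvex_block}). The gap is Step~4, which is asserted rather than derived, and its bookkeeping is wrong.

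The algorithm maintains exactly $\boldsymbol{\lambda}_0^t=\boldsymbol{\gamma}^t+\boldsymbol{h}^t$ with $\boldsymbol{h}^t=\frac1n\sum_{i=1}^n\mathbf{H}\boldsymbol{\mu}_i^t\approx-\mathbf{H}\nabla_{\boldsymbol{\lambda}} f$. Your identity with $\frac1n\sum_i\boldsymbol{\lambda}_i^t$ in place of $\boldsymbol{\gamma}^t$ is false, even in expectation, when $m<n$; the mean recursion for $\boldsymbol{\gamma}^t$ follows from the exact identity instead. Because of this identity, the gradient itself feeds the consensus error. The difference $\tilde{\boldsymbol{\lambda}}_i^t-\boldsymbol{\gamma}^{t-1}$ contains $-\boldsymbol{h}^{t-1}$, so the bound on $(\epsilon_t)_j$ carries $4\eta_j^2\,\mathbb{E}\|(\nabla_{\boldsymbol{\lambda}} f)_j\|^2$. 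The weight is $\eta_j^2$, not $\eta_j^2L_j^2$ as you assume.

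For your $\mathbf{D}_\eta$-weighted drift around $\boldsymbol{\lambda}_0^t$ it is worse. Even a freshly refreshed copy has $\boldsymbol{\lambda}_i^t-\boldsymbol{\lambda}_0^t\approx-\boldsymbol{h}^t$, and stale copies fall further behind as $\boldsymbol{\lambda}_0$ advances. So the drift recursion contains $\|\boldsymbol{h}^t\|_{\mathbf{D}_\eta}^2\approx\|\nabla_{\boldsymbol{\lambda}} f\|_{\mathbf{H}}^2$ with no small prefactor, and it does not contract like a consensus error. With $c$ of order $1/\omega$, no restriction on the size of the $\eta_j$ lets this be absorbed into $-\tfrac14\|\nabla_{\boldsymbol{\lambda}} f\|_{\mathbf{H}}^2$, because both sides scale identically. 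Rescaling $c$ in proportion to $\eta_{\max}$ repairs this but brings back an $\eta_{\max}$ condition.

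The paper measures consensus in the Euclidean norm around $\boldsymbol{\gamma}^t$, $G_t=\frac1n\sum_i\mathbb{E}\|\boldsymbol{\lambda}_i^t-\boldsymbol{\gamma}^t\|^2$, and bounds $\sum_j\eta_j^2\|(\nabla_{\boldsymbol{\lambda}} f)_j\|^2\le\eta_{\max}\|\nabla_{\boldsymbol{\lambda}} f\|_{\mathbf{H}}^2$. It then imposes the third condition in \eqref{eq:eta_conditions_all}, $8(E+c_1\xi)\eta_{\max}<D$. Here $E\ge L/2$ involves the global $L=\sqrt{L_\phi^2+\sum_jL_j^2}$, which enters through $\frac L2\mathbb{E}\|\boldsymbol{\gamma}^t-\boldsymbol{\gamma}^{t-1}\|^2\le\frac L2\epsilon_t$. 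That is a condition on $\eta_{\max}$, not of the form $\sum_j\eta_j^2L_j^2\le C$. The paper does not close the argument under the aggregate condition alone, and neither does your sketch.

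Your proposed fix, Young's inequality with block weights plus the full-variable Lipschitz bounds, produces $\eta_{\max}\sum_j\eta_jL_j^2$; this is the $S\eta_{\max}$ inside the paper's $E\eta_{\max}$. It is unbounded under $\sum_j\eta_j^2L_j^2\le C$ when $\eta_j\propto1/L_j$ and the $L_j$ are heterogeneous. Removing it would need a genuinely new ingredient, for instance exploiting symmetry of the Hessian so that the $(j,k)$ block of $\nabla^2_{\boldsymbol{\lambda}\boldsymbol{\lambda}} f$ has norm at most $\min(L_j,L_k)$. Neither you nor the paper supplies this. So either add an $\eta_{\max}$-type step-size condition, as the paper effectively does, or provide that argument.

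A minor point: your dual-mismatch term is not zero at $t=0$, since $\boldsymbol{\mu}^0=0$. It only enters the constant. Tracking it is in fact more careful than the paper, which implicitly takes $\boldsymbol{\mu}_i^0=-\nabla_{\boldsymbol{\lambda}} f_i(\boldsymbol{\phi}_i^0,\boldsymbol{\lambda}_i^0)$.
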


 Our theoretical analysis shows that the block-wise step size must satisfy that
\(
\sum_{j=1}^B \eta_j^2 L_j^2 
\)
is upper bounded by a constant independent of the step size $\{\eta_j\}_{j=1}^B$ and the Lipschitz constants $\{L_j\}_{j=1}^B$. Here $L_j$ denotes the Lipschitz constant associated with the $j$-th block of the global variables. This condition naturally motivates a block-wise step size choice of the form $\eta_j \propto 1/L_j$, which balances the effective scales across blocks and avoids forcing all step size to be uniformly small due to the worst-conditioned block. Moreover, \eqref{eqn:pnameconverge} suggests that, within the admissible range, employing larger $\eta_j$'s yields a faster decrease of $\|\nabla_{\boldsymbol{\lambda}}f\|$. A formal statement and the full proof of Theorem~\ref{thm:convergence_preconditioned} is also relegated to Appendix~\ref{app:proofs_section4}.

\begin{wrapfigure}{r}{0.3\textwidth}
\vspace{-0.5cm}
  \begin{center}
    \centerline{\includegraphics[width=0.6\columnwidth]{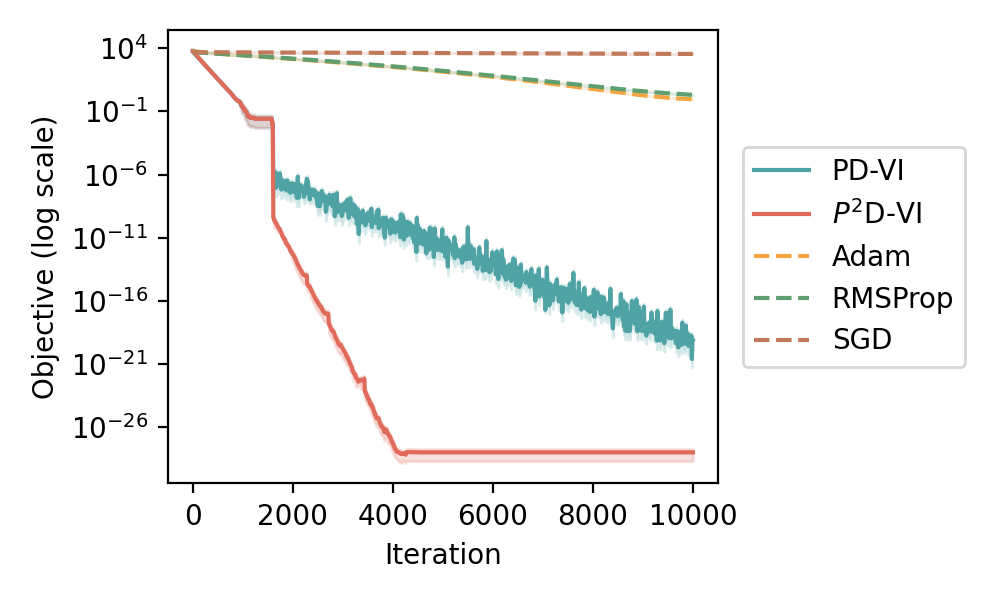}}
    \vspace{-0.5cm}
    \caption{Loss curves on a strongly convex quadratic problem.}
    \label{fig:quadratic}
  \end{center}
  \vspace{-0.7cm}
\end{wrapfigure}

We further corroborate our theoretical analysis on a strongly convex quadratic optimization problem, which serves as a controlled benchmark for illustrating the predicted convergence behavior of the proposed algorithms. The full experiment setup is relegated to Appendix \ref{app:exp_quadratic} while the loss curve is plotted in Figure~\ref{fig:quadratic}. The results highlight two key observations: (i) both \name and \pname exhibit linear convergence in the strongly convex problem; and (ii) preconditioning substantially accelerates convergence.

\section{Experiments} \label{sec:experiments}
\begin{wrapfigure}{r}{0.3\textwidth}
\vspace{-0.7cm}
  \centering
\includegraphics[width=0.6\columnwidth]{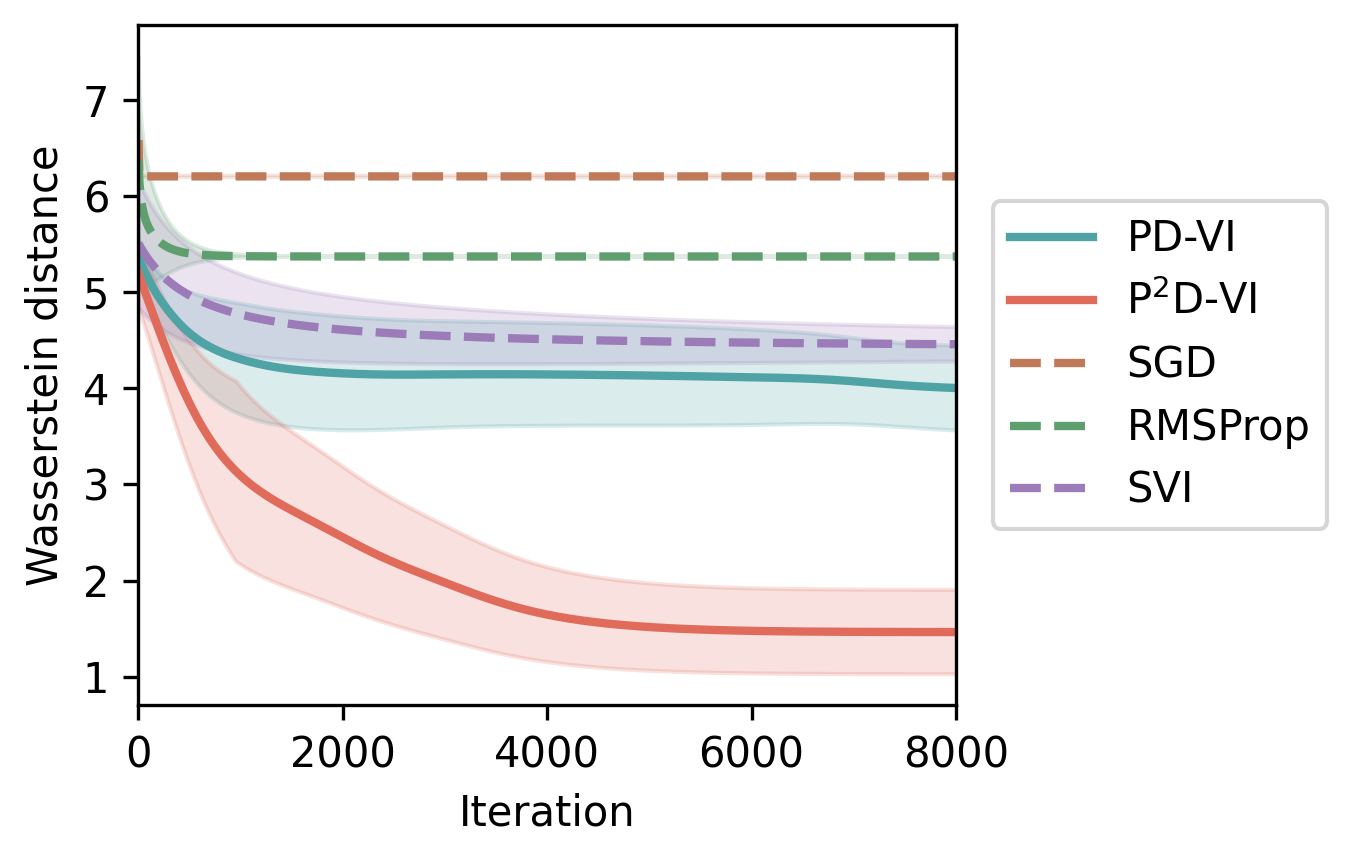}
  \caption{Wasserstein distance between the true and variational Gaussian mixture distributions on the synthetic dataset.}
  \label{fig:wasserstein}
  \vspace{-0.5cm}
\end{wrapfigure}
In this section, we evaluate both \name\ and \pname, along with several baseline methods, on synthetic and spatial transcriptomics datasets. We assess their performance in terms of computational efficiency and solution quality. Code to generate all numerical results is available in \url{https://github.com/JinhuaLyu/PD-VI_repo}.

\paragraph{Gaussian Mixture Model}

We also evaluate SVI algorithms on a synthetic Gaussian mixture model with five clusters and 100{,}000 data points in a 10-dimensional space. 
Detailed experimental settings are provided in Appendix~\ref{app:experiment_syn}. 
Figure~\ref{fig:wasserstein} plots the Wasserstein distance between true and variational Gaussian distributions, which shows \name and \pname achieve consistently lower Wasserstein distances and faster convergence than all baseline approaches, indicating superior solution quality and robustness under biased mini-batch sampling.

\paragraph{MOSTA Dataset} We use a spatial transcriptomics dataset from the MOSTA database, a large-scale resource profiling mouse embryonic development across multiple stages using Stereo-seq, which provides genome-wide gene expression measurements with spatial coordinates at cellular to subcellular resolution~\cite{chen2022spatiotemporal}. The dataset used in our experiments contains approximately 150{,}000 spatial locations, each associated with a high-dimensional gene expression profile spanning more than 20{,}000 genes. One of the key tasks in spatial transcriptomics analysis is spatial domain detection, which aims to identify tissue region with distinct molecular signatures. To address this, we model the data using a Bayesian hierarchical framework augmented with a Potts model to incorporate spatial dependencies, resulting in a non-conjugate inference problem. Details of the model are provided in Appendix~\ref{app:experiments_STD}. In this setting, the total number of variational variables is approximately three million. To enable scalable optimization, we partition the data and the corresponding variational parameters into around 30 fixed batches.

\begin{figure*}[h]
  \centering
  \includegraphics[width=0.99\textwidth]{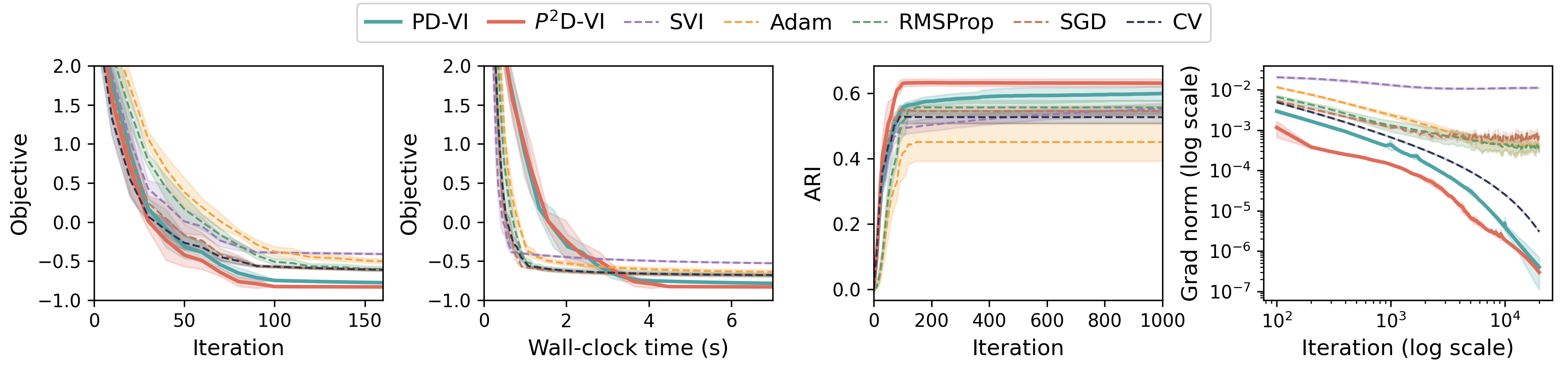}
  \caption{Convergence behavior and clustering performance on the MOSTA dataset.
From left to right, we show the evolution of the variational objective value (excluding constant terms) as a function of iteration count and wall-clock time, the adjusted Rand index (ARI) versus iteration, and the norm of the gradient with respect to the global variables versus iteration (displayed on a logarithmic scale). Solid lines represent the mean across runs, and shaded regions indicate one standard deviation over five random seeds. For clarity of visualization, the iteration range and wall-clock time shown correspond to a truncated segment of the full training process; extending the runs beyond the displayed range does not change the observed behavior.}
  \label{fig:combined}
\end{figure*}

\begin{figure*}[h]
  \centering
  \includegraphics[width=0.95\textwidth]{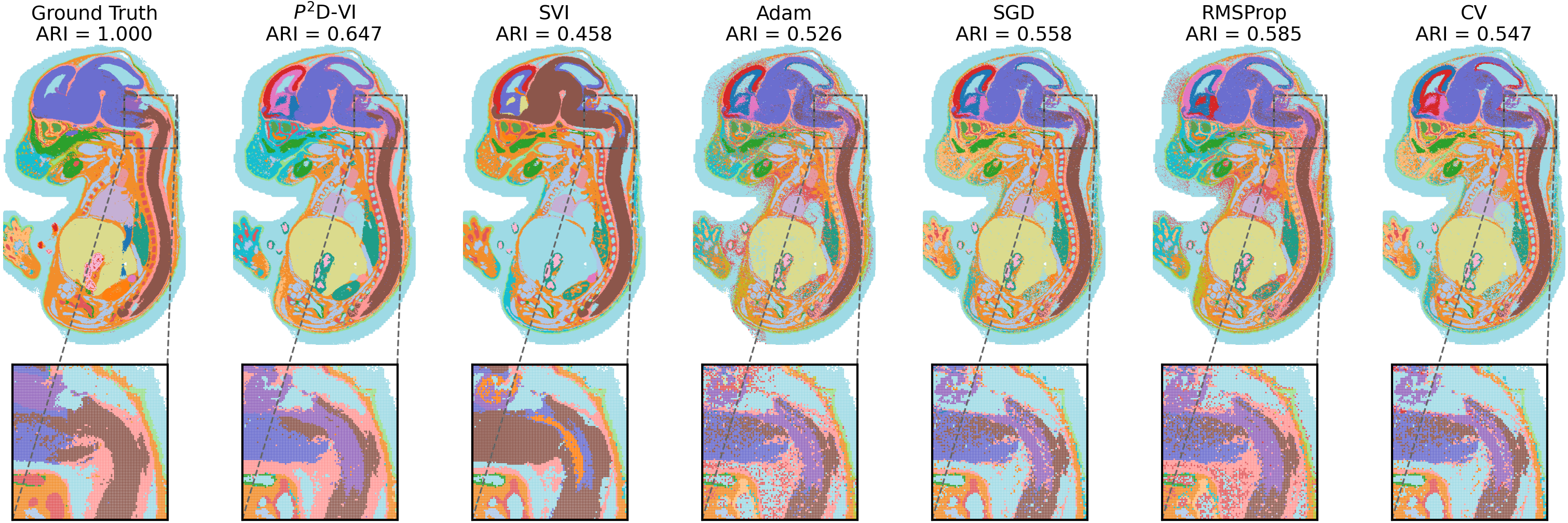}
  \caption{Spatial domains identified by \pname and competing methods across sagittal sections of mouse embryos. Ground-truth anatomical and tissue annotations are obtained from the MOSTA reference atlas generated in the original study (left). Clustering performance of different methods is quantified using the adjusted Rand index (ARI), where higher ARI values indicate better agreement with the reference annotations. }
  \label{fig:clusters-row}
\end{figure*}

We implement several baseline methods, including SVI~\cite{hoffman2013stochastic} with both constant and diminishing step size, SGD~\cite{ketkar2017stochastic}, RMSProp~\citep{tieleman2012lecture}, and Adam~\citep{kingma2014adam}. For a fair comparison, all methods are optimized with the same objective function, namely the ELBO, as defined in Section~\ref{sec:preliminary}. All baselines are carefully tuned and allow block-wise step sizes when applicable. 
Although PG-SVI and SNGVI are discussed in the introduction, both can be viewed as variants of SVI; under conjugate priors and without local latent variables, the three of them are equivalent. In our spatial transcriptomics experiments, the model includes local latent variables and a Potts prior to encode spatial dependencies, which breaks conjugacy and makes the original closed-form SVI updates inapplicable. We therefore follow the core SVI principle with a modified implementation: the global variational parameters are updated using a closed-form update equivalent to a single natural-gradient step, while the local variational variables are optimized at each global update. In practice, the local optimization converges in fewer than 20 iterations per global update, yielding an efficient SVI-style procedure tailored to our non-conjugate model with local latent variables.

Figure~\ref{fig:combined} compares the convergence behavior across methods.
The first two panels show that both \name\ and \pname\ converge faster in terms of iteration count than competing methods, with \pname\ achieving the fastest convergence, suggesting the effectiveness of the proposed preconditioning strategy.
Although \pname\ does not always exhibit the fastest initial progress in terms of wall-clock time, it consistently reaches lower objective values and remains competitive overall, demonstrating favorable practical efficiency.
While the adjusted rand index (ARI)~\cite{santos2009use} does not quantify convergence, it evaluates clustering accuracy relative to ground truth. According to this metric, \pname\ consistently attains higher and more stable values than the baselines.
We also observe that CV-based baselines exhibit relatively rapid initial decreases in the objective and gradient norm, which can be attributed to their variance reduction mechanisms.
Finally, the gradient norm versus iteration plot (shown on a logarithmic scale) provides further insight into the optimization dynamics: \pname\ exhibits the steepest decrease, indicating faster asymptotic convergence.

Figure~\ref{fig:clusters-row} presents the spatial domain identification results on the MOSTA dataset. Compared with baselines, \pname\ achieves the highest clustering accuracy. Visually, the clusters produced by \pname\ exhibit improved spatial coherence and sharper anatomical boundaries, particularly in complex embryonic regions where multiple tissue types are closely intertwined.
The zoomed-in views further highlight that \pname\ better preserves fine-grained spatial structures and reduces spurious fragmentation compared to other methods.


\section{Conclusion and Discussion}
We introduce \name and its block-preconditioned variant \pname for large-scale MFVI in mini-batch settings. 
We believe this primal--dual perspective opens new avenues for designing scalable optimization algorithms for modern VI.

\newpage


\bibliography{ref.bib}
\bibliographystyle{icml2026}

\newpage
\appendix
\onecolumn
In the Appendix, we will first present the derivations of the objective in Section 3, followed by the proof to the convergence theorems in Section 4. Then, we will exhibit the details in the numerical experiments in Section 5 and the additional numerical results.

\section{Proofs for Section 3 (Our Methods)}
\subsection{Decomposition of $f(\boldsymbol{\phi}, \boldsymbol{\lambda})$}
\label{section:decomposition_f}

\begin{align*}
f(\boldsymbol{\phi}, \boldsymbol{\lambda})
&:= \frac{1}{n}\mathbb{E}_q \bigl[
\log q_{\boldsymbol{\phi}}(\mathbf{z})
+ \log q_{\boldsymbol{\lambda}}(\boldsymbol{\beta})
- \log p(\mathbf{z}, \boldsymbol{\beta}, \mathbf{x})
\bigr].
\end{align*}

Using the mean-field factorization
\[
q_{\boldsymbol{\phi},\boldsymbol{\lambda}}(\mathbf{z}, \boldsymbol{\beta})
=
\Bigl(\prod_{i=1}^n q_{\boldsymbol{\phi}_i}(\mathbf{z}_i)\Bigr)\,
q_{\boldsymbol{\lambda}}(\boldsymbol{\beta}),
\]
and the joint distribution
\[
p(\mathbf{z}, \boldsymbol{\beta}, \mathbf{x})
=
\Bigl(\prod_{i=1}^n p(\mathbf{x}_i \mid \veczeta_i, \vecbeta)\Bigr)
\Bigl(\prod_{i=1}^n p_{\boldsymbol{\theta}}(\mathbf{z}_i)\Bigr)
p_{\boldsymbol{\theta}}(\boldsymbol{\beta}),
\]
where we assume that the local latent variables $\mathbf{z}_i$ are independent across data points $i\in[n]$, we obtain
\begin{align*}
f(\boldsymbol{\phi}, \boldsymbol{\lambda})
&=
\sum_{i=1}^n \mathbb{E}_q \!\left[ \log q_{\boldsymbol{\phi}_i}(\mathbf{z}_i) \right]
+ \mathbb{E}_q \!\left[ \log q_{\boldsymbol{\lambda}}(\boldsymbol{\beta}) \right]
- \sum_{i=1}^n \mathbb{E}_q \!\left[ \log p(\mathbf{x}_i \mid \veczeta_i, \vecbeta) \right]
- \sum_{i=1}^n \mathbb{E}_q \!\left[ \log p_{\boldsymbol{\theta}}(\mathbf{z}_i) \right]
- \mathbb{E}_q \!\left[ \log p_{\boldsymbol{\theta}}(\boldsymbol{\beta}) \right].
\end{align*}

Rearranging terms yields
\begin{align*}
f(\boldsymbol{\phi}, \boldsymbol{\lambda})
&=
\frac{1}{n}\sum_{i=1}^n
\Bigl(
\mathbb{E}_q[\log q_{\boldsymbol{\phi}_i}(\mathbf{z}_i)]
-
\mathbb{E}_q[\log p(\mathbf{x}_i \mid \veczeta_i, \vecbeta)]
-
\mathbb{E}_q[\log p_{\boldsymbol{\theta}}(\mathbf{z}_i)]
\Bigr)
+
\mathbb{E}_q[\log q_{\boldsymbol{\lambda}}(\boldsymbol{\beta})
-
\log p_{\boldsymbol{\theta}}(\boldsymbol{\beta})].
\end{align*}

Define
\[
f_i(\boldsymbol{\phi}_i, \boldsymbol{\lambda})
:=
\mathbb{E}_q[\log q_{\boldsymbol{\phi}_i}(\mathbf{z}_i)]
-
\mathbb{E}_q[\log p(\mathbf{x}_i \mid \veczeta_i, \vecbeta)]
-
\mathbb{E}_q[\log p_{\boldsymbol{\theta}}(\mathbf{z}_i)]
+
\frac{1}{n}
\mathbb{E}_q[\log q_{\boldsymbol{\lambda}}(\boldsymbol{\beta})
-
\log p_{\boldsymbol{\theta}}(\boldsymbol{\beta})],
\]
so we have
\begin{align*}
f(\boldsymbol{\phi}, \boldsymbol{\lambda})
=
\frac{1}{n}\sum_{i=1}^n f_i(\boldsymbol{\phi}_i, \boldsymbol{\lambda}).
\end{align*}
\section{Proofs for Section 4 (Analysis)} \label{app:proofs_section4}
The optimization problem is
\[
\min_{\boldsymbol{\phi},\,\boldsymbol{\lambda}}\; f(\boldsymbol{\phi},\boldsymbol{\lambda})
\;\;:=\;\; \frac{1}{n}\sum_{i=1}^n f_i(\boldsymbol{\phi}_i,\boldsymbol{\lambda}),
\]
where $\boldsymbol{\phi} := (\boldsymbol{\phi}_1,\ldots,\boldsymbol{\phi}_n)$ denotes the collection of local variables, with $\boldsymbol{\phi}_i$ specific to function $f_i$, and $\boldsymbol{\lambda}$ denotes the global variables shared across all functions.
For algorithm implementation, we introduce a local copy $\boldsymbol{\lambda}_i$ of the global variable $\boldsymbol{\lambda}$ for each function $f_i$. These local variables are driven to reach consensus through the algorithm's update rules. Throughout the following analysis, $f_i(\boldsymbol{\phi}_i,\boldsymbol{\lambda}_i)$ denotes the value of $f_i(\boldsymbol{\phi}_i,\boldsymbol{\lambda})$ evaluated at $\boldsymbol{\lambda}=\boldsymbol{\lambda}_i$. Moreover, with a slight abuse of notation, we write $\nabla_{\boldsymbol{\lambda}_i} f_i(\boldsymbol{\phi}_i,\boldsymbol{\lambda}_i)$ to denote the gradient of $f_i(\boldsymbol{\phi}_i,\boldsymbol{\lambda})$ with respect to $\boldsymbol{\lambda}$, evaluated at $\boldsymbol{\lambda}=\boldsymbol{\lambda}_i$. We write $\nabla f_i(\boldsymbol{\phi}_i,\boldsymbol{\lambda}_i)$ for the full gradient of $f_i$ evaluated at $(\boldsymbol{\phi}_i,\boldsymbol{\lambda}_i)$, i.e., the gradient with respect to both $\boldsymbol{\phi}_i$ and $\boldsymbol{\lambda}_i$.

We introduce the \emph{virtual variables}
\(\{(\tilde{\boldsymbol{\phi}}_i^{t}, \tilde{\boldsymbol{\lambda}}_i^{t})\}_{i=1}^n\),
defined as the exact minimizers of the following local augmented subproblems:
\begin{align*}
(\tilde{\boldsymbol{\phi}}_i^{t},\tilde{\boldsymbol{\lambda}}_i^{t})
&=
\argmin_{\boldsymbol{\phi}_i,\boldsymbol{\lambda}_i}
\;
f_i(\boldsymbol{\phi}_i,\boldsymbol{\lambda}_i)
+
\langle \boldsymbol{\mu}_i^{t-1},\,
\boldsymbol{\lambda}_i-\boldsymbol{\lambda}_0^{t-1}\rangle
+
\frac{1}{2\eta}
\|\boldsymbol{\lambda}_i-\boldsymbol{\lambda}_0^{t-1}\|_2^2 \\[0.3em]
&=
\argmin_{\boldsymbol{\phi}_i,\boldsymbol{\lambda}_i}
\;
f_i(\boldsymbol{\phi}_i,\boldsymbol{\lambda}_i)
+
\langle \boldsymbol{\mu}_i^{t-1},\,
\boldsymbol{\lambda}_i\rangle
+
\frac{1}{2\eta}
\|\boldsymbol{\lambda}_i-\boldsymbol{\lambda}_0^{t-1}\|_2^2,
\qquad \forall\, i \in [n].
\end{align*}
For any $i \in S_t$, the local update is exact, and thus
\(
(\boldsymbol{\phi}_i^{t},\boldsymbol{\lambda}_i^{t})
=
(\tilde{\boldsymbol{\phi}}_i^{t},\tilde{\boldsymbol{\lambda}}_i^{t})
\).
Define the averaged local variable
\begin{align}\label{eq:gamma_def}
  \boldsymbol{\gamma}^t
=
\frac{1}{|S_t|}\sum_{i\in S_t}\boldsymbol{\lambda}_i^{t},
\end{align}
which satisfies
\(
\boldsymbol{\gamma}^t
=
\boldsymbol{\lambda}_0^t-\boldsymbol{h}^t
\)
by construction. Moreover, define $m := |S_t|$ for notation simplicity.

\subsection{Convex Analysis}
We next introduce the following controlled quantities:
\begin{align*}
G_t
&=
\frac{1}{n}\sum_{i=1}^n
\mathbb{E}\!\left\|
\nabla f_i(\boldsymbol{\phi}_i^t,\boldsymbol{\lambda}_i^t)
-
\nabla f_i(\boldsymbol{\phi}_i^*,\boldsymbol{\lambda}_i^*)
\right\|_2^2,
\qquad
\epsilon_t
=
\frac{1}{n}\sum_{i=1}^n
\mathbb{E}\!\left\|
\tilde{\boldsymbol{\lambda}}_i^{t}
-
\boldsymbol{\gamma}^{t-1}
\right\|_2^2,
\end{align*}
when $G_t$ approaches zero, we know that the gradient of the whole function at each update is close to zero. 
When $\epsilon_t$ approaches zero, we know that the virtual variables are close to the averaged local variables from the previous iteration,
which indicates consensus among the variables.

Moreover, assume that the local updates are solved exactly,
so the first-order optimality conditions yield
\begin{align} \label{eq:dual_update}
\boldsymbol{\mu}_i^{t}
=
-\,\nabla_{\boldsymbol{\lambda}_i}
f_i(\boldsymbol{\phi}_i^{t},\boldsymbol{\lambda}_i^{t}), \quad \tilde{\boldsymbol{\mu}}_i^{t}
=
-\,\nabla_{\boldsymbol{\lambda}_i}
f_i(\tilde{\boldsymbol{\phi}}_i^{t},\tilde{\boldsymbol{\lambda}}_i^{t}).
\end{align}

Similarly, the optimality conditions for the virtual update are given by
\begin{align*}
\begin{cases}
\nabla_{\boldsymbol{\lambda}_i}
f_i(\tilde{\boldsymbol{\phi}}_i^{t},
\tilde{\boldsymbol{\lambda}}_i^{t})
+
\boldsymbol{\mu}_i^{t-1}
+
\frac{1}{\eta}
\bigl(\tilde{\boldsymbol{\lambda}}_i^{t}
-\boldsymbol{\lambda}_0^{t-1}\bigr)
= 0, \\[0.3em]
\nabla_{\boldsymbol{\phi}_i}
f_i(\tilde{\boldsymbol{\phi}}_i^{t},
\tilde{\boldsymbol{\lambda}}_i^{t})
= 0 .
\end{cases}
\end{align*}
Substituting \eqref{eq:dual_update} into the first condition above yields
\begin{align}\label{eq:true_equal_virtual}
\tilde{\boldsymbol{\lambda}}_i^{t}
-
\boldsymbol{\lambda}_0^{t-1}
=
\eta\Big(
\nabla_{\boldsymbol{\lambda}_i}
f_i(\boldsymbol{\phi}_i^{t-1},\boldsymbol{\lambda}_i^{t-1})
-
\nabla_{\boldsymbol{\lambda}_i}
f_i(\tilde{\boldsymbol{\phi}}_i^{t},
\tilde{\boldsymbol{\lambda}}_i^{t})
\Big).
\end{align}
In the convex case analysis, we use the following two assumptions. 
\begin{assumption}
  Each $\nabla f_i(\cdot)$ is $L$-Lipschitz with $L\in (0,\infty), i.e., $ for all $i\in[n]$ and all
$\boldsymbol{x},\boldsymbol{y}$ in the domain,
  \begin{align} 
      \|\nabla f_i(\boldsymbol{x}) - \nabla f_i(\boldsymbol{y})\| \leq L\|\boldsymbol{x}-\boldsymbol{y}\|.
  \end{align}
\end{assumption}
\begin{assumption} 
    Each $f_i(\cdot)$ is convex, i.e., for all $i\in[n]$ and all
$\boldsymbol{x},\boldsymbol{y}$ in the domain,
    \begin{align}
        f_i(\boldsymbol{y}) \geq f_i(\boldsymbol{x}) + \langle\nabla f_i(\boldsymbol{x}), \,  \boldsymbol{y}-\boldsymbol{x}\rangle
    \end{align}
\end{assumption}

\begin{theorem}
\label{thm:converge1.1}
Suppose that $\{f_i\}_{i=1}^n$ are convex and $L$-smooth.
For any step size $0 < \eta \leq \frac{1}{22L}$, Algorithm~\ref{alg:meta} and \ref{alg:oracle_1} satisfies
\begin{align}
\mathbb{E}\!\left[
f(\bar{\boldsymbol{\phi}}^{T},\bar{\boldsymbol{\gamma}}^{T-1})
-
f(\boldsymbol{\phi}^\ast,\boldsymbol{\lambda}^\ast)
\right]
=
\,
O\!\left( \frac{1}{T}\left(
\frac{1}{\eta}\|\boldsymbol{\lambda}^{0}-\boldsymbol{\lambda}^\ast\|^2
+
\frac{n}{m}\eta\,G_0
\right)\right).
\end{align}
where \(
\bar{\boldsymbol{\phi}}^{T}
=
\frac{1}{T}\sum_{t=1}^T\tilde{\boldsymbol{\phi}}^{t},
\) \(
\bar{\boldsymbol{\gamma}}^{T-1}
=
\frac{1}{T}\sum_{t=1}^T\boldsymbol{\gamma}^{t-1}
\), $G_0 = \frac{1}{n}\sum_{i=1}^n
\left\|
\nabla f_i(\boldsymbol{\phi}_i^*,\boldsymbol{\lambda}_i^*)
\right\|^2$.
\end{theorem}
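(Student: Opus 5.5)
The plan is a Lyapunov-function argument in the style of the analysis of SCAFFOLD/FedPD-type primal--dual methods. The potential is
\[
\Phi_t=\mathbb{E}\|\boldsymbol{\gamma}^{t}-\boldsymbol{\lambda}^\ast\|^2+c\,\eta^2\frac{n}{m}G_t ,
\]
where $G_t$ measures how far the stored dual variables $\boldsymbol{\mu}_i^t=-\nabla_{\boldsymbol{\lambda}_i}f_i(\boldsymbol{\phi}_i^t,\boldsymbol{\lambda}_i^t)$ are from their optimal values $-\nabla_{\boldsymbol{\lambda}}f_i(\boldsymbol{\phi}_i^\ast,\boldsymbol{\lambda}^\ast)$. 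The first term is the usual distance of the averaged iterate to the optimum; the second plays the role of the control-variate error.

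\textbf{Step 1: recursion for $\boldsymbol{\gamma}$.} Using $\boldsymbol{\lambda}_0^{t}=\boldsymbol{\gamma}^t+\boldsymbol{h}^t$ and the initialization $\boldsymbol{\mu}^0=0$, $\boldsymbol{h}^0=0$, induction should give $\boldsymbol{h}^t=\frac{\eta}{n}\sum_{i}\boldsymbol{\mu}_i^t$. This identifies $\boldsymbol{\lambda}_0^{t-1}$ as $\boldsymbol{\gamma}^{t-1}$ minus $\eta$ times the average gradient memory. Combining this with \eqref{eq:true_equal_virtual}, each virtual iterate satisfies
\[
\tilde{\boldsymbol{\lambda}}_i^t=\boldsymbol{\gamma}^{t-1}-\eta\Bigl(\nabla_{\boldsymbol{\lambda}}f_i(\tilde{\boldsymbol{\phi}}_i^t,\tilde{\boldsymbol{\lambda}}_i^t)-\boldsymbol{\mu}_i^{t-1}-\tfrac1n\textstyle\sum_j\boldsymbol{\mu}_j^{t-1}\Bigr)
\]
(up to the sign conventions I will verify). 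This is an implicit, variance-reduced gradient step. Averaging over the uniformly sampled $S_t$ gives $\mathbb{E}_t\boldsymbol{\gamma}^t=\frac1n\sum_i\tilde{\boldsymbol{\lambda}}_i^t$, in which the memory terms cancel on average.

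\textbf{Step 2: one-step descent.} I will expand $\|\boldsymbol{\gamma}^t-\boldsymbol{\lambda}^\ast\|^2$ into a bias part and a sampling-variance part.
\begin{itemize}
\item For the bias part, the inner product with the implicit gradient is bounded using convexity of each $f_i$, evaluated at the virtual point $(\tilde{\boldsymbol{\phi}}_i^t,\tilde{\boldsymbol{\lambda}}_i^t)$. Since $\nabla_{\boldsymbol{\phi}}f_i=0$ there, the $\boldsymbol{\phi}$-component costs nothing, and $\boldsymbol{\phi}_i^\ast$ can be inserted freely. This yields a term $-2\eta\,[f(\tilde{\boldsymbol{\phi}}^t,\tilde{\boldsymbol{\lambda}}^t)-f^\ast]$.
\item The virtual $\tilde{\boldsymbol{\lambda}}_i^t$ differ from $\boldsymbol{\gamma}^{t-1}$ by the quantity $\epsilon_t$. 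I will pass from $f(\tilde{\boldsymbol{\phi}}^t,\cdot)$ at $\tilde{\boldsymbol{\lambda}}_i^t$ to $f$ at $\boldsymbol{\gamma}^{t-1}$ using $L$-smoothness, at a cost of $O(L\epsilon_t)$.
\item For the variance part, sampling without replacement gives $\mathbb{E}\|\boldsymbol{\gamma}^t-\mathbb{E}_t\boldsymbol{\gamma}^t\|^2\le\frac1m\cdot\frac1n\sum_i\|\tilde{\boldsymbol{\lambda}}_i^t-\bar{\tilde{\boldsymbol{\lambda}}}^t\|^2$, which is again controlled by $\epsilon_t$.
\end{itemize}

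\textbf{Step 3: bounding $\epsilon_t$ and the recursion for $G_t$.}
\begin{itemize}
\item From the implicit-step identity, $\epsilon_t\lesssim\eta^2\bigl(\frac1n\sum_i\|\nabla f_i(\tilde{\boldsymbol{\phi}}_i^t,\tilde{\boldsymbol{\lambda}}_i^t)-\nabla f_i^\ast\|^2+G_{t-1}\bigr)$, using that $\sum_i\nabla_{\boldsymbol{\lambda}}f_i^\ast=0$ at the optimum.
\item Co-coercivity of convex $L$-smooth functions gives $\|\nabla f_i(\tilde{\boldsymbol{\phi}}_i^t,\tilde{\boldsymbol{\lambda}}_i^t)-\nabla f_i^\ast\|^2\le 2L\,D_{f_i}$, where $D_{f_i}$ is the Bregman gap. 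This converts squared gradient differences into function suboptimality.
\item For the control variates, only a fraction $m/n$ of the memories is refreshed per round, so $G_t=(1-\frac mn)G_{t-1}+\frac mn\cdot\frac1n\sum_i\|\nabla f_i(\tilde{\boldsymbol{\phi}}_i^t,\tilde{\boldsymbol{\lambda}}_i^t)-\nabla f_i^\ast\|^2$. This is why the weight $c\eta^2 n/m$ is attached to $G_t$ in $\Phi_t$.
\end{itemize}

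\textbf{Step 4: closing the argument.} Combining these gives an inequality of the form $\Phi_t\le\Phi_{t-1}-\eta\,\mathbb{E}[f(\tilde{\boldsymbol{\phi}}^t,\boldsymbol{\gamma}^{t-1})-f^\ast]$, provided $\eta\le 1/(22L)$ so that all positive multiples of $L\eta\cdot$(gap) are absorbed. Telescoping over $t$ and applying Jensen's inequality (convexity of $f$) to the averages $\bar{\boldsymbol{\phi}}^T,\bar{\boldsymbol{\gamma}}^{T-1}$ gives the stated bound. The initial potential is $\Phi_0=\|\boldsymbol{\lambda}^0-\boldsymbol{\lambda}^\ast\|^2+c\eta^2\frac nm G_0$ with $\boldsymbol{\mu}^0=0$; dividing by $\eta T$ yields the claimed $\frac{1}{\eta}\|\boldsymbol{\lambda}^0-\boldsymbol{\lambda}^\ast\|^2+\frac nm\eta G_0$.

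\textbf{Main obstacle.} I expect the hardest part to be Step 2's mismatch between where convexity is applied (the virtual point $\tilde{\boldsymbol{\lambda}}_i^t$, with $\tilde{\boldsymbol{\phi}}_i^t$) and where the conclusion is stated (at $\boldsymbol{\gamma}^{t-1}$). Along with it comes making the cross terms between the bias and the memory errors telescope with exact constants. My first attempt will be the three-point identity $2\langle a-b,a-c\rangle=\|a-b\|^2+\|a-c\|^2-\|b-c\|^2$ applied to the implicit step. This produces a negative $-\epsilon_t$ term that should absorb the smoothness error.
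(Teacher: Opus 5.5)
Your plan is essentially the paper's proof. It uses the same potential $\mathbb{E}\|\boldsymbol{\gamma}^t-\boldsymbol{\lambda}^\ast\|^2+c_1G_t$ with $c_1\propto\frac{n}{m}\eta^2$, the identity $\boldsymbol{h}^t=\frac{\eta}{n}\sum_i\boldsymbol{\mu}_i^t$, unbiasedness of $\boldsymbol{\gamma}^t$, a bound $\epsilon_t\lesssim\eta^2G_{t-1}+\eta^2L\,[f(\tilde{\boldsymbol{\phi}}^t,\boldsymbol{\gamma}^{t-1})-f^\ast]$, the $(1-\frac{m}{n})$-contraction of $G_t$, and then telescoping plus Jensen.

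The obstacle you flag needs no negative $\epsilon_t$ term. The paper applies $-\langle\nabla f_i(\boldsymbol{x}),\boldsymbol{z}-\boldsymbol{y}\rangle\le f_i(\boldsymbol{y})-f_i(\boldsymbol{z})+\frac{L}{2}\|\boldsymbol{z}-\boldsymbol{x}\|^2$ with $\boldsymbol{x}=(\tilde{\boldsymbol{\phi}}_i^t,\tilde{\boldsymbol{\lambda}}_i^t)$, $\boldsymbol{z}=(\tilde{\boldsymbol{\phi}}_i^t,\boldsymbol{\gamma}^{t-1})$ and $\boldsymbol{y}=(\boldsymbol{\phi}_i^\ast,\boldsymbol{\lambda}^\ast)$; this is your convexity-plus-descent-lemma transfer, whose linear term accounts exactly for the residual $\langle\nabla_{\boldsymbol{\lambda}}f_i(\boldsymbol{x}),\boldsymbol{\gamma}^{t-1}-\tilde{\boldsymbol{\lambda}}_i^t\rangle$. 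One correction to your Step 3: invoke co-coercivity at $(\tilde{\boldsymbol{\phi}}_i^t,\boldsymbol{\gamma}^{t-1})$, as the paper does, rather than at the virtual point, paying an extra $L^2\epsilon_t$. Only at a common $\boldsymbol{\lambda}$ do the Bregman linear terms vanish via $\sum_i\nabla_{\boldsymbol{\lambda}}f_i(\boldsymbol{\phi}_i^\ast,\boldsymbol{\lambda}^\ast)=0$, which is what lets the squared gradient differences be charged to $f(\tilde{\boldsymbol{\phi}}^t,\boldsymbol{\gamma}^{t-1})-f^\ast$.
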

\begin{proof}
From Lemma \ref{lemma:main_lemma}, for every $t\ge1$ we have
\begin{align*}
c_2\,
\mathbb{E}\!\left[
f(\tilde{\boldsymbol{\phi}}^{t},\boldsymbol{\gamma}^{t-1})
-
f(\boldsymbol{\phi}^\ast,\boldsymbol{\lambda}^\ast)
\right]
&\le
\Bigl(
\mathbb{E}\!\left\|\boldsymbol{\gamma}^{t-1}-\boldsymbol{\lambda}^\ast\right\|^2
+c_1\,G_{t-1}
\Bigr)
-
\Bigl(
\mathbb{E}\!\left\|\boldsymbol{\gamma}^{t}-\boldsymbol{\lambda}^\ast\right\|^2
+c_1\,G_{t}
\Bigr).
\end{align*}
Summing the above inequality over $t=1,\ldots,T$ yields the telescoping bound
\begin{align*}
c_2\sum_{t=1}^T
\mathbb{E}\!\left[
f(\tilde{\boldsymbol{\phi}}^{t},\boldsymbol{\gamma}^{t-1})
-
f(\boldsymbol{\phi}^\ast,\boldsymbol{\lambda}^\ast)
\right]
&\le
\Bigl(
\mathbb{E}\!\left\|\boldsymbol{\gamma}^{0}-\boldsymbol{\lambda}^\ast\right\|^2
+c_1\,G_{0}
\Bigr)
-
\Bigl(
\mathbb{E}\!\left\|\boldsymbol{\gamma}^{T}-\boldsymbol{\lambda}^\ast\right\|^2
+c_1\,G_{T}
\Bigr).
\end{align*}
When $0<\eta \leq\frac{1}{22L}$, $c_1$ and $c_2$ are positive, and clearly $G_T\ge0$. Hence
\begin{align*}
c_2\sum_{t=1}^T
\mathbb{E}\!\left[
f(\tilde{\boldsymbol{\phi}}^{t},\boldsymbol{\gamma}^{t-1})
-
f(\boldsymbol{\phi}^\ast,\boldsymbol{\lambda}^\ast)
\right]
&\le
\mathbb{E}\!\left\|\boldsymbol{\gamma}^{0}-\boldsymbol{\lambda}^\ast\right\|^2
+c_1\,G_{0}.
\end{align*}
Dividing both sides by $c_2T$ gives
\begin{align*}
\frac{1}{T}\sum_{t=1}^T
\mathbb{E}\!\left[
f(\tilde{\boldsymbol{\phi}}^{t},\boldsymbol{\gamma}^{t-1})
-
f(\boldsymbol{\phi}^\ast,\boldsymbol{\lambda}^\ast)
\right]
&\le
\frac{1}{c_2T}\Bigl(
\mathbb{E}\!\left\|\boldsymbol{\gamma}^{0}-\boldsymbol{\lambda}^\ast\right\|^2
+c_1\,G_{0}
\Bigr).
\end{align*}
Define that
\[
\bar{\boldsymbol{\phi}}^{T}
\;=\;
\frac{1}{T}\sum_{t=1}^T\tilde{\boldsymbol{\phi}}^{t},
\qquad
\bar{\boldsymbol{\gamma}}^{T-1}
\;=\;
\frac{1}{T}\sum_{t=1}^T\boldsymbol{\gamma}^{t-1}.
\]
Since $f$ is convex in $(\boldsymbol{\phi},\boldsymbol{\lambda})$, Jensen's inequality implies
\[
f(\bar{\boldsymbol{\phi}}^{T},\bar{\boldsymbol{\gamma}}^{T-1})
\;\le\;
\frac{1}{T}\sum_{t=1}^T f(\tilde{\boldsymbol{\phi}}^{t},\boldsymbol{\gamma}^{t-1}).
\]
Taking expectations and combining with the previous bound, we obtain
\begin{align*}
\mathbb{E}\!\left[
f(\bar{\boldsymbol{\phi}}^{T},\bar{\boldsymbol{\gamma}}^{T-1})
-
f(\boldsymbol{\phi}^\ast,\boldsymbol{\lambda}^\ast)
\right]
&\le
\frac{1}{T}\sum_{t=1}^T
\mathbb{E}\!\left[
f(\tilde{\boldsymbol{\phi}}^{t},\boldsymbol{\gamma}^{t-1})
-
f(\boldsymbol{\phi}^\ast,\boldsymbol{\lambda}^\ast)
\right] \\
&\le
\frac{1}{c_2T}\Bigl(
\mathbb{E}\!\left\|\boldsymbol{\gamma}^{0}-\boldsymbol{\lambda}^\ast\right\|^2
+c_1\,G_{0}
\Bigr),\\
&= \frac{1}{T}\Bigl(
\frac{1}{c_2}\!\left\|\boldsymbol{\lambda}_0^{0}-\boldsymbol{\lambda}^\ast\right\|^2
+\frac{c_1}{c_2}\,G_{0}
\Bigr).
\end{align*}
Plugging in the values of $c_1$ and $c_2$ in Lemma~\ref{lemma:main_lemma},
we have
\[
\frac{1}{c_2}
=
\frac{1-20\eta^2L^2}{2\eta\bigl(1-20\eta L-40\eta^2L^2\bigr)},
\qquad
\frac{c_1}{c_2}
=
\frac{n}{m}\cdot
\frac{4\eta(1+\eta L)}{1-20\eta L-40\eta^2L^2}.
\]
Therefore,
\begin{align*}
\mathbb{E}\!\left[
f(\bar{\boldsymbol{\phi}}^{T},\bar{\boldsymbol{\gamma}}^{T-1})
-
f(\boldsymbol{\phi}^\ast,\boldsymbol{\lambda}^\ast)
\right]
\le
\frac{1}{T}\left[
\frac{1-20\eta^2L^2}{2\eta\bigl(1-20\eta L-40\eta^2L^2\bigr)}\,
\|\boldsymbol{\lambda}^{0}-\boldsymbol{\lambda}^\ast\|^2
+
\frac{n}{m}\cdot\frac{4\eta(1+\eta L)}{1-20\eta L-40\eta^2L^2}\,G_0
\right].
\end{align*}
Let $D(\eta):=1-20\eta L-40\eta^2L^2$. Then the above can be equivalently written as
\[
\mathbb{E}\!\left[
f(\bar{\boldsymbol{\phi}}^{T},\bar{\boldsymbol{\gamma}}^{T-1})
-
f(\boldsymbol{\phi}^\ast,\boldsymbol{\lambda}^\ast)
\right]
\le
\frac{1}{T}\cdot\frac{1}{D(\eta)}
\left[
\frac{1-20\eta^2L^2}{2\eta}\,\|\boldsymbol{\lambda}^{0}-\boldsymbol{\lambda}^\ast\|^2
+
\frac{n}{m}\,4\eta(1+\eta L)\,G_0
\right].
\]
In particular, if $\eta L$ is sufficiently small so that $D(\eta)=\Theta(1)$ and
$1-20\eta^2L^2=\Theta(1)$, then
\[
\mathbb{E}\!\left[
f(\bar{\boldsymbol{\phi}}^{T},\bar{\boldsymbol{\gamma}}^{T-1})
-
f(\boldsymbol{\phi}^\ast,\boldsymbol{\lambda}^\ast)
\right]
=
\frac{1}{T}\,
O\!\left(
\frac{1}{\eta}\|\boldsymbol{\lambda}^{0}-\boldsymbol{\lambda}^\ast\|^2
+
\frac{n}{m}\eta\,G_0
\right).
\]
\end{proof}
We use the following lemmas to establish Lemma~\ref{lemma:main_lemma}, which in turn implies Theorem~\ref{thm:converge1.1}.
\begin{lemma} \label{lemma_lsmooth2}
If $f$ is convex and $L$--smooth, then for all $\boldsymbol{x},\boldsymbol{y},\boldsymbol{z}$,
\begin{align}
\label{eq:coercive-11}
-\langle \nabla f(\boldsymbol{x}),\, \boldsymbol{z}-\boldsymbol{y}\rangle
\;\le\;
-\,f(\boldsymbol{z})+f(\boldsymbol{y})+\frac{L}{2}\|\boldsymbol{z}-\boldsymbol{x}\|^2,\\
\label{eq:coercive-12}
\frac{1}{2L}\,\|\nabla f(\boldsymbol{y})-\nabla f(\boldsymbol{x})\|^2
\;\le\;
f(\boldsymbol{y})-f(\boldsymbol{x})-\langle \nabla f(\boldsymbol{x}),\boldsymbol{y}-\boldsymbol{x}\rangle .
\end{align}
\begin{proof}
By $L$--smoothness,
\[
f(\boldsymbol{y})
\;\le\;
f(\boldsymbol{x})+\langle \nabla f(\boldsymbol{x}),\,\boldsymbol{y}-\boldsymbol{x}\rangle
+\frac{L}{2}\|\boldsymbol{y}-\boldsymbol{x}\|^2 ,
\]
and
\[
f(\boldsymbol{z})
\;\le\;
f(\boldsymbol{x})+\langle \nabla f(\boldsymbol{x}),\,\boldsymbol{z}-\boldsymbol{x}\rangle
+\frac{L}{2}\|\boldsymbol{z}-\boldsymbol{x}\|^2 .
\]
Hence
\[
f(\boldsymbol{y})
\;\le\;
f(\boldsymbol{z})-\langle \nabla f(\boldsymbol{x}),\,\boldsymbol{z}-\boldsymbol{y}\rangle
+\frac{L}{2}\|\boldsymbol{z}-\boldsymbol{x}\|^2 ,
\]
which is equivalent to
\[
-\langle \nabla f(\boldsymbol{x}),\,\boldsymbol{z}-\boldsymbol{y}\rangle
\;\le\;
-\,f(\boldsymbol{z})+f(\boldsymbol{y})
+\frac{L}{2}\|\boldsymbol{z}-\boldsymbol{x}\|^2 .
\]

Next, to prove \eqref{eq:coercive-12}, fix $\boldsymbol{x},\boldsymbol{y}\in\mathbb{R}^d$. For any $\boldsymbol{z}\in\mathbb{R}^d$, by convexity of $f$ we have
\[
f(\boldsymbol{x})-f(\boldsymbol{z})\le \langle \nabla f(\boldsymbol{x}),\boldsymbol{x}-\boldsymbol{z}\rangle .
\]
Moreover, $L$–smoothness and the standard quadratic upper bound (descent lemma) give
\[
f(\boldsymbol{z})\le f(\boldsymbol{y})+\langle \nabla f(\boldsymbol{y}),\boldsymbol{z}-\boldsymbol{y}\rangle+\frac{L}{2}\|\boldsymbol{z}-\boldsymbol{y}\|^2 ,
\]
or equivalently,
\[
f(\boldsymbol{z})-f(\boldsymbol{y})\le \langle \nabla f(\boldsymbol{y}),\boldsymbol{z}-\boldsymbol{y}\rangle+\frac{L}{2}\|\boldsymbol{z}-\boldsymbol{y}\|^2 .
\]

Combining these inequalities, for every $\boldsymbol{z}$ we obtain
\begin{align}
f(\boldsymbol{x})-f(\boldsymbol{y})
&= f(\boldsymbol{x})-f(\boldsymbol{z})+f(\boldsymbol{z})-f(\boldsymbol{y}) \nonumber\\
&\le \langle \nabla f(\boldsymbol{x}),\boldsymbol{x}-\boldsymbol{z}\rangle
   +\langle \nabla f(\boldsymbol{y}),\boldsymbol{z}-\boldsymbol{y}\rangle
   +\frac{L}{2}\|\boldsymbol{z}-\boldsymbol{y}\|^2 .
\label{eq:RHS-z}
\end{align}

To obtain the tightest upper bound within this family, we minimize the
right–hand side of \eqref{eq:RHS-z} with respect to $\boldsymbol{z}$.
The objective is a convex quadratic function in $\boldsymbol{z}$; setting its gradient to
zero yields
\[
\boldsymbol{0} = -\nabla f(\boldsymbol{x}) + \nabla f(\boldsymbol{y}) + L(\boldsymbol{z}-\boldsymbol{y})
\quad\Longrightarrow\quad
\boldsymbol{z}^\star = \boldsymbol{y} - \frac{1}{L}\bigl(\nabla f(\boldsymbol{y})-\nabla f(\boldsymbol{x})\bigr).
\]

Substituting $\boldsymbol{z}=\boldsymbol{z}^\star$ into \eqref{eq:RHS-z} and reorganizing terms, we get
\begin{align*}
f(\boldsymbol{x})-f(\boldsymbol{y})
&\le
\bigl\langle \nabla f(\boldsymbol{x}),\boldsymbol{x}-\boldsymbol{z}^\star\bigr\rangle
+\bigl\langle \nabla f(\boldsymbol{y}),\boldsymbol{z}^\star-\boldsymbol{y}\bigr\rangle
+\frac{L}{2}\|\boldsymbol{z}^\star-\boldsymbol{y}\|^2 \\[2pt]
&=
\bigl\langle \nabla f(\boldsymbol{x}),\boldsymbol{x}-\boldsymbol{y}\bigr\rangle
-\frac{1}{L}\bigl\|\nabla f(\boldsymbol{y})-\nabla f(\boldsymbol{x})\bigr\|^2
+\frac{1}{2L}\bigl\|\nabla f(\boldsymbol{y})-\nabla f(\boldsymbol{x})\bigr\|^2 \\[2pt]
&=
\bigl\langle \nabla f(\boldsymbol{x}),\boldsymbol{x}-\boldsymbol{y}\bigr\rangle
-\frac{1}{2L}\bigl\|\nabla f(\boldsymbol{y})-\nabla f(\boldsymbol{x})\bigr\|^2 .
\end{align*}

Rearranging gives
\[
\frac{1}{2L}\,\|\nabla f(\boldsymbol{y})-\nabla f(\boldsymbol{x})\|^2
\le
f(\boldsymbol{y})-f(\boldsymbol{x})-\langle \nabla f(\boldsymbol{x}),\boldsymbol{y}-\boldsymbol{x}\rangle,
\]
which is exactly \eqref{eq:coercive-12}.
\end{proof}
\end{lemma}

\begin{lemma} \label{lemma_lsum}
\begin{align}
    \frac{1}{2Ln}\sum_{i=1}^n
\bigl\|\nabla f_i(\boldsymbol{\phi}_i,\boldsymbol{\lambda})
    - \nabla f_i(\boldsymbol{\phi}_i^{\ast},\boldsymbol{\lambda}^{\ast})\bigr\|^2 \leq
    f(\boldsymbol{\phi},\boldsymbol{\lambda}) - f(\boldsymbol{\phi}^{\ast},\boldsymbol{\lambda}^{\ast}), \quad \forall \boldsymbol{\phi}, \boldsymbol{\lambda}
\end{align}
\begin{proof}
According to \eqref{eq:coercive-12} and the fact that $(\boldsymbol{\phi}^{\ast},\boldsymbol{\lambda}^{\ast})$ is the minimizer of $f(\boldsymbol{\phi},\boldsymbol{\lambda})$, we have
\begin{align*}
\frac{1}{2L}\,\bigl\|\nabla f_i(\boldsymbol{\phi}_i,\boldsymbol{\lambda})
    - \nabla f_i(\boldsymbol{\phi}_i^{\ast},\boldsymbol{\lambda}^{\ast})\bigr\|^2
\le
 f_i(\boldsymbol{\phi}_i,\boldsymbol{\lambda})
    - f_i(\boldsymbol{\phi}_i^{\ast},\boldsymbol{\lambda}^{\ast}), \quad \forall \, i \in [n]. 
\end{align*}
Sum them up, we have
\begin{align*}
    \frac{1}{2Ln}\sum_{i=1}^n
\bigl\|\nabla f_i(\boldsymbol{\phi}_i,\boldsymbol{\lambda})
    - \nabla f_i(\boldsymbol{\phi}_i^{\ast},\boldsymbol{\lambda}^{\ast})\bigr\|^2 \leq  f(\boldsymbol{\phi},\boldsymbol{\lambda})
    - f(\boldsymbol{\phi}^{\ast},\boldsymbol{\lambda}^{\ast}), \; \forall \boldsymbol{\phi}, \boldsymbol{\lambda}.
\end{align*}
\end{proof}
\end{lemma}

\begin{lemma} \label{lemma:ht}
Algorithm~\ref{alg:meta} and \ref{alg:oracle_1} satisfies
\begin{align}
\boldsymbol{h}^t
=
\frac{1}{n}
\sum_{i=1}^n
\eta\,\boldsymbol{u}_i^{t},
\quad
\mathbb{E}\bigl\|\boldsymbol{h}^{t}\bigr\|^2
\;\le\;
\eta^2\,G_t.
\end{align}
\begin{proof}
By Algorithm~\ref{alg:meta} and \ref{alg:oracle_1}, we have
\begin{align*}
\boldsymbol{h}^t
&=
\boldsymbol{h}^{t-1}
+
\frac{1}{|S_t|}
\sum_{i\in S_t}
\bigl(\boldsymbol{\lambda}_i^t-\boldsymbol{\lambda}_0^{t-1}\bigr),\\
&=
\boldsymbol{h}^{t-1}
+
\frac{1}{|S_t|}
\sum_{i\in S_t}
\eta\,\bigl(\boldsymbol{u}_i^{t}-\boldsymbol{u}_i^{t-1}\bigr).
\end{align*}
Since
\(
\boldsymbol{h}^0=\frac{1}{n}\sum_{i=1}^n\boldsymbol{\mu}_i^0
\)
and
\(
\boldsymbol{u}_i^{0}
=
-\,\nabla_{\boldsymbol{\lambda}_i}
f_i(\boldsymbol{\phi}_i^{0},\boldsymbol{\lambda}_i^{0})
\)
for all $i$, we obtain the first equality.
Next, to prove the second inequality, we have
\begin{align*}
\mathbb{E}\bigl\|\boldsymbol{h}^{t}\bigr\|^2
&=
\mathbb{E}
\Bigl\|
\frac{1}{n}
\sum_{i=1}^n
\eta\,\boldsymbol{u}_i^{t}
\Bigr\|^2,\\
&=
\eta^2\,
\mathbb{E}
\Bigl\|
\frac{1}{n}
\sum_{i=1}^n
\Bigl(
\nabla_{\boldsymbol{\lambda}_i}
f_i(\boldsymbol{\phi}_i^{t},\boldsymbol{\lambda}_i^{t})
-
\nabla_{\boldsymbol{\lambda}_i}
f_i(\boldsymbol{\phi}_i^{\ast},\boldsymbol{\lambda}^{\ast})
\Bigr)
\Bigr\|^2,\\[4pt]
&\le
\frac{\eta^2}{n}
\sum_{i=1}^n
\mathbb{E}
\Bigl\|
\nabla_{\boldsymbol{\lambda}_i}
f_i(\boldsymbol{\phi}_i^{t},\boldsymbol{\lambda}_i^{t})
-
\nabla_{\boldsymbol{\lambda}_i}
f_i(\boldsymbol{\phi}_i^{\ast},\boldsymbol{\lambda}^{\ast})
\Bigr\|^2,\\[4pt]
&\le
\eta^2\,G_t.
\end{align*}
\end{proof}
\end{lemma}
\begin{lemma}[Unbiasedness of Uniform Sampling]
\label{lemma:unbiased_sampling}
Let $\mathcal{F}_{t-1}$ denote the filtration generated by all randomness up to iteration $t-1$.
Assume that at iteration $t$, the index set $S_t \subset [n]$ is sampled uniformly at random
from all subsets of $[n]$ with fixed cardinality $|S_t|=m$, independently of $\mathcal{F}_{t-1}$.
Let $\{X_i^t\}_{i=1}^n$ be a collection of random vectors that are $\mathcal{F}_{t-1}$-measurable.
Then,
\begin{align}
\mathbb{E}\!\left[
\frac{1}{|S_t|}
\sum_{i\in S_t} X_i^t
\,\middle|\, \mathcal{F}_{t-1}
\right]
=
\frac{1}{n}\sum_{i=1}^n X_i^t,
\end{align}
and consequently,
\begin{align}
\mathbb{E}\!\left[
\frac{1}{|S_t|}
\sum_{i\in S_t} X_i^t
\right]
=
\frac{1}{n}\sum_{i=1}^n \mathbb{E}[X_i^t].
\end{align}
\begin{proof}
Conditioned on $\mathcal{F}_{t-1}$, the vectors $\{X_i^t\}_{i=1}^n$ are deterministic.
Since $S_t$ is sampled uniformly at random from all subsets of $[n]$ of size $m$,
each index $i\in[n]$ is included in $S_t$ with probability $m/n$.
Therefore,
\[
\mathbb{E}\!\left[
\frac{1}{|S_t|}
\sum_{i\in S_t} X_i^t
\,\middle|\, \mathcal{F}_{t-1}
\right]
=
\frac{1}{m}
\sum_{i=1}^n
\mathbb{P}(i\in S_t)\,X_i^t
=
\frac{1}{n}\sum_{i=1}^n X_i^t.
\]
Taking expectation over $\mathcal{F}_{t-1}$ yields the desired result.
\end{proof}
\end{lemma}

\begin{lemma} \label{lemma:gamma_expectation}
    Algorithm~\ref{alg:meta} and \ref{alg:oracle_1} satisfies
\begin{align}
    \mathbb{E}\!\left[\,\boldsymbol{\gamma}^{t}-\boldsymbol{\gamma}^{t-1}\right]=
    \frac{\eta}{n}\sum_{i=1}^n
\mathbb{E}\!\left[
-\nabla_{\!\boldsymbol{\lambda}_i}
f_i\!\left(\tilde{\boldsymbol{\phi}}_i^{t}, \tilde{\boldsymbol{\lambda}}_i^{t}\right)
\right],
\end{align}
\begin{proof}
\begin{align*}
\mathbb{E}\!\left[\,\boldsymbol{\gamma}^{t}-\boldsymbol{\gamma}^{t-1}\right]
&=
\mathbb{E}\!\left[
\left(
\frac{1}{m}\sum_{i\in\mathcal{S}_t}\boldsymbol{\lambda}_i^{t}
\right)
-\boldsymbol{\lambda}_0^{t-1}
+\boldsymbol{h}^{t-1}
\right], \quad \bigl(\text{by \eqref{eq:gamma_def}}\bigr)\\[4pt]
&=
\mathbb{E}\!\left[
\frac{\eta}{m}\sum_{i\in\mathcal{S}_t}
\left(
\boldsymbol{\mu}_i^{t}
-\boldsymbol{\mu}_i^{t-1}
+\frac{1}{\eta}\boldsymbol{h}^{t-1}
\right)
\right],\\[4pt]
&=
\frac{\eta}{m}\mathbb{E}\!\left[
\sum_{i\in\mathcal{S}_t}
\left(
\nabla_{\!\boldsymbol{\lambda}_i} f_i(\boldsymbol{\phi}_i^{t-1},\boldsymbol{\lambda}_i^{t-1})
-
\nabla_{\!\boldsymbol{\lambda}_i} f_i(\boldsymbol{\phi}_i^{t},\boldsymbol{\lambda}_i^{t})
+\frac{1}{\eta}\boldsymbol{h}^{t-1}
\right)
\right], \quad \bigl(\text{by \eqref{eq:dual_update}}\bigr)\\[4pt]
&=
\frac{\eta}{m}\mathbb{E}\!\left[
\sum_{i\in\mathcal{S}_t}
\left(
\nabla_{\!\boldsymbol{\lambda}_i} f_i(\boldsymbol{\phi}_i^{t-1},\boldsymbol{\lambda}_i^{t-1})
-
\nabla_{\!\boldsymbol{\lambda}_i} f_i(\tilde{\boldsymbol{\phi}}_i^{t},\tilde{\boldsymbol{\lambda}}_i^{t})
+\frac{1}{\eta}\boldsymbol{h}^{t-1}
\right)
\right], \quad \bigl(\text{by \eqref{eq:true_equal_virtual}}\bigr)\\[4pt]
&=
\frac{\eta}{n}\mathbb{E}\!\left[
\sum_{i=1}^n
\left(
\nabla_{\!\boldsymbol{\lambda}_i} f_i(\boldsymbol{\phi}_i^{t-1},\boldsymbol{\lambda}_i^{t})
-
\nabla_{\!\boldsymbol{\lambda}_i} f_i(\tilde{\boldsymbol{\phi}}_i^{t},\tilde{\boldsymbol{\lambda}}_i^{t-1})
+\frac{1}{\eta}\boldsymbol{h}^{t-1}
\right)
\right],\quad \bigl(\text{by Lemma \ref{lemma:unbiased_sampling}}\bigr)\\[4pt]
&=
\frac{\eta}{n}\sum_{i=1}^n
\mathbb{E}\!\left[
-\nabla_{\!\boldsymbol{\lambda}_i}
f_i\!\left(\tilde{\boldsymbol{\phi}}_i^{t}, \tilde{\boldsymbol{\lambda}}_i^{t}\right)
\right]. \quad \bigl(\text{by Lemma \ref{lemma:ht} and \eqref{eq:dual_update}}\bigr)\\
\end{align*}
\end{proof}
\end{lemma}

\begin{lemma} \label{lemma:gamma_bound}
    Algorithm~\ref{alg:meta} and \ref{alg:oracle_1} satisfies 
\begin{align} \label{eq:gamma_bound}
  \mathbb{E}\!\left\|\,\boldsymbol{\gamma}^{t}-\boldsymbol{\gamma}^{t-1}\right\|^{2} \leq \epsilon_t 
\end{align}

\begin{proof}
    \begin{align*}
        \mathbb{E}\!\left\|\,\boldsymbol{\gamma}^{t}-\boldsymbol{\gamma}^{t-1}\right\|^{2}
& =
\mathbb{E}\;\Bigl\|
\frac{1}{|S_t|}\sum_{i\in S_{t}}
\left(\boldsymbol{\lambda}_{i}^{t}-\boldsymbol{\gamma}^{t-1}\right)
\Bigr\|^{2},\\
&\le
\frac{1}{|S_t|}\,
\mathbb{E}\!\left[
\sum_{i\in S_{t}}
\left\|\boldsymbol{\lambda}_{i}^{t}-\boldsymbol{\gamma}^{t-1}\right\|^{2}
\right], \quad \bigl(\text{by Jensen's inequality}\bigr)\\
&=
\frac{1}{|S_t|}\,
\mathbb{E}\!\left[
\sum_{i\in S_{t}}
\left\|\tilde{\boldsymbol{\lambda}}_{i}^{t}-\boldsymbol{\gamma}^{t-1}\right\|^{2}
\right]\\
& =
\frac{1}{n}
\sum_{i=1}^n
\mathbb{E}\!\left\|
\tilde{\boldsymbol{\lambda}}_{i}^{t}-\boldsymbol{\gamma}^{t-1}
\right\|^{2},  \quad \bigl(\text{by Lemma \ref{lemma:unbiased_sampling}}\bigr)\\
&=
\epsilon_{t}.
    \end{align*}
\end{proof}
\end{lemma}


\begin{lemma}\label{lemma:sum_norm_bound}
For any vectors $\{\boldsymbol{x}_k\}_{k=1}^m$ in a Euclidean space,
\begin{align}
\Bigl\|\sum_{k=1}^m \boldsymbol{x}_k\Bigr\|^2
\;\le\;
m\sum_{k=1}^m \|\boldsymbol{x}_k\|^2.
\end{align}
\end{lemma}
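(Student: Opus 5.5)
The inequality $\|\sum_{k=1}^m \boldsymbol{x}_k\|^2 \le m\sum_{k=1}^m\|\boldsymbol{x}_k\|^2$ is elementary. I will derive it from the Cauchy--Schwarz inequality in $\mathbb{R}^m$, or equivalently from the convexity of $\|\cdot\|^2$ (Jensen). The cleanest route is Jensen's inequality applied to the uniform average. Since $\boldsymbol{v}\mapsto\|\boldsymbol{v}\|^2$ is convex,
\[
\Bigl\|\frac{1}{m}\sum_{k=1}^m \boldsymbol{x}_k\Bigr\|^2 \le \frac{1}{m}\sum_{k=1}^m\|\boldsymbol{x}_k\|^2 .
\]
Multiplying both sides by $m^2$ gives exactly the claim.

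To avoid relying on Jensen as a black box, I would instead carry out the direct argument. First, expand the left-hand side as $\sum_{k}\|\boldsymbol{x}_k\|^2 + \sum_{k\neq l}\langle \boldsymbol{x}_k,\boldsymbol{x}_l\rangle$. Next, bound each cross term by $\langle \boldsymbol{x}_k,\boldsymbol{x}_l\rangle \le \tfrac12(\|\boldsymbol{x}_k\|^2+\|\boldsymbol{x}_l\|^2)$, which is just $\|\boldsymbol{x}_k-\boldsymbol{x}_l\|^2\ge 0$ rearranged. There are $m(m-1)$ ordered pairs with $k\neq l$, and each index $k$ appears in $2(m-1)$ of them. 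Hence the cross sum is at most $(m-1)\sum_k\|\boldsymbol{x}_k\|^2$. Adding the diagonal part gives the total bound $m\sum_k\|\boldsymbol{x}_k\|^2$.

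Equivalently, one can apply the triangle inequality to get $\|\sum_k \boldsymbol{x}_k\|\le \sum_k \|\boldsymbol{x}_k\|$, and then apply Cauchy--Schwarz to the vectors $(1,\dots,1)$ and $(\|\boldsymbol{x}_1\|,\dots,\|\boldsymbol{x}_m\|)$ in $\mathbb{R}^m$. Squaring the result gives the claim.

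There is no real obstacle here. The only step requiring any care is the bookkeeping of the cross terms, namely counting how many times each $\|\boldsymbol{x}_k\|^2$ appears. The Cauchy--Schwarz route avoids that counting entirely, so it is the version I would write out.
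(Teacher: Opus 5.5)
Your proposal is correct. The route you say you would write out is exactly the paper's proof: triangle inequality $\|\sum_k \boldsymbol{x}_k\|\le\sum_k\|\boldsymbol{x}_k\|$, then Cauchy--Schwarz against the all-ones vector in $\mathbb{R}^m$, then squaring. The Jensen and cross-term-expansion alternatives you sketch are also valid.
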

\begin{proof}
By the Cauchy--Schwarz inequality,
\begin{align*}
\Bigl\|\sum_{k=1}^m \boldsymbol{x}_k\Bigr\|
&\le
\sum_{k=1}^m \|\boldsymbol{x}_k\|
\;=\;
\boldsymbol{1}^{\top}
\begin{bmatrix}
\|\boldsymbol{x}_1\|\\
\vdots\\
\|\boldsymbol{x}_m\|
\end{bmatrix}
\;\le\;
\|\boldsymbol{1}\|\,
\left\|
\begin{bmatrix}
\|\boldsymbol{x}_1\|\\
\vdots\\
\|\boldsymbol{x}_m\|
\end{bmatrix}
\right\|
\;=\;
\sqrt{m}\left(\sum_{k=1}^m \|\boldsymbol{x}_k\|^2\right)^{1/2}.
\end{align*}
Squaring both sides yields
\[
\Bigl\|\sum_{k=1}^m \boldsymbol{x}_k\Bigr\|^2
\;\le\;
m\sum_{k=1}^m \|\boldsymbol{x}_k\|^2,
\]
which completes the proof.
\end{proof}

\begin{lemma}
\begin{align} \label{eq:epsilon_bound}
(1-4L^2\eta^2)\epsilon_t
\;\le\;
8\eta^2 G_{t-1}
+
8L\eta^2
\Bigl(
f(\tilde{\boldsymbol{\phi}}^{t},\boldsymbol{\gamma}^{t-1})
-
f(\boldsymbol{\phi}^\ast,\boldsymbol{\lambda}^\ast)
\Bigr)
.
\end{align}
\begin{proof}
\begin{align*}
\epsilon_t
&=
\frac{1}{n}\sum_{i=1}^n
\mathbb{E}\!\left\|
\tilde{\boldsymbol{\lambda}}_i^{t}
-
\boldsymbol{\gamma}^{t-1}
\right\|^2
=
\frac{1}{n}\sum_{i=1}^n
\mathbb{E}\!\left\|
\tilde{\boldsymbol{\lambda}}_i^{t}
-
\boldsymbol{\lambda}_0^{t-1}
+
\boldsymbol{h}^{t-1}
\right\|^2,\\
&=
\frac{\eta^2}{n}
\sum_{i=1}^n
\mathbb{E}\!\left\|
\nabla_{\boldsymbol{\lambda}_i}
f_i(\boldsymbol{\phi}_i^{t-1},\boldsymbol{\lambda}_i^{t-1})
-
\nabla_{\boldsymbol{\lambda}_i}
f_i(\tilde{\boldsymbol{\phi}}_i^{t},\tilde{\boldsymbol{\lambda}}_i^{t})
+
\frac{\boldsymbol{h}^{t-1}}{\eta}
\right\|^2, \quad \bigl(\text{by \eqref{eq:true_equal_virtual}}\bigr)\\
&=
\frac{\eta^2}{n}
\sum_{i=1}^n
\mathbb{E}\Bigl\|
\nabla_{\boldsymbol{\lambda}_i}
f_i(\boldsymbol{\phi}_i^{t-1},\boldsymbol{\lambda}_i^{t-1})
-
\nabla_{\boldsymbol{\lambda}_i}
f_i(\boldsymbol{\phi}_i^\ast,\boldsymbol{\lambda}^\ast)
\\[-2pt]
&\qquad\qquad\qquad
+
\nabla_{\boldsymbol{\lambda}_i}
f_i(\boldsymbol{\phi}_i^\ast,\boldsymbol{\lambda}^\ast)
-
\nabla_{\boldsymbol{\lambda}_i}
f_i(\tilde{\boldsymbol{\phi}}_i^{t},\boldsymbol{\gamma}^{t-1})
\\[-2pt]
&\qquad\qquad\qquad
+
\nabla_{\boldsymbol{\lambda}_i}
f_i(\tilde{\boldsymbol{\phi}}_i^{t},\boldsymbol{\gamma}^{t-1})
-
\nabla_{\boldsymbol{\lambda}_i}
f_i(\tilde{\boldsymbol{\phi}}_i^{t},\tilde{\boldsymbol{\lambda}}_i^{t})
+
\frac{\boldsymbol{h}^{t-1}}{\eta}
\Bigr\|^2.\\
\end{align*}
By Lemma \ref{lemma:sum_norm_bound}, we have
\begin{align*}
\epsilon_t
&\le
\frac{4\eta^2}{n}
\sum_{i=1}^n
\mathbb{E}\!\left\|
\nabla_{\boldsymbol{\lambda}_i}
f_i(\boldsymbol{\phi}_i^{t-1},\boldsymbol{\lambda}_i^{t-1})
-
\nabla_{\boldsymbol{\lambda}_i}
f_i(\boldsymbol{\phi}_i^\ast,\boldsymbol{\lambda}^\ast)
\right\|^2\\
&\quad+
\frac{4\eta^2}{n}
\sum_{i=1}^n
\mathbb{E}\!\left\|
\nabla_{\boldsymbol{\lambda}_i}
f_i(\boldsymbol{\phi}_i^\ast,\boldsymbol{\lambda}^\ast)
-
\nabla_{\boldsymbol{\lambda}_i}
f_i(\tilde{\boldsymbol{\phi}}_i^{t},\boldsymbol{\gamma}^{t-1})
\right\|^2\\
&\quad+
\frac{4\eta^2}{n}
\sum_{i=1}^n
\mathbb{E}\!\left\|
\nabla_{\boldsymbol{\lambda}_i}
f_i(\tilde{\boldsymbol{\phi}}_i^{t},\boldsymbol{\gamma}^{t-1})
-
\nabla_{\boldsymbol{\lambda}_i}
f_i(\tilde{\boldsymbol{\phi}}_i^{t},\tilde{\boldsymbol{\lambda}}_i^{t})
\right\|^2
+
4\|\boldsymbol{h}^{t-1}\|^2, \quad \left(\text{by Lemma~\ref{lemma:sum_norm_bound}} \right)\\
&\le
\frac{4\eta^2}{n}
\sum_{i=1}^n
\mathbb{E}\!\left\|
\nabla_{\boldsymbol{\lambda}_i}
f_i(\boldsymbol{\phi}_i^{t-1},\boldsymbol{\lambda}_i^{t-1})
-
\nabla_{\boldsymbol{\lambda}_i}
f_i(\boldsymbol{\phi}_i^\ast,\boldsymbol{\lambda}^\ast)
\right\|^2\\
&\quad+
\frac{4\eta^2}{n}
\sum_{i=1}^n
\mathbb{E}\!\left\|
\nabla_{\boldsymbol{\lambda}_i}
f_i(\boldsymbol{\phi}_i^\ast,\boldsymbol{\lambda}^\ast)
-
\nabla_{\boldsymbol{\lambda}_i}
f_i(\tilde{\boldsymbol{\phi}}_i^{t},\boldsymbol{\gamma}^{t-1})
\right\|^2\\
&\quad+
\frac{4\eta^2}{n}
\sum_{i=1}^n
\mathbb{E}\!\left\|
\nabla_{\boldsymbol{\lambda}_i}
f_i(\tilde{\boldsymbol{\phi}}_i^{t},\boldsymbol{\gamma}^{t-1})
-
\nabla_{\boldsymbol{\lambda}_i}
f_i(\tilde{\boldsymbol{\phi}}_i^{t},\tilde{\boldsymbol{\lambda}}_i^{t})
\right\|^2
+
4\eta^2\,G_{t-1}, \quad \bigl(\text{by Lemma \ref{lemma:ht}}\bigr)\\
&\le
8\eta^2\,G_{t-1}
+
4L^2\eta^2\,\epsilon_t
+
8L\eta^2
\Bigl(
f(\tilde{\boldsymbol{\phi}}^{t},\boldsymbol{\gamma}^{t-1})
-
f(\boldsymbol{\phi}^\ast,\boldsymbol{\lambda}^\ast)
\Bigr). \quad \bigl(\text{by Lemma \ref{lemma_lsum}}\bigr)
\end{align*}
\end{proof}
\end{lemma}

\begin{lemma}
\begin{align} \label{eq:G_bound}
    G_t \leq \Bigl(1-\frac{m}{n}\Bigr)\,G_{t-1}
+
\frac{2L^2m}{n}\,\epsilon_t
+
\frac{4Lm}{n}\,
\mathbb{E}\Bigl[
f(\tilde{\boldsymbol{\phi}}^t,\boldsymbol{\gamma}^{t-1})
-
f(\boldsymbol{\phi}^\ast,\boldsymbol{\lambda}^\ast)
\Bigr].
\end{align}
\begin{proof}
\begin{align*}
    G_{t}
&=
\frac{1}{n}\sum_{i=1}^n
\mathbb{E}
\bigl\|
\nabla
f_i(\boldsymbol{\phi}_i^{t},\boldsymbol{\lambda}_i^{t})
-
\nabla
f_i(\boldsymbol{\phi}_i^\ast,\boldsymbol{\lambda}^\ast)
\bigr\|^2,
\quad \bigl(\text{by definition of } G_{t}\bigr)\\
&=
\Bigl(1-\frac{m}{n}\Bigr)
\cdot
\frac{1}{n}\sum_{i=1}^n
\mathbb{E}
\bigl\|
\nabla f_i(\boldsymbol{\phi}_i^{t-1},\boldsymbol{\lambda}^{t-1})
-
\nabla f_i(\boldsymbol{\phi}_i^\ast,\boldsymbol{\lambda}^\ast)
\bigr\|^2\\
&\quad+
\frac{m}{n}\cdot
\frac{1}{n}\sum_{i=1}^n
\mathbb{E}
\bigl\|
\nabla f_i(\tilde{\boldsymbol{\phi}}_i^{t},\tilde{\boldsymbol{\lambda}}_i^{t})
-
\nabla
f_i(\boldsymbol{\phi}_i^\ast,\boldsymbol{\lambda}^\ast)
\bigr\|^2,
\quad \bigl(\text{by uniform sampling of } S_t,\ |S_t|=m\bigr)\\
&=
\Bigl(1-\frac{m}{n}\Bigr)\,G_{t-1}
+
\frac{m}{n}\cdot
\frac{1}{n}\sum_{i=1}^n
\mathbb{E}
\bigl\|
\nabla f_i(\tilde{\boldsymbol{\phi}}_i^{t},\tilde{\boldsymbol{\lambda}}_i^{t})
-
\nabla f_i(\tilde{\boldsymbol{\phi}}_i^{t},\boldsymbol{\gamma}^{t-1})
+
\nabla f_i(\tilde{\boldsymbol{\phi}}_i^{t},\boldsymbol{\gamma}^{t-1})
-
\nabla
f_i(\boldsymbol{\phi}_i^\ast,\boldsymbol{\lambda}^\ast)
\bigr\|^2,\\
&\le
\Bigl(1-\frac{m}{n}\Bigr)\,G_{t-1}
+
\frac{2m}{n^2}
\sum_{i=1}^n
\mathbb{E}
\bigl\|
\nabla
f_i(\tilde{\boldsymbol{\phi}}_i^{t},\tilde{\boldsymbol{\lambda}}_i^{t})
-
\nabla
f_i(\tilde{\boldsymbol{\phi}}_i^{t},\boldsymbol{\gamma}^{t-1})
\bigr\|^2\\
&\quad+
\frac{2m}{n^2}
\sum_{i=1}^n
\mathbb{E}
\bigl\|
\nabla
f_i(\tilde{\boldsymbol{\phi}}_i^{t},\boldsymbol{\gamma}^{t-1})
-
\nabla
f_i(\boldsymbol{\phi}_i^\ast,\boldsymbol{\lambda}^\ast)
\bigr\|^2,
\quad \bigl(\text{by Lemma \ref{lemma:sum_norm_bound}}\bigr)\\
&\le
\Bigl(1-\frac{m}{n}\Bigr)\,G_{t-1}
+
\frac{2L^2m}{n^2}
\sum_{i=1}^n
\mathbb{E}
\bigl\|
\tilde{\boldsymbol{\lambda}}_i^{t}
-
\boldsymbol{\gamma}^{t-1}
\bigr\|^2
+
\frac{4Lm}{n}
\mathbb{E}\Bigl[
f(\tilde{\boldsymbol{\phi}}^t,\boldsymbol{\gamma}^{t-1})
-
f(\boldsymbol{\phi}^\ast,\boldsymbol{\lambda}^\ast)
\Bigr],
\quad \bigl(\text{by Lemma \ref{lemma_lsum}}\bigr)\\
&=
\Bigl(1-\frac{m}{n}\Bigr)\,G_{t-1}
+
\frac{2L^2m}{n}\,\epsilon_t
+
\frac{4Lm}{n}
\mathbb{E}\Bigl[
f(\tilde{\boldsymbol{\phi}}^t,\boldsymbol{\gamma}^{t-1})
-
f(\boldsymbol{\phi}^\ast,\boldsymbol{\lambda}^\ast)
\Bigr].
\end{align*}
\end{proof}
\end{lemma}

\begin{lemma} \label{lemma:gamma_lambda}
\begin{align}
  \label{eq:gamma_lambda}
\mathbb{E}\!\left\|\boldsymbol{\gamma}^t-\boldsymbol{\lambda}^\ast\right\|^2    \leq \mathbb{E}\!\left\|\boldsymbol{\gamma}^{t-1}-\boldsymbol{\lambda}^\ast\right\|^2
-
2\eta\,
\mathbb{E}\!\left[
f(\tilde{\boldsymbol{\phi}}^{t},\boldsymbol{\gamma}^{t-1})
-
f(\boldsymbol{\phi}^\ast,\boldsymbol{\lambda}^\ast)
\right]
+
\eta L\,\epsilon_t
+
\mathbb{E}\!\left\|\boldsymbol{\gamma}^t-\boldsymbol{\gamma}^{t-1}\right\|^2.
\end{align}
\begin{proof}
\begin{align*}
\mathbb{E}\!\left\|\boldsymbol{\gamma}^t-\boldsymbol{\lambda}^\ast\right\|^2
&=
\mathbb{E}
\left\|
\boldsymbol{\gamma}^{t-1}-\boldsymbol{\lambda}^\ast
+
\boldsymbol{\gamma}^t-\boldsymbol{\gamma}^{t-1}
\right\|^2, \\[4pt]
&=
\mathbb{E}\!\left\|\boldsymbol{\gamma}^{t-1}-\boldsymbol{\lambda}^\ast\right\|^2
+
2\,\mathbb{E}
\big\langle
\boldsymbol{\gamma}^{t-1}-\boldsymbol{\lambda}^\ast,\;
\boldsymbol{\gamma}^t-\boldsymbol{\gamma}^{t-1}
\big\rangle
+
\mathbb{E}\!\left\|\boldsymbol{\gamma}^t-\boldsymbol{\gamma}^{t-1}\right\|^2, \\[4pt]
&=
\mathbb{E}\!\left\|\boldsymbol{\gamma}^{t-1}-\boldsymbol{\lambda}^\ast\right\|^2
+
\frac{2\eta}{n}
\sum_{i=1}^n
\mathbb{E}
\Big[
\big\langle
\boldsymbol{\gamma}^{t-1}-\boldsymbol{\lambda}^\ast,\;
-\nabla_{\boldsymbol{\lambda}_i}
f_i(\tilde{\boldsymbol{\phi}}_i^{t},\tilde{\boldsymbol{\lambda}}_i^{t})
\big\rangle
\Big]
+
\mathbb{E}\!\left\|\boldsymbol{\gamma}^t-\boldsymbol{\gamma}^{t-1}\right\|^2, \quad \bigl(\text{by Lemma \ref{lemma:gamma_expectation}}\bigr)\\[4pt]
&=
\mathbb{E}\!\left\|\boldsymbol{\gamma}^{t-1}-\boldsymbol{\lambda}^\ast\right\|^2
+
\frac{2\eta}{n}
\sum_{i=1}^n
\mathbb{E} \Big[
\big\langle
\boldsymbol{\gamma}^{t-1}-\boldsymbol{\lambda}^\ast,\;
-\nabla_{\boldsymbol{\lambda}_i}
f_i(\tilde{\boldsymbol{\phi}}_i^{t},\tilde{\boldsymbol{\lambda}}_i^{t})
\big\rangle + 
\big\langle
\tilde{\boldsymbol{\phi}}_i^{t}-\boldsymbol{\phi}_i^\ast,\;
-\nabla_{\boldsymbol{\phi}_i}
f_i(\tilde{\boldsymbol{\phi}}_i^{t},\tilde{\boldsymbol{\lambda}}_i^{t})
\big\rangle
\Big]
\\[4pt]
&+
\mathbb{E}\!\left\|\boldsymbol{\gamma}^t-\boldsymbol{\gamma}^{t-1}\right\|^2, \quad \bigl(\text{since }\boldsymbol{\phi}_i \text{ is minimized}\bigr)\\[4pt]
&\le
\mathbb{E}\!\left\|\boldsymbol{\gamma}^{t-1}-\boldsymbol{\lambda}^\ast\right\|^2
+
\frac{2\eta}{n}
\sum_{i=1}^n
\mathbb{E}
\Big[
f_i(\boldsymbol{\phi}_i^\ast,\boldsymbol{\lambda}^\ast)
-
f_i(\tilde{\boldsymbol{\phi}}_i^{t},\boldsymbol{\gamma}^{t-1})
+
\frac{L}{2}
\big\|
\tilde{\boldsymbol{\lambda}}_i^{t}
-
\boldsymbol{\gamma}^{t-1}
\big\|^2
\Big]
+
\mathbb{E}\!\left\|\boldsymbol{\gamma}^t-\boldsymbol{\gamma}^{t-1}\right\|^2, \quad \bigl(\text{by \eqref{eq:coercive-11}}\bigr)\\[4pt]
&=
\mathbb{E}\!\left\|\boldsymbol{\gamma}^{t-1}-\boldsymbol{\lambda}^\ast\right\|^2
-
2\eta\,
\mathbb{E}\!\left[
f(\tilde{\boldsymbol{\phi}}^{t},\boldsymbol{\gamma}^{t-1})
-
f(\boldsymbol{\phi}^\ast,\boldsymbol{\lambda}^\ast)
\right]
+
\eta L\,\epsilon_t
+
\mathbb{E}\!\left\|\boldsymbol{\gamma}^t-\boldsymbol{\gamma}^{t-1}\right\|^2 .
\end{align*}
\end{proof}
\end{lemma}


\begin{lemma} \label{lemma:main_lemma}
For convex and $L$-smooth $\{f_i(\cdot)\}_{i=1}^n$ functions, Algorithm~\ref{alg:meta} and \ref{alg:oracle_1} satisfies
\begin{align}
\mathbb{E}\!\left\|\boldsymbol{\gamma}^t-\boldsymbol{\lambda}^\ast\right\|^2
+c_1\,G_t
\leq
\mathbb{E}\!\left\|\boldsymbol{\gamma}^{t-1}-\boldsymbol{\lambda}^\ast\right\|^2
+c_1\,G_{t-1}
-c_2\,
\mathbb{E}\!\left[
f(\tilde{\boldsymbol{\phi}}^t,\boldsymbol{\gamma}^{t-1})
-
f(\boldsymbol{\phi}^\ast,\boldsymbol{\lambda}^\ast)
\right].
\end{align}
where $c_1 = \frac{n}{m}\Bigl(\frac{8\eta^2(1+\eta L)}{1-20\eta^2L^2}\Bigr)$ and $c_2 = \frac{2\eta - 40\eta^2L - 80\eta^3L^2}{1-20\eta^2L^2}.$

\begin{proof}
By adding \eqref{eq:gamma_lambda}, \eqref{eq:gamma_bound}, rescaled \eqref{eq:epsilon_bound} and rescaled \eqref{eq:G_bound}, 
we can end up with the desired result.
\end{proof}
\end{lemma}


\subsection{Strongly Convex Anlaysis}
In the strongly convex case, we make the following two assumptions.
\begin{assumption} 
  Each $\nabla f_i(\cdot)$ is $L$-Lipschitz with $L\in (0,\infty), i.e., $ for all $i\in[n]$ and all
$\boldsymbol{x},\boldsymbol{y}$ in the domain,
  \begin{align} 
      \|\nabla f_i(\boldsymbol{x}) - \nabla f_i(\boldsymbol{y})\| \leq L\|\boldsymbol{x}-\boldsymbol{y}\|.
  \end{align}
\end{assumption}
\begin{assumption}
Each $f_i(\cdot)$ is $\mu$-strongly convex. That is, for all $i\in[n]$ and all
$\boldsymbol{x},\boldsymbol{y}$ in the domain,
\begin{align}
f_i(\boldsymbol{y})
\ge
f_i(\boldsymbol{x})
+
\left\langle \nabla f_i(\boldsymbol{x}),\, \boldsymbol{y}-\boldsymbol{x}\right\rangle
+
\frac{\mu}{2}\|\boldsymbol{y}-\boldsymbol{x}\|^2.
\end{align}
\end{assumption}

\begin{theorem}
\label{thm:converge1.2}
For $\mu$-strongly convex and $L$-smooth functions $\{f_i(\cdot)\}_{i=1}^n$.
If we set the step-size $\eta \leq \min\left\{
\frac{1}{200L},\;
\frac{1}{2\mu},\;
\frac{m}{8\mu(n-m)}
\right\}$ in Algorithm~\ref{alg:meta} and \ref{alg:oracle_1}, then the following convergence rate holds.

Define $r = 1+\mu\eta$ and
\[
\hat{\boldsymbol{\phi}}^{T}
:= \frac{1}{R}\sum_{t=1}^T r^{t-1}\tilde{\boldsymbol{\phi}}^{t},
\qquad
\hat{\boldsymbol{\gamma}}^{T-1}
:= \frac{1}{R}\sum_{t=1}^T r^{t-1}\boldsymbol{\gamma}^{t-1}.
\]
Then we have
\begin{align}
\mathbb{E}\!\left[
f\!\left(
\frac{1}{R}\sum_{t=1}^T r^{t-1}
(\tilde{\boldsymbol{\phi}}^{t},\boldsymbol{\gamma}^{t-1})
\right)
-
f(\boldsymbol{\phi}^\ast,\boldsymbol{\lambda}^\ast)
\right]
=
\,
O\!\left(\frac{1}{r^{T-1}}\left(
\frac{1}{\eta}\,\|\boldsymbol{\gamma}^{0}-\boldsymbol{\lambda}^\ast\|^2
+
\frac{n}{m}\,\eta\,G_0
\right)\right).
\end{align}
where $c_1$ and $c_2$ are defined in Lemma~\ref{lemma:main_strong}, and $G_0 = \frac{1}{n}\sum_{i=1}^n
\left\|
\nabla f_i(\boldsymbol{\phi}_i^*,\boldsymbol{\lambda}_i^*)
\right\|^2$.
\begin{proof}
Define $r = (1 + \mu\eta )$ and $R = \sum_{t=0}^{T-1} r^t $.
From Lemma \ref{lemma:main_strong}, for every $t\ge1$ we have
\begin{align*}
c_2 r^{t-1}\,
\mathbb{E}\!\left[
f(\tilde{\boldsymbol{\phi}}^{t},\boldsymbol{\gamma}^{t-1})
-
f(\boldsymbol{\phi}^\ast,\boldsymbol{\lambda}^\ast)
\right]
&\le r^{t-1}
\Bigl(
\mathbb{E}\!\left\|\boldsymbol{\gamma}^{t-1}-\boldsymbol{\lambda}^\ast\right\|^2
+c_1\,G_{t-1}
\Bigr)
- r^{t}
\Bigl(
\mathbb{E}\!\left\|\boldsymbol{\gamma}^{t}-\boldsymbol{\lambda}^\ast\right\|^2
+c_1\,G_{t}
\Bigr).
\end{align*}
Summing the above inequality over $t=1,\ldots,T$ yields the telescoping bound
\begin{align*}
c_2\sum_{t=1}^T r^{t-1}
\mathbb{E}\!\left[
f(\tilde{\boldsymbol{\phi}}^{t},\boldsymbol{\gamma}^{t-1})
-
f(\boldsymbol{\phi}^\ast,\boldsymbol{\lambda}^\ast)
\right]
&\le
\Bigl(
\mathbb{E}\!\left\|\boldsymbol{\gamma}^{0}-\boldsymbol{\lambda}^\ast\right\|^2
+c_1\,G_{0}
\Bigr)
- r^{T}
\Bigl(
\mathbb{E}\!\left\|\boldsymbol{\gamma}^{T}-\boldsymbol{\lambda}^\ast\right\|^2
+c_1\,G_{T}
\Bigr).
\end{align*}
When $\eta \leq \min\left\{
\frac{1}{200L},\;
\frac{1}{2\mu},\;
\frac{m}{8\mu(n-m)}
\right\}$ 
, $c_1$ and $c_2$ are positive, and clearly $G_T\ge0$. Hence
\begin{align*}
\frac{c_2}{R}\sum_{t=1}^T r^{t-1}
\mathbb{E}\!\left[
f(\tilde{\boldsymbol{\phi}}^{t},\boldsymbol{\gamma}^{t-1})
-
f(\boldsymbol{\phi}^\ast,\boldsymbol{\lambda}^\ast)
\right]
&\le \frac{1}{R}\Bigl(
\mathbb{E}\!\left\|\boldsymbol{\gamma}^{0}-\boldsymbol{\lambda}^\ast\right\|^2
+c_1\,G_{0} \Bigr).
\end{align*}
Dividing both sides by $c_2$ gives
\begin{align*}
\frac{1}{R}\sum_{t=1}^T r^{t-1}
\mathbb{E}\!\left[
f(\tilde{\boldsymbol{\phi}}^{t},\boldsymbol{\gamma}^{t-1})
-
f(\boldsymbol{\phi}^\ast,\boldsymbol{\lambda}^\ast)
\right]
&\le
\frac{1}{c_2R}\Bigl(
\mathbb{E}\!\left\|\boldsymbol{\gamma}^{0}-\boldsymbol{\lambda}^\ast\right\|^2
+c_1\,G_{0}
\Bigr).
\end{align*}
By Jensen's inequality and the convexity of $f(\boldsymbol{\phi},\boldsymbol{\lambda})$, we obtain
\begin{align*}
\mathbb{E}\!\left[
f\!\left(
\frac{1}{R}\sum_{t=1}^T r^{t-1}
(\tilde{\boldsymbol{\phi}}^{t},\boldsymbol{\gamma}^{t-1})
\right)
-
f(\boldsymbol{\phi}^\ast,\boldsymbol{\lambda}^\ast)
\right]
&\le
\frac{1}{R}\sum_{t=1}^T r^{t-1}
\mathbb{E}\!\left[
f(\tilde{\boldsymbol{\phi}}^{t},\boldsymbol{\gamma}^{t-1})
-
f(\boldsymbol{\phi}^\ast,\boldsymbol{\lambda}^\ast)
\right] \\
&\le
\frac{1}{c_2R}\Bigl(
\mathbb{E}\!\left\|\boldsymbol{\gamma}^{0}-\boldsymbol{\lambda}^\ast\right\|^2
+
c_1\,G_0
\Bigr).
\end{align*}

Define the weighted averaged iterates
\[
\hat{\boldsymbol{\phi}}^{T}
\;:=\;
\frac{1}{R}\sum_{t=1}^T r^{t-1}\tilde{\boldsymbol{\phi}}^{t},
\qquad
\hat{\boldsymbol{\gamma}}^{T-1}
\;:=\;
\frac{1}{R}\sum_{t=1}^T r^{t-1}\boldsymbol{\gamma}^{t-1}.
\]
Then the above inequality can be written as
\begin{align*}
\mathbb{E}\!\left[
f(\hat{\boldsymbol{\phi}}^{T},\hat{\boldsymbol{\gamma}}^{T-1})
-
f(\boldsymbol{\phi}^\ast,\boldsymbol{\lambda}^\ast)
\right]
\le
\frac{1}{c_2R}\Bigl(
\mathbb{E}\!\left\|\boldsymbol{\gamma}^{0}-\boldsymbol{\lambda}^\ast\right\|^2
+
c_1\,G_0
\Bigr) \leq \frac{1}{c_2 r^{T-1}}\Bigl(
\mathbb{E}\!\left\|\boldsymbol{\gamma}^{0}-\boldsymbol{\lambda}^\ast\right\|^2
+
c_1\,G_0
\Bigr).
\end{align*}
Plug in the value of $c_1$ and $c_2$ in Lemma~\ref{lemma:main_strong}, we have
\begin{align*}
\mathbb{E}\!\left[
f\!\left(
\frac{1}{R}\sum_{t=1}^T r^{t-1}
(\tilde{\boldsymbol{\phi}}^{t},\boldsymbol{\gamma}^{t-1})
\right)
-
f(\boldsymbol{\phi}^\ast,\boldsymbol{\lambda}^\ast)
\right]
=
\frac{1}{r^{T-1}}\,
O\!\left(
\frac{1}{\eta}\,\|\boldsymbol{\gamma}^{0}-\boldsymbol{\lambda}^\ast\|^2
+
\frac{n}{m}\,\eta\,G_0
\right).
\end{align*}
\end{proof}

\end{theorem}

We use the following lemmas to establish Lemma~\ref{lemma:main_strong}, which in turn implies Theorem~\ref{thm:converge1.2}.


\begin{lemma} \label{lemma:gamma_lambda_strong}
\begin{align}
-\frac{1}{n}\sum_{i=1}^n
\mathbb{E}\bigl\|
(\tilde{\boldsymbol{\phi}}_i^{t},\tilde{\boldsymbol{\lambda}}_i^{t})
-
(\boldsymbol{\phi}_i^\ast,\boldsymbol{\lambda}^\ast)
\bigr\|^2
\;\le\;
-\mathbb{E}\bigl\|
\boldsymbol{\gamma}^{t}-\boldsymbol{\lambda}^\ast
\bigr\|^2 .
\end{align}
\end{lemma}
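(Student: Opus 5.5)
The inequality is equivalent to
\[
\mathbb{E}\bigl\|\boldsymbol{\gamma}^{t}-\boldsymbol{\lambda}^\ast\bigr\|^2
\;\le\;
\frac{1}{n}\sum_{i=1}^n
\mathbb{E}\bigl\|
(\tilde{\boldsymbol{\phi}}_i^{t},\tilde{\boldsymbol{\lambda}}_i^{t})
-
(\boldsymbol{\phi}_i^\ast,\boldsymbol{\lambda}^\ast)
\bigr\|^2 .
\]
We prove it in three steps: drop the $\boldsymbol{\phi}$ part, apply Jensen over the mini-batch, then remove the sampling by Lemma~\ref{lemma:unbiased_sampling}.

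\textbf{Dropping $\boldsymbol{\phi}$.} Since the squared Euclidean norm of a stacked vector is the sum of the squared norms of its blocks,
\[
\bigl\|(\tilde{\boldsymbol{\phi}}_i^{t},\tilde{\boldsymbol{\lambda}}_i^{t})-(\boldsymbol{\phi}_i^\ast,\boldsymbol{\lambda}^\ast)\bigr\|^2 \;\ge\; \|\tilde{\boldsymbol{\lambda}}_i^{t}-\boldsymbol{\lambda}^\ast\|^2 .
\]
It therefore suffices to show $\mathbb{E}\|\boldsymbol{\gamma}^t-\boldsymbol{\lambda}^\ast\|^2\le \frac{1}{n}\sum_{i=1}^n\mathbb{E}\|\tilde{\boldsymbol{\lambda}}_i^{t}-\boldsymbol{\lambda}^\ast\|^2$.

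\textbf{Jensen over the mini-batch.} By \eqref{eq:gamma_def}, and because the local update is exact for $i\in S_t$ (so $\boldsymbol{\lambda}_i^t=\tilde{\boldsymbol{\lambda}}_i^t$ there),
\[
\boldsymbol{\gamma}^t-\boldsymbol{\lambda}^\ast=\frac1m\sum_{i\in S_t}\bigl(\tilde{\boldsymbol{\lambda}}_i^t-\boldsymbol{\lambda}^\ast\bigr).
\]
Jensen's inequality, or Lemma~\ref{lemma:sum_norm_bound} divided by $m^2$, then gives
\[
\|\boldsymbol{\gamma}^t-\boldsymbol{\lambda}^\ast\|^2\le \frac1m\sum_{i\in S_t}\|\tilde{\boldsymbol{\lambda}}_i^t-\boldsymbol{\lambda}^\ast\|^2 .
\]

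\textbf{Removing the sampling.} Each virtual iterate $\tilde{\boldsymbol{\lambda}}_i^t$ is determined by $(\boldsymbol{\lambda}_0^{t-1},\boldsymbol{\mu}_i^{t-1})$, so it is $\mathcal{F}_{t-1}$-measurable, while $S_t$ is drawn independently of $\mathcal{F}_{t-1}$. Applying Lemma~\ref{lemma:unbiased_sampling} with $X_i^t=\|\tilde{\boldsymbol{\lambda}}_i^t-\boldsymbol{\lambda}^\ast\|^2$ and taking total expectation yields
\[
\mathbb{E}\|\boldsymbol{\gamma}^t-\boldsymbol{\lambda}^\ast\|^2\le\frac1n\sum_{i=1}^n\mathbb{E}\|\tilde{\boldsymbol{\lambda}}_i^t-\boldsymbol{\lambda}^\ast\|^2 .
\]
Combining this with the first step and multiplying both sides by $-1$ gives the statement.

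The only point needing care is the measurability claim in the last step. The virtual minimizer must be well defined and depend only on information up to time $t-1$. Strong convexity of $f_i$ ensures the argmin is unique, and by construction $\tilde{\boldsymbol{\lambda}}_i^t$ does not depend on $S_t$, so Lemma~\ref{lemma:unbiased_sampling} applies directly.
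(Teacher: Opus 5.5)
Your proof is correct and follows essentially the same route as the paper's: Jensen's inequality over the mini-batch average defining $\boldsymbol{\gamma}^t$, bounding the $\boldsymbol{\lambda}$-part by the full stacked norm, and then Lemma~\ref{lemma:unbiased_sampling} to replace the mini-batch average by the full average over $[n]$. The only differences are cosmetic: you drop the $\boldsymbol{\phi}$-block before applying Jensen rather than after, and you explicitly check that $\tilde{\boldsymbol{\lambda}}_i^t$ is $\mathcal{F}_{t-1}$-measurable, a step the paper leaves implicit.
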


\begin{proof}
We have
\begin{align*}
\mathbb{E}\bigl\|
\boldsymbol{\gamma}^{t}-\boldsymbol{\lambda}^\ast
\bigr\|^2
&=
\mathbb{E}\Bigl\|
\frac{1}{m}\sum_{i\in S_t}
\bigl(\boldsymbol{\lambda}_i^{t}-\boldsymbol{\lambda}^\ast\bigr)
\Bigr\|^2,  \qquad
\bigl(\text{by the definition of $\gamma^t$}\bigr)\\[4pt]
&\le
\mathbb{E}\!\left[
\frac{1}{m}
\sum_{i\in S_t}
\bigl\|
\boldsymbol{\lambda}_i^{t}-\boldsymbol{\lambda}^\ast
\bigr\|^2
\right],
\qquad
\bigl(\text{by Jensen's inequality}\bigr) \\[6pt]
&\le
\frac{1}{m}\,
\mathbb{E}\!\left[
\sum_{i\in S_t}
\bigl(
\|\tilde{\boldsymbol{\lambda}}_i^{t}-\boldsymbol{\lambda}^\ast\|^2
+
\|\tilde{\boldsymbol{\phi}}_i^{t}-\boldsymbol{\phi}_i^\ast\|^2
\bigr)
\right], \\[6pt]
&=
\frac{1}{n}\,
\mathbb{E}\!\left[
\sum_{i=1}^n
\bigl\|
(\tilde{\boldsymbol{\phi}}_i^{t},\tilde{\boldsymbol{\lambda}}_i^{t})
-
(\boldsymbol{\phi}_i^\ast,\boldsymbol{\lambda}^\ast)
\bigr\|^2
\right]. \qquad
\bigl(\text{by Lemma \ref{lemma:unbiased_sampling}}\bigr)
\end{align*}
This completes the proof.
\end{proof}


\begin{lemma} \label{lemma:main_strong}
  For $\mu$-strongly convex and $L_i$-smooth $\{f_i(\cdot)\}_{i=1}^n$ functions, if $\eta \leq \min\left\{
\frac{1}{200L},\;
\frac{1}{2\mu},\;
\frac{m}{8\mu(n-m)}
\right\}$, 
  Algorithm~\ref{alg:meta} and \ref{alg:oracle_1} satisfies,
  \begin{align}
    (1+\mu \eta) \bigl(\mathbb{E}\!\left\|\boldsymbol{\gamma}^t-\boldsymbol{\lambda}^\ast\right\|^2
+c_1\,G_t \bigr)
\leq
\mathbb{E}\!\left\|\boldsymbol{\gamma}^{t-1}-\boldsymbol{\lambda}^\ast\right\|^2
+c_1\,G_{t-1}
-c_2\,
\mathbb{E}\!\left[
f(\tilde{\boldsymbol{\phi}}^t,\boldsymbol{\gamma}^{t-1})
-
f(\boldsymbol{\phi}^\ast,\boldsymbol{\lambda}^\ast)
\right].
  \end{align}
  where $
c_1
=
\frac{
8\eta^2(1+\eta L)
}{
(1-4L^2\eta^2)\Bigl(\frac{m}{n}-\mu\eta(1-\frac{m}{n})\Bigr)
-
16L^2\eta^2\frac{m}{n}(1+\mu\eta)
}$, $c_2 = \frac{2\eta(1-4L^2\eta^2)
-
8L\eta^2(1+\eta L)
-
(1+\mu\eta)c_1\cdot\frac{4Lm}{n}}{1-4L^2\eta^2}$.
\end{lemma}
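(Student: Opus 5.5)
We follow the same template as Lemma~\ref{lemma:main_lemma}, but replace the convex descent inequality \eqref{eq:gamma_lambda} by a strongly convex version that yields an extra $-\mu\eta$ contraction of $\mathbb{E}\|\boldsymbol{\gamma}^t-\boldsymbol{\lambda}^\ast\|^2$. We then choose $c_1$ so that the $G$-recursion absorbs the multiplicative factor $(1+\mu\eta)$.

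\textbf{Step 1: strongly convex descent for $\boldsymbol{\gamma}^t$.} We redo the proof of Lemma~\ref{lemma:gamma_lambda} up to the inner-product term
\[
\tfrac{2\eta}{n}\textstyle\sum_i\mathbb{E}\big\langle(\boldsymbol{\gamma}^{t-1},\tilde{\boldsymbol{\phi}}_i^t)-(\boldsymbol{\lambda}^\ast,\boldsymbol{\phi}_i^\ast),\,-\nabla f_i(\tilde{\boldsymbol{\phi}}_i^t,\tilde{\boldsymbol{\lambda}}_i^t)\big\rangle.
\]
We split it through the point $(\tilde{\boldsymbol{\phi}}_i^t,\tilde{\boldsymbol{\lambda}}_i^t)$. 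Strong convexity, applied between $(\tilde{\boldsymbol{\phi}}_i^t,\tilde{\boldsymbol{\lambda}}_i^t)$ and $(\boldsymbol{\phi}_i^\ast,\boldsymbol{\lambda}^\ast)$, produces an additional term
\[
-\tfrac{\mu}{2}\cdot\tfrac{2\eta}{n}\textstyle\sum_i\mathbb{E}\|(\tilde{\boldsymbol{\phi}}_i^t,\tilde{\boldsymbol{\lambda}}_i^t)-(\boldsymbol{\phi}_i^\ast,\boldsymbol{\lambda}^\ast)\|^2 .
\]
The remaining piece, with $\tilde{\boldsymbol{\lambda}}_i^t$ replaced by $\boldsymbol{\gamma}^{t-1}$, is handled by $L$-smoothness exactly as in \eqref{eq:coercive-11}. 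Lemma~\ref{lemma:gamma_lambda_strong} then converts the extra term into $-\mu\eta\,\mathbb{E}\|\boldsymbol{\gamma}^t-\boldsymbol{\lambda}^\ast\|^2$. Moving this term to the left gives
\[
(1+\mu\eta)\mathbb{E}\|\boldsymbol{\gamma}^t-\boldsymbol{\lambda}^\ast\|^2\le \mathbb{E}\|\boldsymbol{\gamma}^{t-1}-\boldsymbol{\lambda}^\ast\|^2-2\eta\,\mathbb{E}[f(\tilde{\boldsymbol{\phi}}^t,\boldsymbol{\gamma}^{t-1})-f^\ast]+c\,\eta L\epsilon_t+\mathbb{E}\|\boldsymbol{\gamma}^t-\boldsymbol{\gamma}^{t-1}\|^2 ,
\]
where $c$ is an absolute constant; the exact bookkeeping of cross terms fixes it.

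\textbf{Step 2: combination.} We bound $\mathbb{E}\|\boldsymbol{\gamma}^t-\boldsymbol{\gamma}^{t-1}\|^2\le\epsilon_t$ by \eqref{eq:gamma_bound}, and use \eqref{eq:epsilon_bound} to write
\[
\epsilon_t\le\frac{8\eta^2G_{t-1}+8L\eta^2\,\mathbb{E}[f-f^\ast]}{1-4L^2\eta^2}.
\]
Multiplying \eqref{eq:G_bound} by $(1+\mu\eta)c_1$ and adding it to Step 1, the $\epsilon_t$ terms carry total weight $(1+\eta L)+(1+\mu\eta)c_1\cdot 2L^2m/n$. After substituting the $\epsilon_t$ bound, the coefficient of $G_{t-1}$ on the right is
\[
(1+\mu\eta)c_1\Bigl(1-\tfrac{m}{n}\Bigr)+\frac{8\eta^2\bigl[(1+\eta L)+(1+\mu\eta)c_1\tfrac{2L^2m}{n}\bigr]}{1-4L^2\eta^2}.
\]
We require this to be at most $c_1$, and solve for $c_1$ with equality. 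This gives exactly the stated expression: the denominator is
\[
(1-4L^2\eta^2)\bigl(\tfrac{m}{n}-\mu\eta(1-\tfrac{m}{n})\bigr)-16L^2\eta^2\tfrac{m}{n}(1+\mu\eta).
\]
Collecting the function-gap coefficients, namely $2\eta$ from Step 1, minus the $\epsilon$-induced $8L\eta^2(\cdot)/(1-4L^2\eta^2)$, minus $(1+\mu\eta)c_1\,4Lm/n$, gives the stated $c_2$.

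\textbf{Main obstacle: positivity of $c_1$ and $c_2$.} This is where the step-size conditions enter.
\begin{itemize}
\item $\eta\le m/(8\mu(n-m))$ gives $\mu\eta(1-m/n)\le m/(8n)$.
\item $\eta\le 1/(200L)$ makes $16L^2\eta^2(1+\mu\eta)\le$ a tiny fraction. Together with the previous bound, the denominator of $c_1$ is at least roughly $\tfrac{m}{2n}$, hence $c_1=O(\tfrac{n}{m}\eta^2)$.
\item Consequently $(1+\mu\eta)c_1\cdot 4Lm/n=O(\eta^2L)$, which is dominated by $2\eta$ once $\eta L\le 1/200$; the condition $\eta\le 1/(2\mu)$ keeps $1+\mu\eta\le 3/2$. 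Therefore $c_2\ge\eta>0$.
\end{itemize}
The delicate point is tracking the constants in the $\epsilon_t$ coefficient from Step 1 so that the algebra matches the stated $c_1$ and $c_2$. I would first fix that coefficient by carefully redoing the smoothness split, and then verify the inequalities above numerically at the extreme admissible $\eta$.
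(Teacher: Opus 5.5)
Your proposal is correct and follows the paper's proof essentially step for step. You use the three-point strongly convex inequality at $(\tilde{\boldsymbol{\phi}}_i^t,\tilde{\boldsymbol{\lambda}}_i^t)$ together with Lemma~\ref{lemma:gamma_lambda_strong}, then add \eqref{eq:gamma_bound}, the rescaled \eqref{eq:epsilon_bound}, and \eqref{eq:G_bound} scaled by $(1+\mu\eta)c_1$, with $c_1$ fixed by matching the $G_{t-1}$ coefficient. Your collection of terms does reproduce the stated $c_1$ and $c_2$ once you use $1+\frac{4L^2\eta^2}{1-4L^2\eta^2}=\frac{1}{1-4L^2\eta^2}$.

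Two small remarks. The hedge in Step 1 is unnecessary: the smoothness term is exactly $\frac{L}{2}\|\tilde{\boldsymbol{\lambda}}_i^t-\boldsymbol{\gamma}^{t-1}\|^2$, so $c=1$, which is what your Step 2 already uses. Your explicit positivity check is a useful addition that the paper's proof of this lemma omits and only asserts in Theorem~\ref{thm:converge1.2}; note that $c_1\ge 0$ is genuinely required to multiply the $G$-recursion.
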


\begin{proof}
By generalizing Lemma \ref{lemma_lsmooth2} to $\mu$-strongly convex and $L$-smooth function $f_i(\cdot)$, we have
\begin{align} \label{eq:lipschitz_strong}
-\langle \nabla f_i(\boldsymbol{x}),\, \boldsymbol{z}-\boldsymbol{y}\rangle
\;\le\;
-\,f_i(\boldsymbol{z})+f_i(\boldsymbol{y})+\frac{L}{2}\|\boldsymbol{z}-\boldsymbol{x}\|^2 - \frac{\mu}{2}\|\boldsymbol{x} - \boldsymbol{y}\|^2 .
\end{align}
Similar to the proof of Lemma \ref{lemma:gamma_lambda}, since the strongly convex function is also convex, we have
\begin{align*}
\mathbb{E}\!\left\|\boldsymbol{\gamma}^t-\boldsymbol{\lambda}^\ast\right\|^2
&=
\mathbb{E}
\left\|
\boldsymbol{\gamma}^{t-1}-\boldsymbol{\lambda}^\ast
+
\boldsymbol{\gamma}^t-\boldsymbol{\gamma}^{t-1}
\right\|^2, \\[4pt]
&=
\mathbb{E}\!\left\|\boldsymbol{\gamma}^{t-1}-\boldsymbol{\lambda}^\ast\right\|^2
+
2\,\mathbb{E}
\big\langle
\boldsymbol{\gamma}^{t-1}-\boldsymbol{\lambda}^\ast,\;
\boldsymbol{\gamma}^t-\boldsymbol{\gamma}^{t-1}
\big\rangle
+
\mathbb{E}\!\left\|\boldsymbol{\gamma}^t-\boldsymbol{\gamma}^{t-1}\right\|^2, \\[4pt]
&=
\mathbb{E}\!\left\|\boldsymbol{\gamma}^{t-1}-\boldsymbol{\lambda}^\ast\right\|^2
+
\frac{2\eta}{n}
\sum_{i=1}^n
\mathbb{E}
\Big[
\big\langle
\boldsymbol{\gamma}^{t-1}-\boldsymbol{\lambda}^\ast,\;
-\nabla_{\boldsymbol{\lambda}_i}
f_i(\tilde{\boldsymbol{\phi}}_i^{t},\tilde{\boldsymbol{\lambda}}_i^{t})
\big\rangle
\Big]
+
\mathbb{E}\!\left\|\boldsymbol{\gamma}^t-\boldsymbol{\gamma}^{t-1}\right\|^2, \quad \bigl(\text{by Lemma \ref{lemma:gamma_expectation}}\bigr)\\[4pt]
&=
\mathbb{E}\!\left\|\boldsymbol{\gamma}^{t-1}-\boldsymbol{\lambda}^\ast\right\|^2
+
\frac{2\eta}{n}
\sum_{i=1}^n
\mathbb{E} \Big[
\big\langle
\boldsymbol{\gamma}^{t-1}-\boldsymbol{\lambda}^\ast,\;
-\nabla_{\boldsymbol{\lambda}_i}
f_i(\tilde{\boldsymbol{\phi}}_i^{t},\tilde{\boldsymbol{\lambda}}_i^{t})
\big\rangle + 
\big\langle
\tilde{\boldsymbol{\phi}}_i^{t}-\boldsymbol{\phi}_i^\ast,\;
-\nabla_{\boldsymbol{\phi}_i}
f_i(\tilde{\boldsymbol{\phi}}_i^{t},\tilde{\boldsymbol{\lambda}}_i^{t})
\big\rangle
\Big]
\\[4pt]
&\quad +
\mathbb{E}\!\left\|\boldsymbol{\gamma}^t-\boldsymbol{\gamma}^{t-1}\right\|^2, \quad \bigl(\text{since }\boldsymbol{\phi}_i \text{ is minimized and by Lemma \ref{lemma:unbiased_sampling}}\bigr)\\[4pt]
&\le
\mathbb{E}\!\left\|\boldsymbol{\gamma}^{t-1}-\boldsymbol{\lambda}^\ast\right\|^2
+
\frac{2\eta}{n}
\sum_{i=1}^n
\mathbb{E}
\Big[
f_i(\boldsymbol{\phi}_i^\ast,\boldsymbol{\lambda}^\ast)
-
f_i(\tilde{\boldsymbol{\phi}}_i^{t},\boldsymbol{\gamma}^{t-1})
+
\frac{L_i}{2}
\big\|
\tilde{\boldsymbol{\lambda}}_i^{t}
-
\boldsymbol{\gamma}^{t-1}
\big\|^2
 - \frac{\mu}{2}\big\|(\tilde{\boldsymbol{\phi}}_i^t, \tilde{\boldsymbol{\lambda}}_i^t) - (\boldsymbol{\phi}_i^\ast, \boldsymbol{\lambda}_i^\ast) \big\|^2]\\
& \quad +
\mathbb{E}\!\left\|\boldsymbol{\gamma}^t-\boldsymbol{\gamma}^{t-1}\right\|^2, \quad \bigl(\text{by \eqref{eq:coercive-11} and Lemma \ref{lemma:gamma_lambda_strong}}\bigr)\\[4pt]
&\leq
\mathbb{E}\!\left\|\boldsymbol{\gamma}^{t-1}-\boldsymbol{\lambda}^\ast\right\|^2
-
2\eta\,
\mathbb{E}\!\left[
f(\tilde{\boldsymbol{\phi}}^{t},\boldsymbol{\gamma}^{t-1})
-
f(\boldsymbol{\phi}^\ast,\boldsymbol{\lambda}^\ast)
\right]
+
\eta L\,\epsilon_t
-
\mu \eta\; \mathbb{E}\bigl\|
\boldsymbol{\gamma}^{t}-\boldsymbol{\lambda}^\ast
\bigr\|^2\; 
+
\mathbb{E}\!\left\|\boldsymbol{\gamma}^t-\boldsymbol{\gamma}^{t-1}\right\|^2 .
\end{align*}
Rearranging the terms we have
\begin{align}
  (1+ \mu \eta)\mathbb{E}\bigl\|
\boldsymbol{\gamma}^{t}-\boldsymbol{\lambda}^\ast
  \bigr\|^2 \leq \mathbb{E}\!\left\|\boldsymbol{\gamma}^{t-1}-\boldsymbol{\lambda}^\ast\right\|^2
-
2\eta\,
\mathbb{E}\!\left[
f(\tilde{\boldsymbol{\phi}}^{t},\boldsymbol{\gamma}^{t-1})
-
f(\boldsymbol{\phi}^\ast,\boldsymbol{\lambda}^\ast)
\right]
+
\eta L\,\epsilon_t
+
\mathbb{E}\!\left\|\boldsymbol{\gamma}^t-\boldsymbol{\gamma}^{t-1}\right\|^2.
\end{align}
Combining with Lemma \ref{lemma:gamma_bound}, we obtain
\begin{align}\label{eq:diff_strong}
  (1+ \mu \eta)\mathbb{E}\bigl\|
\boldsymbol{\gamma}^{t}-\boldsymbol{\lambda}^\ast
  \bigr\|^2 \leq \mathbb{E}\!\left\|\boldsymbol{\gamma}^{t-1}-\boldsymbol{\lambda}^\ast\right\|^2
-
2\eta\,
\mathbb{E}\!\left[
f(\tilde{\boldsymbol{\phi}}^{t},\boldsymbol{\gamma}^{t-1})
-
f(\boldsymbol{\phi}^\ast,\boldsymbol{\lambda}^\ast)
\right]
+
(1+\eta L)\,\epsilon_t.
\end{align}
By adding \eqref{eq:diff_strong}, rescaled \eqref{eq:epsilon_bound} and rescaled \eqref{eq:G_bound}, 
we can end up with the desired result.
\end{proof}

\subsection{Nonconvex Analysis}
Define
\begin{align*}
G_t 
:= 
\frac{1}{n}\sum_{i=1}^n 
\mathbb{E}\!\left[
\left\|
\boldsymbol{\lambda}_i^t - \boldsymbol{\gamma}^t
\right\|^2
\right], \quad
\epsilon_t 
:= 
\frac{1}{n}\sum_{i=1}^n 
\mathbb{E}\!\left[
\left\|
\tilde{\boldsymbol{\lambda}}_i^t - \boldsymbol{\gamma}^{t-1}
\right\|^2
\right], \quad
\delta_t 
:= 
\frac{1}{n}\sum_{i=1}^n 
\mathbb{E}\!\left[
\left\|
\boldsymbol{\phi}_i^{t-1} - \tilde{\boldsymbol{\phi}}_i^{t}
\right\|^2
\right].
\end{align*}

In the nonconvex case, we make the following two assumptions. 
\begin{assumption} \label{ass:lsmoot_nonconvex}
  Each $\nabla f_i(\cdot)$ is $L$-Lipschitz with $L\in (0,\infty), i.e., $ for $\forall \, i \in [n]$, 
  \begin{align*} 
      \|\nabla f_i(\boldsymbol{x}) - \nabla f_i(\boldsymbol{y})\| \leq L\|\boldsymbol{x}-\boldsymbol{y}\|, \, \forall \, x,\, y \in \mathbb{R}^d
  \end{align*}
\end{assumption}

\begin{assumption} 
For each $i\in[n]$, the function $f_i(\boldsymbol{\phi},\boldsymbol{\lambda})$ is twice continuously differentiable
and there exists
a constant $\mu\in(0,\infty)$ such that for all $i\in[n]$ and all $(\boldsymbol{\phi},\boldsymbol{\lambda})\in\mathbb{R}^{d_\phi}\times\mathbb{R}^{d_\lambda}$,
\[
\nabla^2_{\boldsymbol{\phi}\boldsymbol{\phi}} f_i(\boldsymbol{\phi},\boldsymbol{\lambda}) \succeq \mu \mathbf{I}_{d_\phi}.
\]
\end{assumption}


\begin{theorem}
\label{thm:nonconvex_main}
Let $\omega:=\frac{m}{n}\in(0,1)$ and $\kappa:=\frac{L^2}{\mu^2}$.
Suppose all assumptions in Lemma~\ref{lemma:main_nonconvex} hold.
Assume the step size satisfies
\(
\eta \le \min\Biggl\{
\frac{1}{2\sqrt{2}\,L\sqrt{1+2\kappa}},
\;
\frac{\omega}{256\,L\,(1+\kappa)}
\Biggr\}
\).
Then the constants $c_1$ and $c_2$ defined in Lemma~\ref{lemma:main_nonconvex}
are positive. Moreover, for the iterates
$\{(\boldsymbol{\phi}^t,\boldsymbol{\gamma}^t)\}_{t=0}^T$
generated by the algorithm, we have
\[
\mathbb{E}\left[\frac{1}{T}\sum_{t=1}^T
\Bigl\|
\nabla_{\boldsymbol{\lambda}} f(\boldsymbol{\phi}^{t-1},\boldsymbol{\gamma}^{t-1})
\Bigr\|^2\right]
\;=\;
O\!\left(\frac{1}{T}\frac{1}{\eta}\left(
\mathbb{E}[f(\boldsymbol{\phi}^{0},\boldsymbol{\gamma}^{0})]-f^*
\;+\;
nL\left(1+\frac{L^2}{\mu^2}\right)G_0
\right)\right).
\]
\end{theorem}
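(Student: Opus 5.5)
Given the one-step inequality of Lemma~\ref{lemma:main_nonconvex}, the theorem follows by telescoping a Lyapunov function. I expect that lemma to take the form
\begin{align*}
\mathbb{E}\bigl[f(\boldsymbol{\phi}^{t},\boldsymbol{\gamma}^{t})\bigr] + c_1 G_t
\le
\mathbb{E}\bigl[f(\boldsymbol{\phi}^{t-1},\boldsymbol{\gamma}^{t-1})\bigr] + c_1 G_{t-1}
- c_2\,\mathbb{E}\bigl\|\nabla_{\boldsymbol{\lambda}} f(\boldsymbol{\phi}^{t-1},\boldsymbol{\gamma}^{t-1})\bigr\|^2 .
\end{align*}
Here $G_t$ is the consensus error $\frac1n\sum_i\mathbb{E}\|\boldsymbol{\lambda}_i^t-\boldsymbol{\gamma}^t\|^2$ from the nonconvex section. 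I also expect $c_2=\Theta(\eta)$ and $c_1=\Theta\bigl(\tfrac{L(1+\kappa)}{\omega}\bigr)$, up to the $n$ factor that appears in the final bound.

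The first step is to check that under $\eta\le\min\{\frac{1}{2\sqrt2 L\sqrt{1+2\kappa}},\frac{\omega}{256L(1+\kappa)}\}$ both constants are positive and $c_2\ge \eta/C$ for an absolute constant $C$. Each constraint serves a specific purpose in the explicit formulas of Lemma~\ref{lemma:main_nonconvex}.
\begin{itemize}
\item The first constraint makes the factor $1-8\eta^2L^2(1+2\kappa)$ positive. This factor arises when solving the $\epsilon_t$ self-bound, since $\epsilon_t$ appears on both sides after using the optimality condition \eqref{eq:true_equal_virtual}.
\item The second constraint makes the contraction $(1-\omega)$ of $G_t$ dominate the $O(\eta L(1+\kappa))$ leakage. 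This leakage feeds from $\epsilon_t$ and $\delta_t$ back into $G_t$.
\end{itemize}
This is algebra on the explicit constants, and I will keep only the leading orders.

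Next I sum the inequality over $t=1,\dots,T$. The $f+c_1G$ terms telescope, and I drop $\mathbb{E}f(\boldsymbol{\phi}^T,\boldsymbol{\gamma}^T)-f^*\ge0$ and $c_1G_T\ge0$. This gives
\begin{align*}
c_2\sum_{t=1}^T\mathbb{E}\|\nabla_{\boldsymbol{\lambda}}f(\boldsymbol{\phi}^{t-1},\boldsymbol{\gamma}^{t-1})\|^2\le \mathbb{E}f(\boldsymbol{\phi}^0,\boldsymbol{\gamma}^0)-f^*+c_1G_0 .
\end{align*}
Dividing by $c_2T$ and substituting $1/c_2=O(1/\eta)$ and $c_1=O\bigl(nL(1+L^2/\mu^2)\bigr)$ produces exactly the stated rate.

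The main obstacle is bounding the ratio $c_1/c_2$ so that it matches $\frac1\eta\,nL(1+\kappa)$, rather than picking up extra $1/\omega$ or $1/\eta$ factors. My first attempt would write $c_1$ as its numerator over $\omega-O(\eta L(1+\kappa))$, then use $\eta\le\omega/(256L(1+\kappa))$ to lower-bound the denominator by $\omega/2$. If a leftover factor $n/m$ remains, I would absorb it using $\omega\ge 1/n$.

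The $\kappa$ dependence itself comes from the $\boldsymbol{\phi}$-error bound $\delta_t\le\kappa\cdot(\cdot)$. That bound follows from Assumption~\ref{ass:phi_strong_convex_nonconvex} via the implicit-function or strong-monotonicity argument $\mu\|\boldsymbol{\phi}^{t-1}-\tilde{\boldsymbol{\phi}}^t\|\le L\|\boldsymbol{\lambda}\text{-shift}\|$. I take this as already encoded in Lemma~\ref{lemma:main_nonconvex}, so it does not need to be reproved here.
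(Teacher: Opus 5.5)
Your route is the paper's: telescope the inequality of Lemma~\ref{lemma:main_nonconvex}, drop $\mathbb{E}f(\boldsymbol{\phi}^T,\boldsymbol{\gamma}^T)-f^*+c_1G_T\ge 0$, and divide by $c_2T$. The gap is the step you defer as ``algebra on the explicit constants'', namely $c_2\ge\eta/C$ under the stated step size. Neither constraint you list controls the term $(A+c_1\xi)q$ in $c_2=\frac{\eta}{2}-(A+c_1\xi)q$, and with the lemma's constants this term is not small.

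Since $Ap\ge 0$ and $0<1-\rho-\xi p\le 1-\rho=\frac{m}{2n-m}$, you have $c_1\ge\frac{B}{1-\rho}\ge\frac{Lm}{2}\cdot\frac{2n-m}{m}\ge\frac{nL}{2}$. Together with $\xi\ge\frac{2n}{m}$ and $q\ge 4\eta^2$ this gives
\[
c_2\;\le\;\frac{\eta}{2}-\frac{4L\eta^2 n^2}{m},
\]
which is $\le 0$ whenever $\eta\ge\frac{m}{8Ln^2}$. The hypothesis permits $\eta=\frac{\omega}{256L(1+\kappa)}=\frac{m}{256\,nL(1+\kappa)}$, and this is at least $\frac{m}{8Ln^2}$ as soon as $n\ge 32(1+\kappa)$. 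So $c_2=\Theta(\eta)$ fails in part of the stated range.

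Your plan to absorb a leftover $n/m$ via $\omega\ge 1/n$ is fine for an upper bound on $c_1$ alone. In $c_2$, however, the product $c_1\xi\sim n^2L(1+\kappa)/m$ multiplies $q\sim\eta^2$ and must be dominated by $\eta/2$, which forces $\eta\lesssim\omega/(nL(1+\kappa))$. Knowing only $c_2>0$, which Lemma~\ref{lemma:main_nonconvex} lists among its hypotheses, would not help either: the $O(1/\eta)$ factor needs a quantitative lower bound on $c_2$.

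The paper's proof has the same hole: it asserts without computation that $(A+c_1\xi)q\le\eta/2$ under the stated bound. So you have reproduced its argument, gap included, rather than introduced a new error. Closing it requires an extra $n$-dependent step-size restriction. Under the stated conditions (using $\eta L\le 1$) you have:
\begin{itemize}
\item[] $q\le 8\eta^2$, $p\le 16\eta^2L^2(1+\kappa)$, $A,B\le 2mL(1+\kappa)$, $Ap+B\le 3mL(1+\kappa)$, and $\xi p\le\omega/1024$, hence $1-\rho-\xi p\ge\omega/4$;
\item[] consequently $c_1\le 12\,nL(1+\kappa)$, $\xi\le 4/\omega$, and $(A+c_1\xi)q\le 400\,\eta^2 nL(1+\kappa)/\omega$.
\end{itemize}
Adding $\eta\le\frac{\omega}{1600\,nL(1+\kappa)}$ then gives $c_2\ge\eta/4$ and $c_1/c_2\le 48\,nL(1+\kappa)/\eta$. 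Your telescoping argument then yields the displayed bound for this smaller range of $\eta$.
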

\begin{proof}
Lemma~\ref{lemma:main_nonconvex} can be telescoped as,
\begin{align}
& c_2\,
\mathbb{E}\Bigl\|
\nabla_{\boldsymbol{\lambda}} f(\boldsymbol{\phi}^{t-1},\boldsymbol{\gamma}^{t-1})
\Bigr\|^2 \leq \left(\mathbb{E}\!\left[f(\boldsymbol{\phi}^{t-1},\boldsymbol{\gamma}^{t-1})\right] - f^*
+c_1\,G_{t-1}\right) - \left(\mathbb{E}\!\left[f(\boldsymbol{\phi}^t,\boldsymbol{\gamma}^t)\right] - f^*
+c_1\,G_t \right), \\
& c_2\,\sum_{t=1}^T
\mathbb{E}\Bigl\|
\nabla_{\boldsymbol{\lambda}} f(\boldsymbol{\phi}^{t-1},\boldsymbol{\gamma}^{t-1})
\Bigr\|^2 \leq \left(\mathbb{E}\!\left[f(\boldsymbol{\phi}^{0},\boldsymbol{\gamma}^{0})\right] - f^*
+c_1\,G_{0}\right) - \left(\mathbb{E}\!\left[f(\boldsymbol{\phi}^T,\boldsymbol{\gamma}^T)\right] - f^*
+c_1\,G_T \right).
\end{align}
If $\eta$ has a reasonable range, we have $c_1$ and $c_2$ being positive. So we have
\begin{align}
\mathbb{E}\left[\frac{1}{T}\sum_{t=1}^T
\Bigl\|
\nabla_{\boldsymbol{\lambda}} f(\boldsymbol{\phi}^{t-1},\boldsymbol{\gamma}^{t-1})
\Bigr\|^2\right] \leq \frac{1}{T}\frac{1}{c_2}\left(\mathbb{E}\!\left[f(\boldsymbol{\phi}^{0},\boldsymbol{\gamma}^{0})\right] - f^*
+c_1\,G_{0}\right) - \left(\mathbb{E}\!\left[f(\boldsymbol{\phi}^T,\boldsymbol{\gamma}^T)\right] - f^*
+c_1\,G_T \right).
\end{align}
Elimining the negative terms on the RHS, we have
\begin{align}
\mathbb{E}\left[\frac{1}{T}\sum_{t=1}^T
\Bigl\|
\nabla_{\boldsymbol{\lambda}} f(\boldsymbol{\phi}^{t-1},\boldsymbol{\gamma}^{t-1})
\Bigr\|^2\right] \leq \frac{1}{T}\frac{1}{c_2}\left(\mathbb{E}\!\left[f(\boldsymbol{\phi}^{0},\boldsymbol{\gamma}^{0})\right] - f^*
+c_1\,G_{0}\right).
\end{align}
Plug in the value of $c_1$, $c_2$ in Lemma \ref{lemma:main_nonconvex}, and when $\eta \le \min\Biggl\{
\frac{1}{2\sqrt{2}\,L\sqrt{1+2\kappa}},
\;
\frac{\omega}{256\,L\,(1+\kappa)}
\Biggr\},$ we have $c_1 >0, \, c_2 >0$, so
\begin{align}
\mathbb{E}\left[\frac{1}{T}\sum_{t=1}^T
\Bigl\|
\nabla_{\boldsymbol{\lambda}} f(\boldsymbol{\phi}^{t-1},\boldsymbol{\gamma}^{t-1})
\Bigr\|^2\right]
\;=\;
O\!\left(
\frac{\mathbb{E}[f(\boldsymbol{\phi}^{0},\boldsymbol{\gamma}^{0})]-f^*}{\eta T}
\;+\;
\frac{nL\left(1+\frac{L^2}{\mu^2}\right)G_0}{\eta T}
\right).
\end{align}
We complete the proof. 
\end{proof}

\begin{lemma} \label{lemma:sensitivity}
Define
\[
\boldsymbol{\phi}_i^\star(\boldsymbol{\lambda})
:= \argmin_{\boldsymbol{\phi}_i} f_i(\boldsymbol{\phi}_i,\boldsymbol{\lambda}).
\]
With Assumptions \ref{ass:lsmoot_nonconvex} and \ref{ass:phi_strong_convex_nonconvex}, we have
\begin{align}
\bigl\|
\boldsymbol{\phi}_i^\star(\boldsymbol{\lambda}_1)
-
\boldsymbol{\phi}_i^\star(\boldsymbol{\lambda}_2)
\bigr\|
\;\le\;
\frac{L}{\mu}
\bigl\|
\boldsymbol{\lambda}_1-\boldsymbol{\lambda}_2
\bigr\|.
\end{align}
\end{lemma}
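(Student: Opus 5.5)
The plan is to combine the first-order optimality conditions of the two inner minimizations with the strong convexity of $f_i$ in $\boldsymbol{\phi}_i$ (Assumption~\ref{ass:phi_strong_convex_nonconvex}) and the $L$-smoothness of Assumption~\ref{ass:lsmoot_nonconvex}. This is the standard implicit-function style sensitivity bound.

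First, by Assumption~\ref{ass:phi_strong_convex_nonconvex}, for each fixed $\boldsymbol{\lambda}$ the map $\boldsymbol{\phi}_i\mapsto f_i(\boldsymbol{\phi}_i,\boldsymbol{\lambda})$ is $\mu$-strongly convex. Hence the minimizer $\boldsymbol{\phi}_i^\star(\boldsymbol{\lambda})$ exists, is unique, and satisfies
\[
\nabla_{\boldsymbol{\phi}} f_i(\boldsymbol{\phi}_i^\star(\boldsymbol{\lambda}),\boldsymbol{\lambda})=0 .
\]
Write $\boldsymbol{a}:=\boldsymbol{\phi}_i^\star(\boldsymbol{\lambda}_1)$ and $\boldsymbol{b}:=\boldsymbol{\phi}_i^\star(\boldsymbol{\lambda}_2)$.

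Second, strong convexity in $\boldsymbol{\phi}$ gives strong monotonicity of the partial gradient at the fixed argument $\boldsymbol{\lambda}_2$. Concretely, integrating the Hessian lower bound along the segment from $\boldsymbol{b}$ to $\boldsymbol{a}$ yields
\[
\big\langle \nabla_{\boldsymbol{\phi}} f_i(\boldsymbol{a},\boldsymbol{\lambda}_2)-\nabla_{\boldsymbol{\phi}} f_i(\boldsymbol{b},\boldsymbol{\lambda}_2),\,\boldsymbol{a}-\boldsymbol{b}\big\rangle\ge \mu\|\boldsymbol{a}-\boldsymbol{b}\|^2 .
\]
Since $\nabla_{\boldsymbol{\phi}} f_i(\boldsymbol{b},\boldsymbol{\lambda}_2)=0$ and $\nabla_{\boldsymbol{\phi}} f_i(\boldsymbol{a},\boldsymbol{\lambda}_1)=0$, the left side equals
\[
\big\langle \nabla_{\boldsymbol{\phi}} f_i(\boldsymbol{a},\boldsymbol{\lambda}_2)-\nabla_{\boldsymbol{\phi}} f_i(\boldsymbol{a},\boldsymbol{\lambda}_1),\,\boldsymbol{a}-\boldsymbol{b}\big\rangle .
\]
By Cauchy--Schwarz this is at most $\|\nabla_{\boldsymbol{\phi}} f_i(\boldsymbol{a},\boldsymbol{\lambda}_2)-\nabla_{\boldsymbol{\phi}} f_i(\boldsymbol{a},\boldsymbol{\lambda}_1)\|\,\|\boldsymbol{a}-\boldsymbol{b}\|$.

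Third, bound the gradient difference. The partial gradient $\nabla_{\boldsymbol{\phi}} f_i$ is a sub-block of the full gradient $\nabla f_i$, so by Assumption~\ref{ass:lsmoot_nonconvex}
\[
\|\nabla_{\boldsymbol{\phi}} f_i(\boldsymbol{a},\boldsymbol{\lambda}_2)-\nabla_{\boldsymbol{\phi}} f_i(\boldsymbol{a},\boldsymbol{\lambda}_1)\|\le\|\nabla f_i(\boldsymbol{a},\boldsymbol{\lambda}_2)-\nabla f_i(\boldsymbol{a},\boldsymbol{\lambda}_1)\|\le L\|\boldsymbol{\lambda}_1-\boldsymbol{\lambda}_2\| .
\]
Combining the three steps gives $\mu\|\boldsymbol{a}-\boldsymbol{b}\|^2\le L\|\boldsymbol{\lambda}_1-\boldsymbol{\lambda}_2\|\,\|\boldsymbol{a}-\boldsymbol{b}\|$. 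If $\boldsymbol{a}=\boldsymbol{b}$ the claim is trivial; otherwise divide by $\|\boldsymbol{a}-\boldsymbol{b}\|$ to conclude.

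The only step needing care is the second: converting the pointwise Hessian bound into strong monotonicity of $\nabla_{\boldsymbol{\phi}} f_i(\cdot,\boldsymbol{\lambda}_2)$. This follows from the fundamental theorem of calculus applied to $s\mapsto\nabla_{\boldsymbol{\phi}} f_i(\boldsymbol{b}+s(\boldsymbol{a}-\boldsymbol{b}),\boldsymbol{\lambda}_2)$, which is valid because $f_i$ is $C^2$.
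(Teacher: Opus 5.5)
Your proof is correct and follows essentially the same route as the paper: first-order optimality of both inner minimizers, strong monotonicity of $\nabla_{\boldsymbol{\phi}} f_i$ in $\boldsymbol{\phi}$ from the Hessian lower bound, Cauchy--Schwarz, and $L$-smoothness to control the change in the $\boldsymbol{\lambda}$ argument. The differences are cosmetic: you apply monotonicity at $\boldsymbol{\lambda}_2$ rather than $\boldsymbol{\lambda}_1$, and you explicitly justify existence and uniqueness of the minimizer and the Hessian-to-monotonicity step, both of which the paper leaves implicit.
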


\begin{proof}
By first-order optimality,
\begin{align*}
\nabla_{\boldsymbol{\phi}_i} f_i(\boldsymbol{\phi}_i^\star(\boldsymbol{\lambda}_1),\boldsymbol{\lambda}_1)
=
\nabla_{\boldsymbol{\phi}_i} f_i(\boldsymbol{\phi}_i^\star(\boldsymbol{\lambda}_2),\boldsymbol{\lambda}_2)
= \boldsymbol{0}.
\end{align*}
Thus,
\begin{align*}
\boldsymbol{0}
&=
\nabla_{\boldsymbol{\phi}_i} f_i(\boldsymbol{\phi}_i^\star(\boldsymbol{\lambda}_1),\boldsymbol{\lambda}_1)
-
\nabla_{\boldsymbol{\phi}_i} f_i(\boldsymbol{\phi}_i^\star(\boldsymbol{\lambda}_2),\boldsymbol{\lambda}_2) \\
&=
\Bigl[
\nabla_{\boldsymbol{\phi}_i} f_i(\boldsymbol{\phi}_i^\star(\boldsymbol{\lambda}_1),\boldsymbol{\lambda}_1)
-
\nabla_{\boldsymbol{\phi}_i} f_i(\boldsymbol{\phi}_i^\star(\boldsymbol{\lambda}_2),\boldsymbol{\lambda}_1)
\Bigr]
+
\Bigl[
\nabla_{\boldsymbol{\phi}_i} f_i(\boldsymbol{\phi}_i^\star(\boldsymbol{\lambda}_2),\boldsymbol{\lambda}_1)
-
\nabla_{\boldsymbol{\phi}_i} f_i(\boldsymbol{\phi}_i^\star(\boldsymbol{\lambda}_2),\boldsymbol{\lambda}_2)
\Bigr].
\end{align*}

By Asssumption \ref{ass:phi_strong_convex_nonconvex}, we can derivate
\[
\bigl\langle
\nabla_{\boldsymbol{\phi}_i} f_i(\boldsymbol{\phi}_i^\star(\boldsymbol{\lambda}_1),\boldsymbol{\lambda}_1)
-
\nabla_{\boldsymbol{\phi}_i} f_i(\boldsymbol{\phi}_i^\star(\boldsymbol{\lambda}_2),\boldsymbol{\lambda}_1),
\;
\boldsymbol{\phi}_i^\star(\boldsymbol{\lambda}_1)
-
\boldsymbol{\phi}_i^\star(\boldsymbol{\lambda}_2)
\bigr\rangle
\ge
\mu
\bigl\|
\boldsymbol{\phi}_i^\star(\boldsymbol{\lambda}_1)
-
\boldsymbol{\phi}_i^\star(\boldsymbol{\lambda}_2)
\bigr\|^2.
\]

By Cauchy--Schwarz,
\[
\bigl\|
\nabla_{\boldsymbol{\phi}_i} f_i(\boldsymbol{\phi}_i^\star(\boldsymbol{\lambda}_1),\boldsymbol{\lambda}_1)
-
\nabla_{\boldsymbol{\phi}_i} f_i(\boldsymbol{\phi}_i^\star(\boldsymbol{\lambda}_2),\boldsymbol{\lambda}_1)
\bigr\|
\ge
\mu
\bigl\|
\boldsymbol{\phi}_i^\star(\boldsymbol{\lambda}_1)
-
\boldsymbol{\phi}_i^\star(\boldsymbol{\lambda}_2)
\bigr\|.
\]

By \(L\)-smoothness in \(\boldsymbol{\lambda}\),
\[
\bigl\|
\nabla_{\boldsymbol{\phi}_i} f_i(\boldsymbol{\phi}_i^\star(\boldsymbol{\lambda}_2),\boldsymbol{\lambda}_1)
-
\nabla_{\boldsymbol{\phi}_i} f_i(\boldsymbol{\phi}_i^\star(\boldsymbol{\lambda}_2),\boldsymbol{\lambda}_2)
\bigr\|
\le
L
\bigl\|
\boldsymbol{\lambda}_1-\boldsymbol{\lambda}_2
\bigr\|.
\]

Combining the above inequalities yields
\[
\bigl\|
\boldsymbol{\phi}_i^\star(\boldsymbol{\lambda}_1)
-
\boldsymbol{\phi}_i^\star(\boldsymbol{\lambda}_2)
\bigr\|
\le
\frac{L}{\mu}
\bigl\|
\boldsymbol{\lambda}_1-\boldsymbol{\lambda}_2
\bigr\|.
\]
\end{proof}

\begin{lemma} \label{lemma:nonconvex_delta_bound}
We have
\begin{align}
\delta_t
\;\le\;
\frac{2L^2}{\mu^2}\bigl(G_{t-1}+\epsilon_t\bigr).
\end{align}
\end{lemma}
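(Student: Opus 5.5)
The idea is to express both $\boldsymbol{\phi}_i^{t-1}$ and $\tilde{\boldsymbol{\phi}}_i^{t}$ as exact $\boldsymbol{\phi}$-minimizers of $f_i$ at certain values of $\boldsymbol{\lambda}$, and then apply Lemma~\ref{lemma:sensitivity}.

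\textbf{Step 1: characterize the two iterates.} The augmented terms in \texttt{Oracle I} do not involve $\boldsymbol{\phi}_i$, so the first-order optimality condition of the virtual subproblem gives $\nabla_{\boldsymbol{\phi}_i} f_i(\tilde{\boldsymbol{\phi}}_i^{t},\tilde{\boldsymbol{\lambda}}_i^{t})=\boldsymbol{0}$. By Assumption~\ref{ass:phi_strong_convex_nonconvex}, $f_i(\cdot,\boldsymbol{\lambda})$ is strongly convex in $\boldsymbol{\phi}$, so this forces $\tilde{\boldsymbol{\phi}}_i^{t}=\boldsymbol{\phi}_i^\star(\tilde{\boldsymbol{\lambda}}_i^{t})$. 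For $\boldsymbol{\phi}_i^{t-1}$, let $s\le t-1$ be the last iteration at which $i$ was sampled. Then $(\boldsymbol{\phi}_i^{t-1},\boldsymbol{\lambda}_i^{t-1})=(\boldsymbol{\phi}_i^{s},\boldsymbol{\lambda}_i^{s})$ was an exact oracle output, so $\boldsymbol{\phi}_i^{t-1}=\boldsymbol{\phi}_i^\star(\boldsymbol{\lambda}_i^{t-1})$. The case where $i$ has never been sampled is the one needing care. I would handle it by assuming the initialization $\boldsymbol{\phi}_i^0=\boldsymbol{\phi}_i^\star(\boldsymbol{\lambda}_i^0)$, which is consistent with the convention $\boldsymbol{\mu}_i^0=-\nabla_{\boldsymbol{\lambda}_i}f_i(\boldsymbol{\phi}_i^0,\boldsymbol{\lambda}_i^0)$ already used in Lemma~\ref{lemma:ht}.

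\textbf{Step 2: apply the sensitivity bound.} Lemma~\ref{lemma:sensitivity} then gives
\[
\|\boldsymbol{\phi}_i^{t-1}-\tilde{\boldsymbol{\phi}}_i^{t}\|\le \tfrac{L}{\mu}\|\boldsymbol{\lambda}_i^{t-1}-\tilde{\boldsymbol{\lambda}}_i^{t}\|.
\]

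\textbf{Step 3: split through $\boldsymbol{\gamma}^{t-1}$.} Insert $\boldsymbol{\gamma}^{t-1}$ and use Lemma~\ref{lemma:sum_norm_bound} with two terms:
\[
\|\boldsymbol{\lambda}_i^{t-1}-\tilde{\boldsymbol{\lambda}}_i^{t}\|^2\le 2\|\boldsymbol{\lambda}_i^{t-1}-\boldsymbol{\gamma}^{t-1}\|^2+2\|\tilde{\boldsymbol{\lambda}}_i^{t}-\boldsymbol{\gamma}^{t-1}\|^2 .
\]
Square the bound from Step 2, substitute this split, average over $i$, and take expectations. Using the nonconvex-section definitions of $G_{t-1}$ and $\epsilon_t$, this yields $\delta_t\le \frac{2L^2}{\mu^2}(G_{t-1}+\epsilon_t)$.

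The only real obstacle is Step 1: justifying that every stored $\boldsymbol{\phi}_i^{t-1}$, including never-sampled indices, is an exact partial minimizer. I would state the initialization convention explicitly to close this gap; everything after that is routine.
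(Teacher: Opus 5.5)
Your proposal is correct and follows essentially the same route as the paper: apply Lemma~\ref{lemma:sensitivity} to bound $\|\boldsymbol{\phi}_i^{t-1}-\tilde{\boldsymbol{\phi}}_i^{t}\|$ by $\frac{L}{\mu}\|\boldsymbol{\lambda}_i^{t-1}-\tilde{\boldsymbol{\lambda}}_i^{t}\|$, then split through $\boldsymbol{\gamma}^{t-1}$ with $\|a+b\|^2\le 2\|a\|^2+2\|b\|^2$ and average. Your Step~1 shows that both iterates are exact $\boldsymbol{\phi}$-minimizers of $f_i$, adopting the convention $\boldsymbol{\phi}_i^0=\boldsymbol{\phi}_i^\star(\boldsymbol{\lambda}_i^0)$ for never-sampled indices; this makes explicit a hypothesis the paper uses silently, since Algorithm~\ref{alg:meta} never specifies how $\boldsymbol{\phi}^0$ is initialized.
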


\begin{proof}
By definition,
\[
\delta_t
=
\frac{1}{n}\sum_{i=1}^n
\mathbb{E}\bigl\|
\boldsymbol{\phi}_i^{t-1}-\tilde{\boldsymbol{\phi}}_i^{\,t}
\bigr\|^2.
\]
Using Lemma \ref{lemma:sensitivity},
\[
\delta_t
\le
\frac{1}{n}\sum_{i=1}^n
\frac{L^2}{\mu^2}\,
\mathbb{E}\bigl\|
\boldsymbol{\lambda}_i^{t-1}-\tilde{\boldsymbol{\lambda}}_i^{\,t}
\bigr\|^2.
\]
Insert and subtract \(\boldsymbol{\gamma}^{t-1}\):
\begin{align*}
\delta_t
&\le
\frac{1}{n}\sum_{i=1}^n
\frac{L^2}{\mu^2}\,
\mathbb{E}\bigl\|
\boldsymbol{\lambda}_i^{t-1}-\boldsymbol{\gamma}^{t-1}
+
\boldsymbol{\gamma}^{t-1}-\tilde{\boldsymbol{\lambda}}_i^{\,t}
\bigr\|^2 \\
&\le
\frac{2L^2}{n\mu^2}
\sum_{i=1}^n
\Bigl(
\mathbb{E}\bigl\|
\boldsymbol{\lambda}_i^{t-1}-\boldsymbol{\gamma}^{t-1}
\bigr\|^2
+
\mathbb{E}\bigl\|
\boldsymbol{\gamma}^{t-1}-\tilde{\boldsymbol{\lambda}}_i^{\,t}
\bigr\|^2
\Bigr).
\end{align*}
By the definitions of \(G_{t-1}\) and \(\epsilon_t\), this yields
\[
\delta_t
\le
\frac{2L^2}{\mu^2}\bigl(G_{t-1}+\epsilon_t\bigr).
\]
\end{proof}


\begin{lemma}\label{lemma:epsilon_nonconvex}
\begin{align}
(1-4\eta^2 L^2(1+\frac{2L^2}{\mu^2}))\,\epsilon_t
&\le
8\eta^2 L^2(1+\frac{L^2}{\mu^2})\,G_{t-1}
+
4\eta^2\,
\mathbb{E}\!\left\|
\nabla_{\boldsymbol{\lambda}_i} f(\boldsymbol{\phi}^{t-1},\boldsymbol{\gamma}^{t-1})
\right\|^2 .
\end{align}
\begin{proof}
\begin{align*}
\epsilon_t
&=
\frac{1}{n}\sum_{i=1}^n
\mathbb{E}\!\left\|
\tilde{\boldsymbol{\lambda}}_i^{t}
-
\boldsymbol{\gamma}^{t-1}
\right\|^2,
\\
&=
\frac{1}{n}\sum_{i=1}^n
\mathbb{E}\!\left\|
\tilde{\boldsymbol{\lambda}}_i^{t}
-
\boldsymbol{\lambda}_0^{t-1}
+
\boldsymbol{h}^{t-1}
\right\|^2,
\\
&=
\frac{1}{n}\sum_{i=1}^n
\mathbb{E}\!\left\|
\eta\!\left(
\nabla_{\boldsymbol{\lambda}_i} f_i(\boldsymbol{\phi}_i^{t-1},\boldsymbol{\lambda}_i^{t-1})
-
\nabla_{\boldsymbol{\lambda}_i} f_i(\tilde{\boldsymbol{\phi}}_i^{t},\tilde{\boldsymbol{\lambda}}_i^{t})
\right)
+
\boldsymbol{h}^{t-1}
\right\|^2, \quad \left(\text{by \eqref{eq:true_equal_virtual}} \right)
\\
&=
\frac{\eta^2}{n}\sum_{i=1}^n
\mathbb{E}\!\Big\|
\nabla_{\boldsymbol{\lambda}_i} f_i(\boldsymbol{\phi}_i^{t-1},\boldsymbol{\lambda}_i^{t-1})
-
\nabla_{\boldsymbol{\lambda}_i} f_i(\boldsymbol{\phi}_i^{t-1},\boldsymbol{\gamma}^{\,t-1})
+
\nabla_{\boldsymbol{\lambda}_i} f_i(\boldsymbol{\phi}_i^{t-1},\boldsymbol{\gamma}^{\,t-1})
\\
&\qquad\qquad
-
\nabla_{\boldsymbol{\lambda}_i} f(\tilde{\boldsymbol{\phi}}_i^{t},\tilde{\boldsymbol{\lambda}}_i^{t})
-
\nabla_{\boldsymbol{\lambda}_i} f(\boldsymbol{\phi}^{t-1},\boldsymbol{\gamma}^{t-1})
+
\nabla_{\boldsymbol{\lambda}_i} f_i(\boldsymbol{\phi}^{t-1},\boldsymbol{\gamma}^{t-1})
+ \frac{1}{\eta} \boldsymbol{h}^{t-1}
\Big\|^2, \\
&\le
\frac{4\eta^2}{n}\sum_{i=1}^n
\mathbb{E}\!\left\|
\nabla_{\boldsymbol{\lambda}_i} f_i(\boldsymbol{\phi}_i^{t-1},\boldsymbol{\lambda}_i^{t-1})
-
\nabla_{\boldsymbol{\lambda}_i} f_i(\boldsymbol{\phi}_i^{t-1},\boldsymbol{\gamma}^{\,t-1})
\right\|^2
+
\frac{4\eta^2}{n}\sum_{i=1}^n
\mathbb{E}\!\left\|
\nabla_{\boldsymbol{\lambda}_i} f_i(\boldsymbol{\phi}_i^{t-1},\boldsymbol{\gamma}^{\,t-1})
-
\nabla_{\boldsymbol{\lambda}_i} f_i(\tilde{\boldsymbol{\phi}}_i^{t},\tilde{\boldsymbol{\lambda}}_i^{t})
\right\|^2
\\
&\quad+
4\eta^2\,
\mathbb{E}\!\left\|
\nabla_{\boldsymbol{\lambda}_i} f(\boldsymbol{\phi}^{t-1},\boldsymbol{\gamma}^{t-1})
\right\|^2
+
4\eta^2\,
\mathbb{E}\!\left\|
\nabla_{\boldsymbol{\lambda}_i} f(\boldsymbol{\phi}^{t-1},\boldsymbol{\gamma}^{t-1})
- \frac{1}{n}\sum_{i=1}^n \nabla_{\boldsymbol{\lambda}_i} f_i(\boldsymbol{\phi}_i^{t-1},\boldsymbol{\lambda}_i^{t-1})
\right\|^2,
\\
&\le
8\eta^2 L^2\,G_{t-1}
+
4\eta^2 L^2\,(\epsilon_t+\delta_t)
+
4\eta^2\,
\mathbb{E}\!\left\|
\nabla_{\boldsymbol{\lambda}_i} f(\boldsymbol{\phi}^{t-1},\boldsymbol{\gamma}^{\,t-1})
\right\|^2.
\end{align*}
So we have
\begin{align*}
(1-4\eta^2 L^2)\,\epsilon_t
&\le
8\eta^2 L^2\,G_{t-1}
+
4\eta^2 L^2\,\delta_t
+
4\eta^2\,
\mathbb{E}\!\left\|
\nabla_{\boldsymbol{\lambda}_i} f(\boldsymbol{\phi}^{t-1},\boldsymbol{\gamma}^{t-1})
\right\|^2 .
\end{align*}
According to Lemma \ref{lemma:nonconvex_delta_bound}, we have
\begin{align*}
(1-4\eta^2 L^2(1+\frac{2L^2}{\mu^2}))\,\epsilon_t
&\le
8\eta^2 L^2(1+\frac{L^2}{\mu^2})\,G_{t-1}
+
4\eta^2\,
\mathbb{E}\!\left\|
\nabla_{\boldsymbol{\lambda}_i} f(\boldsymbol{\phi}^{t-1},\boldsymbol{\gamma}^{t-1})
\right\|^2 .
\end{align*}
\end{proof}
\end{lemma}

\begin{lemma}
\begin{align}
G_t
&\le
\frac{2(n-m)}{2n-m}\,G_{t-1}
+
\left(
\frac{2m}{2n-m}
+
\frac{2n}{m}
\right)\epsilon_t .
\end{align}
\end{lemma}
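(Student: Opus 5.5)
The plan is to expand $\boldsymbol{\lambda}_i^t-\boldsymbol{\gamma}^t$ around the previous average $\boldsymbol{\gamma}^{t-1}$, apply a weighted Young inequality with a parameter chosen to produce the stated constants, and average over the random mini-batch. For every $i\in[n]$ we write
\[
\boldsymbol{\lambda}_i^t-\boldsymbol{\gamma}^t=\bigl(\boldsymbol{\lambda}_i^t-\boldsymbol{\gamma}^{t-1}\bigr)-\bigl(\boldsymbol{\gamma}^t-\boldsymbol{\gamma}^{t-1}\bigr).
\]
For any $\alpha>0$, Young's inequality gives
\[
\|\boldsymbol{\lambda}_i^t-\boldsymbol{\gamma}^t\|^2\le(1+\alpha)\|\boldsymbol{\lambda}_i^t-\boldsymbol{\gamma}^{t-1}\|^2+(1+\tfrac{1}{\alpha})\|\boldsymbol{\gamma}^t-\boldsymbol{\gamma}^{t-1}\|^2 .
\]
We average this over $i$ and take expectations. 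The second term contributes $(1+\frac1\alpha)\,\mathbb{E}\|\boldsymbol{\gamma}^t-\boldsymbol{\gamma}^{t-1}\|^2$, which is at most $(1+\frac1\alpha)\epsilon_t$ by Lemma~\ref{lemma:gamma_bound}. That proof uses only the definition of $\boldsymbol{\gamma}^t$, Jensen's inequality, exactness of local updates and Lemma~\ref{lemma:unbiased_sampling}, with no convexity, so it carries over verbatim to the present setting.

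For the first term we split by membership in $S_t$. If $i\notin S_t$, then $\boldsymbol{\lambda}_i^t=\boldsymbol{\lambda}_i^{t-1}$. If $i\in S_t$, then $\boldsymbol{\lambda}_i^t=\tilde{\boldsymbol{\lambda}}_i^t$. Hence
\[
\|\boldsymbol{\lambda}_i^t-\boldsymbol{\gamma}^{t-1}\|^2=\mathbf{1}\{i\notin S_t\}\|\boldsymbol{\lambda}_i^{t-1}-\boldsymbol{\gamma}^{t-1}\|^2+\mathbf{1}\{i\in S_t\}\|\tilde{\boldsymbol{\lambda}}_i^{t}-\boldsymbol{\gamma}^{t-1}\|^2 .
\]
Both norms are $\mathcal{F}_{t-1}$-measurable. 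This holds because $\tilde{\boldsymbol{\lambda}}_i^t$ is determined by $\boldsymbol{\lambda}_0^{t-1}$ and $\boldsymbol{\mu}_i^{t-1}$, while $S_t$ is drawn independently with $\mathbb{P}(i\in S_t)=\omega=m/n$. Conditioning on $\mathcal{F}_{t-1}$ as in Lemma~\ref{lemma:unbiased_sampling} therefore gives
\[
\frac1n\sum_{i=1}^n\mathbb{E}\|\boldsymbol{\lambda}_i^t-\boldsymbol{\gamma}^{t-1}\|^2=(1-\omega)G_{t-1}+\omega\,\epsilon_t .
\]

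Combining the two pieces yields
\[
G_t\le(1+\alpha)(1-\omega)G_{t-1}+\bigl[(1+\alpha)\omega+1+\tfrac1\alpha\bigr]\epsilon_t .
\]
We choose $\alpha=\frac{m}{2n-m}$, so that $1+\alpha=\frac{2n}{2n-m}$. With this choice the coefficients become $(1+\alpha)(1-\omega)=\frac{2(n-m)}{2n-m}$, $(1+\alpha)\omega=\frac{2m}{2n-m}$ and $1+\frac1\alpha=\frac{2n}{m}$, which are exactly the claimed constants. The factor $\frac{2(n-m)}{2n-m}<1$ is what makes $G_t$ contractive in the later potential argument.

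The only delicate point is justifying the measurability and independence used in the sampling step. In particular, $\tilde{\boldsymbol{\lambda}}_i^t$ must be well defined and $\mathcal{F}_{t-1}$-measurable for every $i$, not only for $i\in S_t$. This follows from the definition of the virtual variables, and uniqueness of the minimizer comes from the proximal term for small $\eta$ together with Assumption~\ref{ass:phi_strong_convex_nonconvex}. The remaining steps are routine algebra.
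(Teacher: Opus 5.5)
Your proposal is correct and follows the paper's proof essentially step for step. It uses the same decomposition of $\boldsymbol{\lambda}_i^t-\boldsymbol{\gamma}^t$ around $\boldsymbol{\gamma}^{t-1}$, the same Young parameter $\frac{m}{2n-m}$, the same split by membership in $S_t$ giving $(1-\frac{m}{n})G_{t-1}+\frac{m}{n}\epsilon_t$, and the same appeal to Lemma~\ref{lemma:gamma_bound} for $\mathbb{E}\|\boldsymbol{\gamma}^t-\boldsymbol{\gamma}^{t-1}\|^2\le\epsilon_t$; your remarks on the $\mathcal{F}_{t-1}$-measurability and well-definedness of $\tilde{\boldsymbol{\lambda}}_i^t$ add rigor the paper leaves implicit.
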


\begin{proof}
Write
\[
\boldsymbol{\lambda}_i^{t}-\boldsymbol{\gamma}^{t}
=
\bigl(\boldsymbol{\lambda}_i^{t}-\boldsymbol{\gamma}^{t-1}\bigr)
+
\bigl(\boldsymbol{\gamma}^{t-1}-\boldsymbol{\gamma}^{t}\bigr).
\]
Using the inequality
\[
\|a+b\|_2^2
\le
(1+z)\|a\|_2^2
+
\Bigl(1+\frac{1}{z}\Bigr)\|b\|_2^2,
\qquad \forall z>0,
\]
with \(z=\frac{m}{2n-m}\), we obtain
\begin{align*}
G_t
&=
\frac{1}{n}\sum_{i=1}^n
\mathbb{E}\bigl\|\boldsymbol{\lambda}_i^{t}-\boldsymbol{\gamma}^{t}\bigr\|_2^2, \\
&\le
\Bigl(1+\frac{m}{2n-m}\Bigr)
\frac{1}{n}\sum_{i=1}^n
\mathbb{E}\bigl\|
\boldsymbol{\lambda}_i^{t}-\boldsymbol{\gamma}^{t-1}
\bigr\|_2^2
+
\Bigl(1+\frac{2n-m}{m}\Bigr)
\mathbb{E}\bigl\|
\boldsymbol{\gamma}^{t-1}-\boldsymbol{\gamma}^{t}
\bigr\|_2^2, \\
&\le
\frac{m}{n}\Bigl(1+\frac{m}{2n-m}\Bigr)
\frac{1}{n}\sum_{i=1}^n
\mathbb{E}\bigl\|
\boldsymbol{\tilde{\lambda}}_i^{t}-\boldsymbol{\gamma}^{t-1}
\bigr\|_2^2
+ \left(1-\frac{m}{n}\right)\Bigl(1+\frac{m}{2n-m}\Bigr)
\frac{1}{n}\sum_{i=1}^n
\mathbb{E}\bigl\|
\boldsymbol{\lambda}_i^{t-1}-\boldsymbol{\gamma}^{t-1}
\bigr\|_2^2\\
& \quad +
\Bigl(1+\frac{2n-m}{m}\Bigr)
\mathbb{E}\bigl\|
\boldsymbol{\gamma}^{t-1}-\boldsymbol{\gamma}^{t}
\bigr\|_2^2, \\
&=
\frac{2m}{2n-m}\,\epsilon_t
+
\frac{2(n-m)}{2n-m}\,G_{t-1}
+
\frac{2n}{m}\,
\mathbb{E}\bigl\|
\boldsymbol{\gamma}^{t-1}-\boldsymbol{\gamma}^{t}
\bigr\|_2^2.
\end{align*}
By Lemma \ref{lemma:gamma_bound}, we have
\begin{align*}
G_t
&\le
\frac{2(n-m)}{2n-m}\,G_{t-1}
+
\left(
\frac{2m}{2n-m}
+
\frac{2n}{m}
\right)\epsilon_t .
\end{align*}
\end{proof}

\begin{lemma} \label{lemma:combine_nonconvex}
\begin{align} \nonumber
\mathbb{E}\!\left[f(\boldsymbol{\phi}^t,\boldsymbol{\gamma}^t)\right]
-
\mathbb{E}\!\left[f(\boldsymbol{\phi}^{t-1},\boldsymbol{\gamma}^{t-1})\right]
\le\;
&\Biggl(\frac{L}{2} +
\frac{\eta L^2}{2}
+
\frac{L^2}{\mu^2}\bigl(mL+\eta L^2\bigr)
\Biggr)\epsilon_t  +
\Biggl(
\frac{Lm}{2}
+
\frac{L^2}{\mu^2}\bigl(mL+\eta L^2\bigr)
\Biggr) G_{t-1}
\\ 
&
-
\frac{\eta}{2}\,
\mathbb{E}\Bigl\|
\nabla_{\boldsymbol{\lambda}} f(\boldsymbol{\phi}^{t-1},\boldsymbol{\gamma}^{t-1})
\Bigr\|^2.
\end{align}
\begin{proof}
By \(L\)-smoothness of \(f\),
\begin{align*}
\mathbb{E}\!\left[f(\boldsymbol{\phi}^t,\boldsymbol{\gamma}^t)\right]
-
\mathbb{E}\!\left[f(\boldsymbol{\phi}^{t-1},\boldsymbol{\gamma}^{t-1})\right]
&\le
\mathbb{E}\!\left[
\left\langle
\nabla_\lambda f(\boldsymbol{\phi}^{t-1},\boldsymbol{\gamma}^{t-1}),
\boldsymbol{\gamma}^t-\boldsymbol{\gamma}^{t-1}
\right\rangle
\right] \\
&\quad+
\mathbb{E}\!\left[
\left\langle
\nabla_\phi f(\boldsymbol{\phi}^{t-1},\boldsymbol{\gamma}^{t-1}),
\boldsymbol{\phi}^t-\boldsymbol{\phi}^{t-1}
\right\rangle
\right] \\
&\quad+
\frac{L}{2}\mathbb{E}\|\boldsymbol{\gamma}^t-\boldsymbol{\gamma}^{t-1}\|^2
+
\frac{L}{2}\mathbb{E}\|\boldsymbol{\phi}^t-\boldsymbol{\phi}^{t-1}\|^2 .
\end{align*}

Since
\[
\mathbb{E}\|\boldsymbol{\phi}^t-\boldsymbol{\phi}^{t-1}\|^2
=
\sum_{i=1}^n \frac{m}{n}\,
\mathbb{E}\|\tilde{\boldsymbol{\phi}}_i^t-\boldsymbol{\phi}_i^{t-1}\|^2
=
m\,\delta_t,
\]
we obtain
\begin{align*}
& \quad \;\mathbb{E}\!\left[f(\boldsymbol{\phi}^t,\boldsymbol{\gamma}^t)\right]
-
\mathbb{E}\!\left[f(\boldsymbol{\phi}^{t-1},\boldsymbol{\gamma}^{t-1})\right]
-
\frac{L}{2}\mathbb{E}\|\boldsymbol{\gamma}^t-\boldsymbol{\gamma}^{t-1}\|^2
-
\frac{mL}{2}\delta_t\\
&\le
\mathbb{E}\!\left[
\left\langle
\nabla_\lambda f(\boldsymbol{\phi}^{t-1},\boldsymbol{\gamma}^{t-1}),
\boldsymbol{\gamma}^t-\boldsymbol{\gamma}^{t-1}
\right\rangle
\right] +
\mathbb{E}\!\left[
\left\langle
\nabla_\phi f(\boldsymbol{\phi}^{t-1},\boldsymbol{\gamma}^{t-1}),
\boldsymbol{\phi}^t-\boldsymbol{\phi}^{t-1}
\right\rangle
\right],\\
&\le
\eta\,\mathbb{E}\!\left[
\left\langle
\nabla_\lambda f(\boldsymbol{\phi}^{t-1},\boldsymbol{\gamma}^{t-1}),
\frac{1}{n}\sum_{i=1}^n
\bigl(-\nabla_{\boldsymbol{\lambda}_i} f_i(\tilde{\boldsymbol{\phi}}_i^t,\tilde{\boldsymbol{\lambda}}_i^t)\bigr)
\right\rangle
\right]
+
\frac{m}{n}\sum_{i=1}^n
\mathbb{E}\!\left[
\left\langle
\nabla_{\boldsymbol{\phi}_i} f_i(\boldsymbol{\phi}_i^{t-1},\boldsymbol{\gamma}^{t-1}),
\tilde{\boldsymbol{\phi}}_i^t-\boldsymbol{\phi}_i^{t-1}
\right\rangle
\right]. \quad \left(\text{by Lemma \ref{lemma:gamma_expectation}} \right)
\end{align*}

Since 
\begin{align*}
& \quad \eta\,\mathbb{E}\!\left[
\left\langle
\nabla_{\boldsymbol{\lambda}} f(\boldsymbol{\phi}^{t-1},\boldsymbol{\gamma}^{t-1}),
\frac{1}{n}\sum_{i=1}^n
\bigl(-\nabla_{\boldsymbol{\lambda}_i} f_i(\tilde{\boldsymbol{\phi}}_i^t,\tilde{\boldsymbol{\lambda}}_i^t)\bigr)
\right\rangle
\right] \\
& \leq 
\eta\,\mathbb{E}\!\left[
\left\langle
\nabla_{\boldsymbol{\lambda}} f(\boldsymbol{\phi}^{t-1},\boldsymbol{\gamma}^{t-1}),
\nabla_{\boldsymbol{\lambda}} f(\boldsymbol{\phi}^{t-1},\boldsymbol{\gamma}^{t-1})
-
\frac{1}{n}\sum_{i=1}^n
\nabla_{\boldsymbol{\lambda}_i} f_i(\tilde{\boldsymbol{\phi}}_i^t,\tilde{\boldsymbol{\lambda}}_i^t)
\right\rangle
\right]
-
\eta\,\mathbb{E}\!\left\|
\nabla_{\boldsymbol{\lambda}} f(\boldsymbol{\phi}^{t-1},\boldsymbol{\gamma}^{t-1})
\right\|^2 \\
& \leq 
\frac{\eta}{2}\,
\mathbb{E}\!\left[
\left\|
\nabla_{\boldsymbol{\lambda}} f(\boldsymbol{\phi}^{t-1},\boldsymbol{\gamma}^{t-1})
\right\|^2
+
\left\|
\frac{1}{n}\sum_{i=1}^n
\nabla_{\boldsymbol{\lambda}_i} f_i(\tilde{\boldsymbol{\phi}}_i^t,\tilde{\boldsymbol{\lambda}}_i^t)
\right\|^2
\right]
-
\eta\,\mathbb{E}\!\left\|
\nabla_{\boldsymbol{\lambda}} f(\boldsymbol{\phi}^{t-1},\boldsymbol{\gamma}^{t-1})
\right\|^2 \\
& \leq 
\frac{\eta}{2n}\sum_{i=1}^n
\mathbb{E}\Bigl\|
\nabla_{\boldsymbol{\lambda}_i} f_i(\boldsymbol{\phi}_i^{t-1},\boldsymbol{\gamma}^{t-1})
-
\nabla_{\boldsymbol{\lambda}_i} f_i(\tilde{\boldsymbol{\phi}}_i^{t},\tilde{\boldsymbol{\lambda}}_i^{t})
\Bigr\|^2
-
\frac{\eta}{2}\,
\mathbb{E}\Bigl\|
\nabla_{\boldsymbol{\lambda}} f(\boldsymbol{\phi}^{t-1},\boldsymbol{\gamma}^{t-1})
\Bigr\|^2,\\
& \leq 
\frac{\eta L^2}{2n}\sum_{i=1}^n
\mathbb{E}\left[\|\boldsymbol{\phi}_i^{t-1} - \tilde{\boldsymbol{\phi}}_i^{t} \|^2 +  \|\boldsymbol{\gamma}^{t-1} - \tilde{\boldsymbol{\lambda}}_i^{t} \|^2\right]
-
\frac{\eta}{2}\,
\mathbb{E}\Bigl\|
\nabla_{\boldsymbol{\lambda}} f(\boldsymbol{\phi}^{t-1},\boldsymbol{\gamma}^{t-1})
\Bigr\|^2, \quad \left(\text{by $L$-smoothness} \right)\\
& \leq \frac{\eta L^2}{2} \delta_t + \frac{\eta L^2}{2} \epsilon_t - \frac{\eta}{2}\,
\mathbb{E}\Bigl\|
\nabla_{\boldsymbol{\lambda}} f(\boldsymbol{\phi}^{t-1},\boldsymbol{\gamma}^{t-1})
\Bigr\|^2,
\end{align*}
And 
\begin{align*}
\frac{m}{n}\sum_{i=1}^n
\mathbb{E}\!\left[
\left\langle
\nabla_{\boldsymbol{\phi}_i} f_i(\boldsymbol{\phi}_i^{t-1},\boldsymbol{\gamma}^{t-1}),
\tilde{\boldsymbol{\phi}}_i^t-\boldsymbol{\phi}_i^{t-1}
\right\rangle
\right] & \leq \frac{m}{n}\sum_{i=1}^n
\mathbb{E}\!\left[
\left\|
\nabla_{\boldsymbol{\phi}_i} f_i(\boldsymbol{\phi}_i^{t-1},\boldsymbol{\gamma}^{t-1})\right\| \;
\left\| \tilde{\boldsymbol{\phi}}_i^t-\boldsymbol{\phi}_i^{t-1}
\right\|
\right],\\
& \leq \frac{m}{n}\sum_{i=1}^n
\mathbb{E}\!\left[
\left\|
\nabla_{\boldsymbol{\phi}_i} f_i(\boldsymbol{\phi}_i^{t-1},\boldsymbol{\gamma}^{t-1}) - \nabla_{\boldsymbol{\phi}_i} f_i(\boldsymbol{\phi}_i^{t-1},\boldsymbol{\lambda}_i^{t-1})\right\| \;
\left\| \tilde{\boldsymbol{\phi}}_i^t-\boldsymbol{\phi}_i^{t-1}
\right\|
\right],\\
& \leq \frac{Lm}{n}\sum_{i=1}^n
\mathbb{E}\!\left[
\left\|
\boldsymbol{\gamma}^{t-1} - \boldsymbol{\lambda}_i^{t-1}\right\| \;
\left\| \tilde{\boldsymbol{\phi}}_i^t-\boldsymbol{\phi}_i^{t-1}
\right\|
\right], \quad \left(\text{by $L$-smoothness} \right)\\
& \leq \frac{Lm}{2n}\sum_{i=1}^n
\mathbb{E}\!\left[
\left\|
\boldsymbol{\gamma}^{t-1} - \boldsymbol{\lambda}_i^{t-1}\right\|^2 + 
\left\| \tilde{\boldsymbol{\phi}}_i^t-\boldsymbol{\phi}_i^{t-1}
\right\|^2
\right],\\
& = \frac{Lm}{2}(G_{t-1} + \delta_t).
\end{align*}
Combining all of these, we have
\begin{align*}
    \mathbb{E}\!\left[f(\boldsymbol{\phi}^t,\boldsymbol{\gamma}^t)\right]
-
\mathbb{E}\!\left[f(\boldsymbol{\phi}^{t-1},\boldsymbol{\gamma}^{t-1})\right]
-
\frac{L}{2}\mathbb{E}\|\boldsymbol{\gamma}^t-\boldsymbol{\gamma}^{t-1}\|^2
-
\frac{mL}{2}\delta_t \leq \frac{\eta L^2}{2} \delta_t + \frac{\eta L^2}{2} \epsilon_t - \frac{\eta}{2}\,
\mathbb{E}\Bigl\|
\nabla_{\boldsymbol{\lambda}} f(\boldsymbol{\phi}^{t-1},\boldsymbol{\gamma}^{t-1})
\Bigr\|^2 + \frac{Lm}{2}(G_{t-1} + \delta_t).
\end{align*}
Reorganize it, we have
\begin{align*}
\mathbb{E}\!\left[f(\boldsymbol{\phi}^t,\boldsymbol{\gamma}^t)\right]
-
\mathbb{E}\!\left[f(\boldsymbol{\phi}^{t-1},\boldsymbol{\gamma}^{t-1})\right]
&\le
\frac{L}{2}\mathbb{E}\|\boldsymbol{\gamma}^t-\boldsymbol{\gamma}^{t-1}\|^2
+
\Bigl(\frac{mL}{2}+\frac{\eta L^2}{2}\Bigr)\delta_t
\\
& \; +
\frac{\eta L^2}{2}\epsilon_t
+
\frac{Lm}{2}G_{t-1}
-
\frac{\eta}{2}\,
\mathbb{E}\Bigl\|
\nabla_{\boldsymbol{\lambda}} f(\boldsymbol{\phi}^{t-1},\boldsymbol{\gamma}^{t-1})
\Bigr\|^2,\\
& \leq \frac{L}{2}\mathbb{E}\|\boldsymbol{\gamma}^t-\boldsymbol{\gamma}^{t-1}\|^2 +
\Biggl(
\frac{Lm}{2}
+
\frac{L^2}{\mu^2}\bigl(mL+\eta L^2\bigr)
\Biggr) G_{t-1}
+
\Biggl(
\frac{\eta L^2}{2}
+
\frac{L^2}{\mu^2}\bigl(mL+\eta L^2\bigr)
\Biggr)\epsilon_t
\\
&-
\frac{\eta}{2}\,
\mathbb{E}\Bigl\|
\nabla_{\boldsymbol{\lambda}} f(\boldsymbol{\phi}^{t-1},\boldsymbol{\gamma}^{t-1})
\Bigr\|^2,  \quad \left(\text{by Lemma \ref{lemma:nonconvex_delta_bound}} \right)
\end{align*}
By Lemma \ref{lemma:gamma_bound}, we have
\begin{align*}
\mathbb{E}\!\left[f(\boldsymbol{\phi}^t,\boldsymbol{\gamma}^t)\right]
-
\mathbb{E}\!\left[f(\boldsymbol{\phi}^{t-1},\boldsymbol{\gamma}^{t-1})\right]
\le\;
&\Biggl(\frac{L}{2} +
\frac{\eta L^2}{2}
+
\frac{L^2}{\mu^2}\bigl(mL+\eta L^2\bigr)
\Biggr)\epsilon_t  +
\Biggl(
\frac{Lm}{2}
+
\frac{L^2}{\mu^2}\bigl(mL+\eta L^2\bigr)
\Biggr) G_{t-1}
\\
&
-
\frac{\eta}{2}\,
\mathbb{E}\Bigl\|
\nabla_{\boldsymbol{\lambda}} f(\boldsymbol{\phi}^{t-1},\boldsymbol{\gamma}^{t-1})
\Bigr\|^2.
\end{align*}

\end{proof}
\end{lemma}


\begin{lemma}\label{lemma:main_nonconvex}
Assume
\[
1-4\eta^2 L^2\Bigl(1+\frac{2L^2}{\mu^2}\Bigr)>0, \quad c_1 >0, \quad c_2 >0
\]
where $c_1$ and $c_2$ are defined below. Moreover, a sufficient condition for these assumptions is
\(
\eta \le \min\Biggl\{
\frac{1}{2\sqrt{2}\,L\sqrt{1+2\kappa}},
\;
\frac{\omega}{256\,L\,(1+\kappa)}
\Biggr\},
\qquad
\omega:=\frac{m}{n}\in(0,1),\ \kappa:=\frac{L^2}{\mu^2}.
\)
Define
\[
\rho:=\frac{2(n-m)}{2n-m},
\qquad
\xi:=\frac{2m}{2n-m}+\frac{2n}{m},
\]
and
\[
A:=
\frac{L}{2}+\frac{\eta L^2}{2}+\frac{L^2}{\mu^2}(mL+\eta L^2),
\qquad
B:=
\frac{Lm}{2}+\frac{L^2}{\mu^2}(mL+\eta L^2).
\]
Let
\[
p:=
\frac{8\eta^2 L^2\bigl(1+\frac{L^2}{\mu^2}\bigr)}
{1-4\eta^2 L^2\bigl(1+\frac{2L^2}{\mu^2}\bigr)},
\qquad
q:=
\frac{4\eta^2}
{1-4\eta^2 L^2\bigl(1+\frac{2L^2}{\mu^2}\bigr)}.
\]
Choose
\[
c_1
:=
-\frac{Ap+B}{\xi p+(\rho-1)}
=
-\frac{Ap+B}{\xi p-\frac{m}{2n-m}},
\]
and define
\[
c_2
:=
\frac{\eta}{2}-(A+c_1 \xi)q.
\]
Then the iterates satisfy
\begin{align}
&\mathbb{E}\!\left[f(\boldsymbol{\phi}^t,\boldsymbol{\gamma}^t)\right]
+c_1\,G_t \le
\mathbb{E}\!\left[f(\boldsymbol{\phi}^{t-1},\boldsymbol{\gamma}^{t-1})\right]
+c_1\,G_{t-1}
-
c_2\,
\mathbb{E}\Bigl\|
\nabla_{\boldsymbol{\lambda}} f(\boldsymbol{\phi}^{t-1},\boldsymbol{\gamma}^{t-1})
\Bigr\|^2 .
\end{align}
\end{lemma}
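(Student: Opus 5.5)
The plan is a Lyapunov-type combination of the three recursions already established for the nonconvex case. Write $g_{t-1}:=\mathbb{E}\|\nabla_{\boldsymbol{\lambda}} f(\boldsymbol{\phi}^{t-1},\boldsymbol{\gamma}^{t-1})\|^2$ and $F_t:=\mathbb{E}[f(\boldsymbol{\phi}^t,\boldsymbol{\gamma}^t)]$.

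The three inputs are as follows. Lemma~\ref{lemma:combine_nonconvex} gives the descent inequality $F_t-F_{t-1}\le A\epsilon_t+BG_{t-1}-\frac{\eta}{2}g_{t-1}$. The preceding (unlabeled) lemma on $G_t$ gives $G_t\le \rho G_{t-1}+\xi\epsilon_t$. Lemma~\ref{lemma:epsilon_nonconvex}, after dividing by $1-4\eta^2L^2(1+2L^2/\mu^2)>0$, gives $\epsilon_t\le pG_{t-1}+q\,g_{t-1}$.

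First multiply the $G_t$ recursion by $c_1>0$ and add it to the descent inequality:
\[
F_t+c_1G_t\le F_{t-1}+c_1G_{t-1}+(A+c_1\xi)\epsilon_t+\bigl(B+c_1(\rho-1)\bigr)G_{t-1}-\tfrac{\eta}{2}g_{t-1}.
\]
The coefficient $A+c_1\xi$ is positive, so we may substitute the $\epsilon_t$ bound. This produces a $G_{t-1}$ coefficient of $(A+c_1\xi)p+B+c_1(\rho-1)$ and a $g_{t-1}$ coefficient of $-\frac{\eta}{2}+(A+c_1\xi)q=-c_2$. The choice $c_1=-(Ap+B)/(\xi p+\rho-1)$ is exactly the one that makes the $G_{t-1}$ coefficient vanish. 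A direct computation gives $\rho-1=-\frac{m}{2n-m}$, and this yields the claimed inequality.

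The remaining and main step is to verify the sufficient step-size condition, i.e.\ that $c_1>0$ and $c_2>0$. For $c_1>0$ we need $\xi p<\frac{m}{2n-m}$. Since $p=O(\eta^2L^2(1+\kappa))$ and $\xi\le 2/\omega+2$, this requires roughly $\eta^2L^2(1+\kappa)\lesssim \omega^2$. The condition $\eta\le \omega/(256L(1+\kappa))$ gives this, while $\eta\le 1/(2\sqrt2 L\sqrt{1+2\kappa})$ keeps the denominator of $p$ and $q$ at least $1/2$.

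For $c_2>0$ we need $(A+c_1\xi)q<\eta/2$, where $q\le 8\eta^2$. Here $A$ contains the term $\kappa mL$, so we must bound $c_1\xi$ by $O\bigl((A p+B)\xi\,(2n-m)/m\bigr)$, which is $O\bigl(B/\omega^2\bigr)$ with $B\sim mL\kappa$. The condition $c_2>0$ then reduces to $\eta\cdot mL(1+\kappa)/\omega^2\lesssim 1$.

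This last reduction is the obstacle I expect. The stated bound carries a factor $1/\omega$ rather than the factor $m$ that appears in $A$ and $B$, so the constants $256$ and the factor $m$ may not reconcile directly. I would first try tracking constants carefully with $\omega$-scalings, treating $m$ as absorbed. If that fails, I would flag that the sufficient condition needs an extra $1/m$ factor and otherwise present the combination argument as above.
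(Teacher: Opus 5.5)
Your combination step is the paper's proof. You scale the $G_t$ recursion by $c_1$, add it to the descent bound of Lemma~\ref{lemma:combine_nonconvex}, substitute $\epsilon_t\le pG_{t-1}+q\,g_{t-1}$ from Lemma~\ref{lemma:epsilon_nonconvex}, and pick $c_1$ to cancel the $G_{t-1}$ coefficient. Your remark that $A+c_1\xi>0$ is needed before substituting is left implicit in the paper. The paper also checks $1-\alpha_\eta\ge\frac12$ and $c_1>0$ as you sketch. It then only asserts that the same step size makes $(A+c_1\xi)q\le\eta/2$.

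The obstacle you flag is genuine: that assertion is false. So careful constant-tracking, and in particular ``absorbing $m$'', cannot rescue it. Since $c_1>0$ means $0<\frac{m}{2n-m}-\xi p\le\frac{m}{2n-m}$, you get $c_1\ge B\,\frac{2n-m}{m}\ge B/\omega$. Together with $\xi>2/\omega$ and $q\ge 4\eta^2$ this gives
\[
c_2<\frac{\eta}{2}-\frac{8B\eta^2}{\omega^2}=\frac{\eta}{2}\Bigl(1-\frac{16B\eta}{\omega^2}\Bigr).
\]
Now $B\ge mL(\frac12+\kappa)\ge\frac12 mL(1+\kappa)$. Take $\eta=\omega/(256L(1+\kappa))$, which is always the smaller entry of the stated min. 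Then $16B\eta/\omega^2\ge n/32$, so $c_2<0$ for every $n\ge 32$.

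Conversely, your upper-bound estimates show that an added condition $\eta\lesssim\omega^2/(mL(1+\kappa))=\omega/(nL(1+\kappa))$ does give $c_2>0$. These estimates are: the denominator of $c_1$ is of order $\omega$ under the stated bound, $\xi\le 4/\omega$, and $q\le 8\eta^2$. Note that the missing factor relative to the stated bound is of order $1/n$, not $1/m$ as you wrote.

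The factor $m$ itself comes from Lemma~\ref{lemma:combine_nonconvex}. That lemma uses $\nabla_{\boldsymbol{\phi}_i}f_i$ where actually $\nabla_{\boldsymbol{\phi}_i}f=\frac1n\nabla_{\boldsymbol{\phi}_i}f_i$. It also uses the joint constant $L$ in the $\boldsymbol{\phi}$-quadratic term, whereas averaging the descent lemma over $i$ gives $\frac{L}{2n}\|\boldsymbol{\phi}^t-\boldsymbol{\phi}^{t-1}\|^2$. Tracking this $1/n$ replaces $m$ by $\omega$ in $A$ and $B$, and a bound of the stated order $\eta=O(\omega/(L(1+\kappa)))$ then becomes the right one.

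So the correct outcome of your proposal is this: the displayed inequality holds under the three explicit hypotheses, but the ``Moreover'' sufficient condition must be strengthened, or $A$ and $B$ recomputed.
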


\begin{proof}
Start from the three inequalities:
\begin{align}
(1-\alpha_\eta)\epsilon_t
&\le
\beta_\eta G_{t-1}
+
4\eta^2\,
\mathbb{E}\Bigl\|
\nabla_{\boldsymbol{\lambda}} f(\boldsymbol{\phi}^{t-1},\boldsymbol{\gamma}^{t-1})
\Bigr\|^2,
\label{eq:eps-bound}
\\
G_t
&\le
\rho\,G_{t-1}+\xi\,\epsilon_t,
\label{eq:G-rec}
\\
\mathbb{E}\!\left[f(\boldsymbol{\phi}^t,\boldsymbol{\gamma}^t)\right]
-
\mathbb{E}\!\left[f(\boldsymbol{\phi}^{t-1},\boldsymbol{\gamma}^{t-1})\right]
&\le
A\,\epsilon_t+B\,G_{t-1}
-\frac{\eta}{2}\,
\mathbb{E}\Bigl\|
\nabla_{\boldsymbol{\lambda}} f(\boldsymbol{\phi}^{t-1},\boldsymbol{\gamma}^{t-1})
\Bigr\|^2,
\label{eq:f-descent}
\end{align}
where
\[
\alpha_\eta:=4\eta^2 L^2\Bigl(1+\frac{2L^2}{\mu^2}\Bigr),
\qquad
\beta_\eta:=8\eta^2 L^2\Bigl(1+\frac{L^2}{\mu^2}\Bigr).
\]
Since \(1-\alpha_\eta>0\), \eqref{eq:eps-bound} implies
\[
\epsilon_t \le p\,G_{t-1}+q\,
\mathbb{E}\Bigl\|
\nabla_{\boldsymbol{\lambda}} f(\boldsymbol{\phi}^{t-1},\boldsymbol{\gamma}^{t-1})
\Bigr\|^2.
\]

Multiply \eqref{eq:G-rec} by \(c_1\) and add it to \eqref{eq:f-descent} to obtain
\begin{align*}
&\mathbb{E}\!\left[f(\boldsymbol{\phi}^t,\boldsymbol{\gamma}^t)\right]+c_1 G_t
-
\Bigl(
\mathbb{E}\!\left[f(\boldsymbol{\phi}^{t-1},\boldsymbol{\gamma}^{t-1})\right]+c_1 G_{t-1}
\Bigr)
\\
&\qquad\le
(A+c_1 \xi)\epsilon_t+\bigl(B+c_1(\rho-1)\bigr)G_{t-1}
-\frac{\eta}{2}\,
\mathbb{E}\Bigl\|
\nabla_{\boldsymbol{\lambda}} f(\boldsymbol{\phi}^{t-1},\boldsymbol{\gamma}^{t-1})
\Bigr\|^2.
\end{align*}
Substituting the bound on \(\epsilon_t\) yields
\begin{align*}
&\mathbb{E}\!\left[f(\boldsymbol{\phi}^t,\boldsymbol{\gamma}^t)\right]+c_1 G_t
-
\Bigl(
\mathbb{E}\!\left[f(\boldsymbol{\phi}^{t-1},\boldsymbol{\gamma}^{t-1})\right]+c_1 G_{t-1}
\Bigr)
\\
&\qquad\le
\Bigl((A+c_1 \xi)p+B+c_1(\rho-1)\Bigr)G_{t-1}
+
\Bigl((A+c_1 \xi)q-\frac{\eta}{2}\Bigr)
\mathbb{E}\Bigl\|
\nabla_{\boldsymbol{\lambda}} f(\boldsymbol{\phi}^{t-1},\boldsymbol{\gamma}^{t-1})
\Bigr\|^2.
\end{align*}
With the choice
\(
c_1=-(Ap+B)/(\xi p+(\rho-1))
\),
the coefficient of \(G_{t-1}\) vanishes, leaving
\[
\mathbb{E}\!\left[f(\boldsymbol{\phi}^t,\boldsymbol{\gamma}^t)\right]+c_1 G_t
\le
\mathbb{E}\!\left[f(\boldsymbol{\phi}^{t-1},\boldsymbol{\gamma}^{t-1})\right]+c_1 G_{t-1}
-
c_2
\mathbb{E}\Bigl\|
\nabla_{\boldsymbol{\lambda}} f(\boldsymbol{\phi}^{t-1},\boldsymbol{\gamma}^{t-1})
\Bigr\|^2,
\]
where \(c_2:=\frac{\eta}{2}-(A+c_1 \xi)q\). This proves the claim.

Next, we will calculate the range for $\eta$.
Let $\omega:=m/n\in(0,1)$ and $\kappa:=L^2/\mu^2$.
First, $1-\alpha_\eta>0$ holds if $\alpha_\eta \le \tfrac12$, i.e.,
\[
4\eta^2L^2(1+2\kappa)\le \frac12
\quad\Longleftrightarrow\quad
\eta \le \frac{1}{2\sqrt{2}\,L\sqrt{1+2\kappa}}.
\]
Under $\alpha_\eta\le \tfrac12$, we have $1-\alpha_\eta\ge \tfrac12$ and hence
\[
p \le 16\eta^2L^2(1+\kappa),
\qquad
q \le 8\eta^2.
\]
Moreover, since $\rho-1=-\frac{m}{2n-m}$, we have
\[
c_1>0
\ \Longleftrightarrow\
\xi p+(\rho-1)<0
\ \Longleftrightarrow\
\xi p < \frac{m}{2n-m}.
\]
Using $\frac{m}{2n-m}=\frac{\omega}{2-\omega}\ge \frac{\omega}{2}$ and
$\xi=\frac{2m}{2n-m}+\frac{2n}{m}=\frac{2\omega}{2-\omega}+\frac{2}{\omega}\le \frac{4}{\omega}$,
a sufficient condition is
\[
\frac{4}{\omega}\cdot 16\eta^2L^2(1+\kappa)\le \frac{\omega}{2},
\]
which is implied by $\eta \le \frac{\omega}{256\,L(1+\kappa)}$.
Under the same step size bound, $q=O(\eta^2)$ and $p=O(\eta^2L^2(1+\kappa))$
are small enough so that $(A+c_1\xi)q\le \eta/2$, hence $c_2>0$.
Therefore,
\[
\eta \le \min\Biggl\{
\frac{1}{2\sqrt{2}\,L\sqrt{1+2\kappa}},
\;
\frac{\omega}{256\,L\,(1+\kappa)}
\Biggr\}
\]
is sufficient for $1-\alpha_\eta>0$, $c_1>0$, and $c_2>0$.
\end{proof}

\subsection{Accelerating Algorithm: Theoretical Analysis}
In the accelerated algorithm, we assign distinct constant step size $\eta$ to different blocks of the global variable $\boldsymbol{\lambda}$. 
Assume that $\boldsymbol{\lambda}$ is partitioned into $B$ blocks, and that all coordinates within the same block share the same step size. 
We use $(\boldsymbol{\lambda}_i)_j$ to denote the $j$-th block of the global variable $\boldsymbol{\lambda}$ for function $i$.
Define
\begin{align*}
G_t 
&:= 
\frac{1}{n}\sum_{i=1}^n 
\mathbb{E}\!\left[
\left\|
\boldsymbol{\lambda}_i^t - \boldsymbol{\gamma}^t
\right\|^2
\right], 
\qquad
\delta_t 
:= 
\frac{1}{n}\sum_{i=1}^n 
\mathbb{E}\!\left[
\left\|
\boldsymbol{\phi}_i^{t-1} - \tilde{\boldsymbol{\phi}}_i^{t}
\right\|^2
\right].
\end{align*}
Moreover, define
\begin{align*}
\epsilon_t 
&:= 
\frac{1}{n}\sum_{i=1}^n 
\mathbb{E}\!\left[
\left\|
\tilde{\boldsymbol{\lambda}}_i^t - \boldsymbol{\gamma}^{t-1}
\right\|^2
\right], 
\qquad
(\epsilon_t)_j 
:=
\frac{1}{n}\sum_{i=1}^n 
\mathbb{E}\!\left[
\left\|
(\tilde{\boldsymbol{\lambda}}_i^t)_j - (\boldsymbol{\gamma}^{t-1})_j
\right\|^2
\right].
\end{align*}
If $\|\cdot\|$ denotes the (block-separable) Euclidean norm, then
\[
G_t=\sum_{j=1}^B (G_t)_j,
\qquad
\epsilon_t=\sum_{j=1}^B (\epsilon_t)_j.
\]
So we have $\epsilon_t = \sum_{j=1}^B (\epsilon_t)_j$.
In the nonconvex case, we make the following two assumptions. 
\begin{assumption}[Block-wise smoothness] 
For each $i\in[n]$, the function $f_i(\boldsymbol{\phi},\boldsymbol{\lambda})$ is continuously differentiable.
Moreover, there exist constants $L_{\phi}\in(0,\infty)$ and $\{L_{\lambda,j}\}_{j=1}^B \subset (0,\infty)$ such that for all
$(\boldsymbol{\phi}_1,\boldsymbol{\lambda}_1),(\boldsymbol{\phi}_2,\boldsymbol{\lambda}_2)\in\mathbb{R}^{d_\phi}\times\mathbb{R}^{d_\lambda}$,
\begin{align*}
\left\|
\nabla_{\boldsymbol{\phi}} f_i(\boldsymbol{\phi}_1,\boldsymbol{\lambda}_1)
-
\nabla_{\boldsymbol{\phi}} f_i(\boldsymbol{\phi}_2,\boldsymbol{\lambda}_2)
\right\|
&\le
L_{\phi}\,
\left\|
(\boldsymbol{\phi}_1,\boldsymbol{\lambda}_1)
-
(\boldsymbol{\phi}_2,\boldsymbol{\lambda}_2)
\right\|,\\[3pt]
\left\|
\bigl(\nabla_{\boldsymbol{\lambda}} f_i(\boldsymbol{\phi}_1,\boldsymbol{\lambda}_1)\bigr)_j
-
\bigl(\nabla_{\boldsymbol{\lambda}} f_i(\boldsymbol{\phi}_2,\boldsymbol{\lambda}_2)\bigr)_j
\right\|
&\le
L_{\lambda,j}\,
\left\|
(\boldsymbol{\phi}_1,\boldsymbol{\lambda}_1)
-
(\boldsymbol{\phi}_2,\boldsymbol{\lambda}_2)
\right\|,
\qquad \forall j\in[B].
\end{align*}
Here, $(\cdot)_j$ denotes the $j$-th block of a vector under the partition of $\boldsymbol{\lambda}$.
\end{assumption}

\begin{assumption} 
For each $i\in[n]$, the function $f_i(\boldsymbol{\phi},\boldsymbol{\lambda})$ is twice continuously differentiable
and there exists
a constant $\mu\in(0,\infty)$ such that for all $i\in[n]$ and all $(\boldsymbol{\phi},\boldsymbol{\lambda})\in\mathbb{R}^{d_\phi}\times\mathbb{R}^{d_\lambda}$,
\[
\nabla^2_{\boldsymbol{\phi}\boldsymbol{\phi}} f_i(\boldsymbol{\phi},\boldsymbol{\lambda}) \succeq \mu \mathbf{I}_{d_\phi}.
\]
\end{assumption}
\begin{theorem}[Nonconvex convergence with block-wise step size]
\label{thm:nonconvex_main_block}
Let $r:=\frac{m}{n}\in(0,1)$ and $\kappa_\phi:=\frac{L_{\boldsymbol{\phi}}^2}{\mu^2}$.
Suppose all assumptions in Lemma~\ref{lemma:main_nonconvex_block} hold.
Define
\[
\eta_{\min}:=\min_{j\in[B]}\eta_j,
\qquad
\eta_{\max}:=\max_{j\in[B]}\eta_j,
\qquad
S:=\sum_{j=1}^B \eta_jL_j^2,
\quad \rho:=\frac{2(n-m)}{2n-m}\in(0,1),
\qquad
\xi:=\frac{2m}{2n-m}+\frac{2n}{m}.
\]
Let $c_1,c_2$ be the constants defined in Lemma~\ref{lemma:main_nonconvex_block}.
Assume the step size satisfies
\begin{align*}
\sum_{j=1}^B \eta_j^2 L_j^2
<
\frac{1}{4(1+2\kappa_\phi)}, \quad
8\xi(1+\kappa_\phi)\sum_{j=1}^B \eta_j^2 L_j^2
<
(1-\rho)\,D, \quad
8(E+c_1\xi)\,\eta_{\max}
<
D.
\end{align*}
Then for the iterates $\{(\boldsymbol{\phi}^t,\boldsymbol{\gamma}^t)\}_{t=0}^T$
generated by the algorithm, we have
\begin{align}
\mathbb{E}\!\left[\frac{1}{T}\sum_{t=1}^T
\Bigl\|
\nabla_{\boldsymbol{\lambda}} f(\boldsymbol{\phi}^{t-1},\boldsymbol{\gamma}^{t-1})
\Bigr\|_{\mathbf{H}}^2\right]
\;\le\;
\frac{1}{T}\frac{1}{c_2}\Bigl(
\mathbb{E}\!\left[f(\boldsymbol{\phi}^{0},\boldsymbol{\gamma}^{0})\right]-f^*
+c_1\,G_0
\Bigr).
\label{eq:block_weighted_rate}
\end{align}
Moreover, this can be further extended to
\begin{align}
\sum_{j=1}^B \eta_j \mathbb{E}\!\left[\frac{1}{T}\sum_{t=1}^T
\Bigl\|
(\nabla_{\boldsymbol{\lambda}} f(\boldsymbol{\phi}^{t-1},\boldsymbol{\gamma}^{t-1}))_j
\Bigr\|^2\right]
\;\le\;
\frac{1}{T}\frac{1}{c_2}\Bigl(
\mathbb{E}\!\left[f(\boldsymbol{\phi}^{0},\boldsymbol{\gamma}^{0})\right]-f^*
+c_1\,G_0
\Bigr).
\label{eq:block_euclidean_rate}
\end{align}
In particular, under any step size regime for which $c_2=\Theta(1)$ and
$c_1=O\!\bigl(nL(1+\kappa_\phi)\bigr)$, we have
\[
\sum_{j=1}^B \eta_j \mathbb{E}\!\left[\frac{1}{T}\sum_{t=1}^T
\Bigl\|
(\nabla_{\boldsymbol{\lambda}} f(\boldsymbol{\phi}^{t-1},\boldsymbol{\gamma}^{t-1}))_j
\Bigr\|^2\right]
\;=\;
O\!\left(
\frac{1}{T}\Bigl(
\mathbb{E}[f(\boldsymbol{\phi}^{0},\boldsymbol{\gamma}^{0})]-f^*
+
nL(1+\kappa_\phi)G_0
\Bigr)\right).
\]
\end{theorem}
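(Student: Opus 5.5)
The proof follows the template of Theorem~\ref{thm:nonconvex_main}. The descent lemma is taken from Lemma~\ref{lemma:main_nonconvex_block}, which I assume provides, for the weighting matrix $\mathbf{H}=\mathrm{blkdiag}(\eta_1\mathbf{I}_{d_1},\ldots,\eta_B\mathbf{I}_{d_B})=\mathbf{D}_\eta^{-1}$, the one-step inequality
\[
\mathbb{E}[f(\boldsymbol{\phi}^t,\boldsymbol{\gamma}^t)]+c_1G_t\le \mathbb{E}[f(\boldsymbol{\phi}^{t-1},\boldsymbol{\gamma}^{t-1})]+c_1G_{t-1}-c_2\,\mathbb{E}\bigl\|\nabla_{\boldsymbol{\lambda}} f(\boldsymbol{\phi}^{t-1},\boldsymbol{\gamma}^{t-1})\bigr\|_{\mathbf{H}}^2 .
\]
Before using it, I would check that the three step-size conditions in the statement are exactly what make the block analogues of the constants positive. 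These are $1-\alpha>0$ with $\alpha=4(1+2\kappa_\phi)\sum_j\eta_j^2L_j^2$, then $c_1>0$ (i.e.\ $\xi p<1-\rho$), and then $c_2>0$ (i.e.\ $(E+c_1\xi)q<\eta_{\min}/2$-type, expressed via $D$ and $\eta_{\max}$).

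For reference, the ingredients of that lemma are the block versions of the three inequalities from the Euclidean case. The first is the $\epsilon$-bound: with the preconditioned dual update, \eqref{eq:true_equal_virtual} becomes $\tilde{\boldsymbol{\lambda}}_i^t-\boldsymbol{\lambda}_0^{t-1}=\mathbf{H}(\nabla_{\boldsymbol{\lambda}}f_i(\boldsymbol{\phi}_i^{t-1},\boldsymbol{\lambda}_i^{t-1})-\nabla_{\boldsymbol{\lambda}}f_i(\tilde{\boldsymbol{\phi}}_i^t,\tilde{\boldsymbol{\lambda}}_i^t))$. Working block by block gives $(\epsilon_t)_j\lesssim\eta_j^2L_j^2(\cdot)$, and summing over $j$ produces $\sum_j\eta_j^2L_j^2$. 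The sensitivity bound of Lemma~\ref{lemma:sensitivity} is used with $L_\phi/\mu$. The second is the $G$-recursion, which is unchanged. The third is the descent inequality. Here Lemma~\ref{lemma:gamma_expectation} gives $\mathbb{E}[\boldsymbol{\gamma}^t-\boldsymbol{\gamma}^{t-1}]=-\mathbf{H}\,\frac1n\sum_i\mathbb{E}\nabla_{\boldsymbol{\lambda}}f_i(\tilde{\boldsymbol{\phi}}_i^t,\tilde{\boldsymbol{\lambda}}_i^t)$, so the inner product yields $-\tfrac12\|\nabla_{\boldsymbol{\lambda}}f\|_{\mathbf{H}}^2$ plus an error controlled in the $\mathbf{H}$-norm by $\eta_{\max}$.

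Given that one-step inequality, the theorem follows by telescoping over $t=1,\dots,T$. Doing so gives $c_2\sum_{t}\mathbb{E}\|\nabla_{\boldsymbol{\lambda}}f(\boldsymbol{\phi}^{t-1},\boldsymbol{\gamma}^{t-1})\|_{\mathbf{H}}^2\le \mathbb{E}[f(\boldsymbol{\phi}^0,\boldsymbol{\gamma}^0)]-f^*+c_1G_0-(\mathbb{E}[f(\boldsymbol{\phi}^T,\boldsymbol{\gamma}^T)]-f^*+c_1G_T)$. The last bracket is nonnegative because $f\ge f^*$, $c_1>0$ and $G_T\ge0$, so it can be dropped. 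Dividing by $c_2T$ gives \eqref{eq:block_weighted_rate}. Identity \eqref{eq:block_euclidean_rate} is then immediate from block separability: $\|\boldsymbol{v}\|_{\mathbf{H}}^2=\sum_j\eta_j\|(\boldsymbol{v})_j\|^2$, and linearity of expectation lets us exchange the sum over $j$ with the average over $t$. The final $O(\cdot)$ statement is a substitution of $c_2=\Theta(1)$ and $c_1=O(nL(1+\kappa_\phi))$.

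The main obstacle is verifying positivity of $c_1$ and $c_2$ from the stated conditions. I would mimic the end of the proof of Lemma~\ref{lemma:main_nonconvex} in four steps:
\begin{itemize}
\item[(i)] Use $\sum_j\eta_j^2L_j^2<1/(4(1+2\kappa_\phi))$ to get $1-\alpha\ge$ a positive $D$-type denominator, so that $p\le 8(1+\kappa_\phi)\sum_j\eta_j^2L_j^2/D$.
\item[(ii)] Note that $c_1>0$ is equivalent to $\xi p<1-\rho$, which is exactly the second condition.
\item[(iii)] Bound $q\le 4\eta_{\max}^2/D$.
\item[(iv)] Compare against the descent coefficient. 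This is the delicate step: the gain is measured in the $\mathbf{H}$-norm, while the error term $4\eta^2\|\nabla f\|^2$ in the $\epsilon$-bound carries one extra power of $\eta$ blockwise. It should be rewritten as $4\sum_j\eta_j^2\|(\nabla f)_j\|^2\le 4\eta_{\max}\|\nabla f\|_{\mathbf{H}}^2$, so that $c_2=\tfrac12-(E+c_1\xi)\cdot 4\eta_{\max}/D>0$ follows from the third condition.
\end{itemize}
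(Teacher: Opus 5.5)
Your proposal is correct and follows the paper's proof. Like the paper, you telescope the one-step inequality of Lemma~\ref{lemma:main_nonconvex_block}, drop the nonnegative bracket $\mathbb{E}[f(\boldsymbol{\phi}^T,\boldsymbol{\gamma}^T)]-f^*+c_1G_T$, and divide by $c_2T$; your re-verification of $c_1,c_2>0$ is optional, since that lemma already concludes it. For the block-wise bound you use the identity $\|\boldsymbol{v}\|_{\mathbf{H}}^2=\sum_{j}\eta_j\|(\boldsymbol{v})_j\|^2$, which is the right justification and cleaner than the paper's appeal to $\mathbf{H}\succeq\eta_{\min}\mathbf{I}$, since that would only give an $\eta_{\min}$-weighted bound rather than the stated one.
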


\begin{proof}
Lemma~\ref{lemma:main_nonconvex_block} gives:
\begin{align}
\mathbb{E}\!\left[f(\boldsymbol{\phi}^t,\boldsymbol{\gamma}^t)\right]
+c_1G_t
\le
\mathbb{E}\!\left[f(\boldsymbol{\phi}^{t-1},\boldsymbol{\gamma}^{t-1})\right]
+c_1G_{t-1}
-
c_2\,
\mathbb{E}\Bigl\|
\nabla_{\boldsymbol{\lambda}} f(\boldsymbol{\phi}^{t-1},\boldsymbol{\gamma}^{t-1})
\Bigr\|_{\mathbf{H}}^2 .
\label{eq:lyapunov_step}
\end{align}
Rearranging \eqref{eq:lyapunov_step} yields
\begin{align}
c_2\,
\mathbb{E}\Bigl\|
\nabla_{\boldsymbol{\lambda}} f(\boldsymbol{\phi}^{t-1},\boldsymbol{\gamma}^{t-1})
\Bigr\|_{\mathbf{H}}^2
\le
\Bigl(
\mathbb{E}\!\left[f(\boldsymbol{\phi}^{t-1},\boldsymbol{\gamma}^{t-1})\right]-f^*
+c_1G_{t-1}
\Bigr)
-
\Bigl(
\mathbb{E}\!\left[f(\boldsymbol{\phi}^{t},\boldsymbol{\gamma}^{t})\right]-f^*
+c_1G_{t}
\Bigr).
\label{eq:lyapunov_rearrange}
\end{align}
Summing \eqref{eq:lyapunov_rearrange} over $t=1,\ldots,T$ gives a telescoping series:
\begin{align}
c_2\sum_{t=1}^T
\mathbb{E}\Bigl\|
\nabla_{\boldsymbol{\lambda}} f(\boldsymbol{\phi}^{t-1},\boldsymbol{\gamma}^{t-1})
\Bigr\|_{\mathbf{H}}^2
\le
\Bigl(
\mathbb{E}\!\left[f(\boldsymbol{\phi}^{0},\boldsymbol{\gamma}^{0})\right]-f^*
+c_1G_{0}
\Bigr)
-
\Bigl(
\mathbb{E}\!\left[f(\boldsymbol{\phi}^{T},\boldsymbol{\gamma}^{T})\right]-f^*
+c_1G_{T}
\Bigr).
\end{align}
Dropping the non-positive term on the right-hand side yields
\[
c_2\sum_{t=1}^T
\mathbb{E}\Bigl\|
\nabla_{\boldsymbol{\lambda}} f(\boldsymbol{\phi}^{t-1},\boldsymbol{\gamma}^{t-1})
\Bigr\|_{\mathbf{H}}^2
\le
\mathbb{E}\!\left[f(\boldsymbol{\phi}^{0},\boldsymbol{\gamma}^{0})\right]-f^*
+c_1G_{0}.
\]
Dividing both sides by $Tc_2$ proves \eqref{eq:block_weighted_rate}.

Finally, since $\mathbf{H}\succeq \eta_{\min}\mathbf{I}$, we have
$\|\boldsymbol{v}\|_{\mathbf{H}}^2 \ge \eta_{\min}\|\boldsymbol{v}\|^2$.
Applying this to \eqref{eq:block_weighted_rate} immediately yields
\eqref{eq:block_euclidean_rate}.
\end{proof}
\begin{lemma}[From block-wise smoothness to global smoothness] \label{lemma:block_to_global_smooth}
Suppose Assumption~\ref{ass:block_smooth_nonconvex} holds. 
Define the concatenated variable $\boldsymbol{z}:=(\boldsymbol{\phi},\boldsymbol{\lambda})\in\mathbb{R}^{d_\phi+d_\lambda}$ and equip $\mathbb{R}^{d_\phi+d_\lambda}$ with the Euclidean norm
\[
\|\boldsymbol{z}\|^2=\|\boldsymbol{\phi}\|^2+\|\boldsymbol{\lambda}\|^2
=\|\boldsymbol{\phi}\|^2+\sum_{j=1}^B \|(\boldsymbol{\lambda})_j\|^2.
\]
Then for each $i\in[n]$, the full gradient mapping $\nabla f_i(\boldsymbol{z})$ is globally Lipschitz continuous, i.e., for all 
$\boldsymbol{z}_1=(\boldsymbol{\phi}_1,\boldsymbol{\lambda}_1)$ and 
$\boldsymbol{z}_2=(\boldsymbol{\phi}_2,\boldsymbol{\lambda}_2)$,
\[
\bigl\|
\nabla f_i(\boldsymbol{z}_1)-\nabla f_i(\boldsymbol{z}_2)
\bigr\|
\;\le\;
L\,\|\boldsymbol{z}_1-\boldsymbol{z}_2\|,
\qquad
L:=\sqrt{L_\phi^2+\sum_{j=1}^B L_{\lambda,j}^2}.
\]
\end{lemma}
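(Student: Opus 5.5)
The plan is to work directly from the definition of the Euclidean norm on the concatenated gradient and then apply Assumption~\ref{ass:block_smooth_nonconvex} block by block. Fix $i\in[n]$ and two points $\boldsymbol{z}_1=(\boldsymbol{\phi}_1,\boldsymbol{\lambda}_1)$ and $\boldsymbol{z}_2=(\boldsymbol{\phi}_2,\boldsymbol{\lambda}_2)$. The full gradient $\nabla f_i(\boldsymbol{z})$ is the concatenation of $\nabla_{\boldsymbol{\phi}} f_i(\boldsymbol{z})$ and the $B$ blocks $(\nabla_{\boldsymbol{\lambda}} f_i(\boldsymbol{z}))_j$. Since the norm is block-separable, as stated in the lemma, we get the exact identity
\begin{align*}
\bigl\|\nabla f_i(\boldsymbol{z}_1)-\nabla f_i(\boldsymbol{z}_2)\bigr\|^2
=\bigl\|\nabla_{\boldsymbol{\phi}} f_i(\boldsymbol{z}_1)-\nabla_{\boldsymbol{\phi}} f_i(\boldsymbol{z}_2)\bigr\|^2
+\sum_{j=1}^B\bigl\|(\nabla_{\boldsymbol{\lambda}} f_i(\boldsymbol{z}_1))_j-(\nabla_{\boldsymbol{\lambda}} f_i(\boldsymbol{z}_2))_j\bigr\|^2 .
\end{align*}

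Next we bound each term on the right using Assumption~\ref{ass:block_smooth_nonconvex}. The $\boldsymbol{\phi}$-block contributes at most $L_\phi^2\|\boldsymbol{z}_1-\boldsymbol{z}_2\|^2$. The $j$-th $\boldsymbol{\lambda}$-block contributes at most $L_{\lambda,j}^2\|\boldsymbol{z}_1-\boldsymbol{z}_2\|^2$. The key point is that the assumption measures every block against the \emph{full} displacement $\|(\boldsymbol{\phi}_1,\boldsymbol{\lambda}_1)-(\boldsymbol{\phi}_2,\boldsymbol{\lambda}_2)\|$, so all bounds share the common factor $\|\boldsymbol{z}_1-\boldsymbol{z}_2\|^2$. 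Summing gives
\[
\bigl\|\nabla f_i(\boldsymbol{z}_1)-\nabla f_i(\boldsymbol{z}_2)\bigr\|^2\le\Bigl(L_\phi^2+\sum_{j=1}^B L_{\lambda,j}^2\Bigr)\|\boldsymbol{z}_1-\boldsymbol{z}_2\|^2 .
\]
Taking square roots yields the claim with $L=\sqrt{L_\phi^2+\sum_j L_{\lambda,j}^2}$. This holds uniformly in $i$, because the constants in the assumption do not depend on $i$.

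There is no real obstacle here. The only point needing care is to confirm that the right-hand side of each inequality in Assumption~\ref{ass:block_smooth_nonconvex} involves the joint norm, rather than just the norm of the corresponding block of the displacement; this is what allows the squared bounds to be added directly without any cross terms. It is also worth noting that this squared-sum argument gives the sharper constant $\sqrt{\sum(\cdot)^2}$ rather than the cruder $L_\phi+\sum_j L_{\lambda,j}$ that the triangle inequality would produce.
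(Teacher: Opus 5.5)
Your proposal is correct and follows essentially the same route as the paper's proof. Both split the squared gradient difference into the $\boldsymbol{\phi}$-block and the $B$ blocks of $\boldsymbol{\lambda}$, bound each one by Assumption~\ref{ass:block_smooth_nonconvex} against the full joint displacement $\|\boldsymbol{z}_1-\boldsymbol{z}_2\|$, sum the bounds, and take square roots to obtain $L=\sqrt{L_\phi^2+\sum_j L_{\lambda,j}^2}$.
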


\begin{proof}
Let $\boldsymbol{z}_1=(\boldsymbol{\phi}_1,\boldsymbol{\lambda}_1)$ and $\boldsymbol{z}_2=(\boldsymbol{\phi}_2,\boldsymbol{\lambda}_2)$ be arbitrary.
By the definition of the full gradient and the block partition of $\boldsymbol{\lambda}$, we have
\[
\nabla f_i(\boldsymbol{z})
=
\begin{pmatrix}
\nabla_{\boldsymbol{\phi}} f_i(\boldsymbol{\phi},\boldsymbol{\lambda})\\
\nabla_{\boldsymbol{\lambda}} f_i(\boldsymbol{\phi},\boldsymbol{\lambda})
\end{pmatrix},
\qquad
\text{and}\qquad
\|\nabla_{\boldsymbol{\lambda}} f_i(\cdot)\|^2
=
\sum_{j=1}^B
\bigl\|(\nabla_{\boldsymbol{\lambda}} f_i(\cdot))_j\bigr\|^2.
\]
Therefore,
\begin{align*}
\bigl\|
\nabla f_i(\boldsymbol{z}_1)-\nabla f_i(\boldsymbol{z}_2)
\bigr\|^2
&=
\bigl\|
\nabla_{\boldsymbol{\phi}} f_i(\boldsymbol{z}_1)-\nabla_{\boldsymbol{\phi}} f_i(\boldsymbol{z}_2)
\bigr\|^2
+
\bigl\|
\nabla_{\boldsymbol{\lambda}} f_i(\boldsymbol{z}_1)-\nabla_{\boldsymbol{\lambda}} f_i(\boldsymbol{z}_2)
\bigr\|^2\\
&=
\bigl\|
\nabla_{\boldsymbol{\phi}} f_i(\boldsymbol{z}_1)-\nabla_{\boldsymbol{\phi}} f_i(\boldsymbol{z}_2)
\bigr\|^2
+
\sum_{j=1}^B
\bigl\|
(\nabla_{\boldsymbol{\lambda}} f_i(\boldsymbol{z}_1))_j-(\nabla_{\boldsymbol{\lambda}} f_i(\boldsymbol{z}_2))_j
\bigr\|^2.
\end{align*}
Applying Assumption~\ref{ass:block_smooth_nonconvex} yields
\[
\bigl\|
\nabla_{\boldsymbol{\phi}} f_i(\boldsymbol{z}_1)-\nabla_{\boldsymbol{\phi}} f_i(\boldsymbol{z}_2)
\bigr\|
\le
L_\phi \|\boldsymbol{z}_1-\boldsymbol{z}_2\|,
\qquad
\bigl\|
(\nabla_{\boldsymbol{\lambda}} f_i(\boldsymbol{z}_1))_j-(\nabla_{\boldsymbol{\lambda}} f_i(\boldsymbol{z}_2))_j
\bigr\|
\le
L_{\lambda,j}\|\boldsymbol{z}_1-\boldsymbol{z}_2\|.
\]
Substituting the above bounds gives
\[
\bigl\|
\nabla f_i(\boldsymbol{z}_1)-\nabla f_i(\boldsymbol{z}_2)
\bigr\|^2
\le
\left(
L_\phi^2+\sum_{j=1}^B L_{\lambda,j}^2
\right)
\|\boldsymbol{z}_1-\boldsymbol{z}_2\|^2.
\]
Taking square roots on both sides completes the proof.
\end{proof}

\begin{lemma}
\begin{align*}
(\epsilon_t)_j
&\le
4\eta_j^2 L_j^2\!\left(
\Bigl(2+\frac{2L_{\boldsymbol{\phi}}^2}{\mu^2}\Bigr)G_{t-1}
+
\Bigl(1+\frac{2L_{\boldsymbol{\phi}}^2}{\mu^2}\Bigr)\epsilon_t
\right)
+
4\eta_j^2\,
\mathbb{E}\!\left\|
(\nabla_{\boldsymbol{\lambda}_i} f(\boldsymbol{\phi}^{t-1},\boldsymbol{\gamma}^{\,t-1}))_j
\right\|^2 .
\end{align*}
\begin{proof}
For each $j \in [B]$, we can derive an analogous result to Lemma \ref{lemma:epsilon_nonconvex}. 
\begin{align*}
(\epsilon_t)_j
&\le
\frac{8\eta_j^2}{n}\sum_{i=1}^n
\mathbb{E}\!\left\|
(\nabla_{\boldsymbol{\lambda}_i} f_i(\boldsymbol{\phi}_i^{t-1},\boldsymbol{\lambda}_i^{t-1}))_j
-
(\nabla_{\boldsymbol{\lambda}_i} f_i(\boldsymbol{\phi}_i^{t-1},\boldsymbol{\gamma}^{\,t-1}))_j
\right\|^2
+
\frac{4\eta_j^2}{n}\sum_{i=1}^n
\mathbb{E}\!\left\|
(\nabla_{\boldsymbol{\lambda}_i} f_i(\boldsymbol{\phi}_i^{t-1},\boldsymbol{\gamma}^{\,t-1}))_j
-
(\nabla_{\boldsymbol{\lambda}_i} f_i(\tilde{\boldsymbol{\phi}}_i^{t},\tilde{\boldsymbol{\lambda}}_i^{t}))_j
\right\|^2.
\\
&\quad+
4\eta_j^2\,
\mathbb{E}\!\left\|
(\nabla_{\boldsymbol{\lambda}_i} f(\boldsymbol{\phi}^{t-1},\boldsymbol{\gamma}^{t-1}))_j
\right\|^2.
\end{align*}
Since we have 
\begin{align*}
&\left\|
(\nabla_{\boldsymbol{\lambda}_i} f_i(\boldsymbol{\phi}_i^{t-1},\boldsymbol{\lambda}_i^{t-1}))_j
-
(\nabla_{\boldsymbol{\lambda}_i} f_i(\boldsymbol{\phi}_i^{t-1},\boldsymbol{\gamma}^{\,t-1}))_j
\right\|^2 \leq L_j^2 \|\boldsymbol{\lambda}_i^{t-1} - \boldsymbol{\gamma}^{\,t-1}\|^2,\\
&\left\|
(\nabla_{\boldsymbol{\lambda}_i} f_i(\boldsymbol{\phi}_i^{t-1},\boldsymbol{\gamma}^{\,t-1}))_j
-
(\nabla_{\boldsymbol{\lambda}_i} f_i(\tilde{\boldsymbol{\phi}}_i^{t},\tilde{\boldsymbol{\lambda}}_i^{t}))_j
\right\|^2 \leq  L_j^2 \left(\|\tilde{\boldsymbol{\lambda}}_i^{t} - \boldsymbol{\gamma}^{\,t-1}\|^2 + \|\tilde{\boldsymbol{\phi}}_i^{t} - \boldsymbol{\phi}^{\,t-1}\|^2 \right),
\end{align*}
we can obtain
\begin{align*}
(\epsilon_t)_j
&\le
8\eta_j^2 L_j^2\,G_{t-1}
+
4\eta_j^2 L_j^2\,(\epsilon_t+\delta_t)
+
4\eta_j^2\,
\mathbb{E}\!\left\|
(\nabla_{\boldsymbol{\lambda}_i} f(\boldsymbol{\phi}^{t-1},\boldsymbol{\gamma}^{\,t-1}))_j
\right\|^2.
\end{align*}
We can extend Lemma~\ref{lemma:sensitivity} and Lemma~\ref{lemma:nonconvex_delta_bound} by replacing $L$ with $L_{\boldsymbol{\phi}}$, so we have
\begin{align}\label{eq:delta_bound}
\delta_t
\;\le\;
\frac{2L_{\boldsymbol{\phi}}^2}{\mu^2}\bigl(G_{t-1}+\epsilon_t\bigr).
\end{align}
Substituting \eqref{eq:delta_bound} into the previous bound and rearranging terms yield
\begin{align*}
(\epsilon_t)_j
&\le
8\eta_j^2 L_j^2\,G_{t-1}
+
4\eta_j^2 L_j^2\!\left(
\epsilon_t
+
\frac{2L_{\boldsymbol{\phi}}^2}{\mu^2}\bigl(G_{t-1}+\epsilon_t\bigr)
\right)
+
4\eta_j^2\,
\mathbb{E}\!\left\|
(\nabla_{\boldsymbol{\lambda}_i} f(\boldsymbol{\phi}^{t-1},\boldsymbol{\gamma}^{\,t-1}))_j
\right\|^2 \\[3pt]
&=
4\eta_j^2 L_j^2\!\left(
\Bigl(2+\frac{2L_{\boldsymbol{\phi}}^2}{\mu^2}\Bigr)G_{t-1}
+
\Bigl(1+\frac{2L_{\boldsymbol{\phi}}^2}{\mu^2}\Bigr)\epsilon_t
\right)
+
4\eta_j^2\,
\mathbb{E}\!\left\|
(\nabla_{\boldsymbol{\lambda}_i} f(\boldsymbol{\phi}^{t-1},\boldsymbol{\gamma}^{\,t-1}))_j
\right\|^2 .
\end{align*}
\end{proof}
\end{lemma}

\begin{lemma}
Let
\(
\mathbf{H}
:=
\mathrm{blkdiag}\bigl(\eta_1\mathbf{I}_{d_1},\ldots,\eta_B\mathbf{I}_{d_B}\bigr),
S
:=
\sum_{j=1}^B \eta_j L_j^2,
\)
and define the $\mathbf{H}$-weighted norm by
\(
\|\boldsymbol{v}\|_{\mathbf{H}}^2
:=
\boldsymbol{v}^\top \mathbf{H}\boldsymbol{v}.
\)
Then we have
\begin{align} \nonumber
\mathbb{E}\!\left[f(\boldsymbol{\phi}^t,\boldsymbol{\gamma}^t)\right]
-
\mathbb{E}\!\left[f(\boldsymbol{\phi}^{t-1},\boldsymbol{\gamma}^{t-1})\right]
\le\;
&\Biggl(
\frac{L}{2}
+
\frac{S}{2}
+
\frac{L_{\boldsymbol{\phi}}^2}{\mu^2}\bigl(mL+S\bigr)
\Biggr)\epsilon_t
+
\Biggl(
\frac{Lm}{2}
+
\frac{L_{\boldsymbol{\phi}}^2}{\mu^2}\bigl(mL+S\bigr)
\Biggr) G_{t-1}
\\
&\;-\frac{1}{2}\,
\mathbb{E}\Bigl\|
\nabla_{\boldsymbol{\lambda}} f(\boldsymbol{\phi}^{t-1},\boldsymbol{\gamma}^{t-1})
\Bigr\|_{\mathbf{H}}^2.
\end{align}
\begin{proof}
By \(L\)-smoothness of \(f\), where $L:=\sqrt{L_\phi^2+\sum_{j=1}^B L_{\lambda,j}^2}$, we have
\begin{align*}
\mathbb{E}\!\left[f(\boldsymbol{\phi}^t,\boldsymbol{\gamma}^t)\right]
-
\mathbb{E}\!\left[f(\boldsymbol{\phi}^{t-1},\boldsymbol{\gamma}^{t-1})\right]
&\le
\mathbb{E}\!\left[
\left\langle
\nabla_\lambda f(\boldsymbol{\phi}^{t-1},\boldsymbol{\gamma}^{t-1}),
\boldsymbol{\gamma}^t-\boldsymbol{\gamma}^{t-1}
\right\rangle
\right]+
\mathbb{E}\!\left[
\left\langle
\nabla_\phi f(\boldsymbol{\phi}^{t-1},\boldsymbol{\gamma}^{t-1}),
\boldsymbol{\phi}^t-\boldsymbol{\phi}^{t-1}
\right\rangle
\right] \\
&\quad+
\frac{L}{2}\mathbb{E}\|\boldsymbol{\gamma}^t-\boldsymbol{\gamma}^{t-1}\|^2
+
\frac{L}{2}\mathbb{E}\|\boldsymbol{\phi}^t-\boldsymbol{\phi}^{t-1}\|^2 .
\end{align*}

Since
\[
\mathbb{E}\|\boldsymbol{\phi}^t-\boldsymbol{\phi}^{t-1}\|^2
=
\sum_{i=1}^n \frac{m}{n}\,
\mathbb{E}\|\tilde{\boldsymbol{\phi}}_i^t-\boldsymbol{\phi}_i^{t-1}\|^2
=
m\,\delta_t,
\]
we obtain
\begin{align*}
& \quad \;\mathbb{E}\!\left[f(\boldsymbol{\phi}^t,\boldsymbol{\gamma}^t)\right]
-
\mathbb{E}\!\left[f(\boldsymbol{\phi}^{t-1},\boldsymbol{\gamma}^{t-1})\right]
-
\frac{L}{2}\mathbb{E}\|\boldsymbol{\gamma}^t-\boldsymbol{\gamma}^{t-1}\|^2
-
\frac{mL}{2}\delta_t\\
&\le
\mathbb{E}\!\left[
\left\langle
\nabla_\lambda f(\boldsymbol{\phi}^{t-1},\boldsymbol{\gamma}^{t-1}),
\boldsymbol{\gamma}^t-\boldsymbol{\gamma}^{t-1}
\right\rangle
\right] +
\mathbb{E}\!\left[
\left\langle
\nabla_\phi f(\boldsymbol{\phi}^{t-1},\boldsymbol{\gamma}^{t-1}),
\boldsymbol{\phi}^t-\boldsymbol{\phi}^{t-1}
\right\rangle
\right],\\
&\le
\eta\,\mathbb{E}\!\left[
\left\langle
\nabla_\lambda f(\boldsymbol{\phi}^{t-1},\boldsymbol{\gamma}^{t-1}),
\frac{1}{n}\sum_{i=1}^n
\bigl(-\nabla_{\boldsymbol{\lambda}_i} f_i(\tilde{\boldsymbol{\phi}}_i^t,\tilde{\boldsymbol{\lambda}}_i^t)\bigr)
\right\rangle
\right]
+
\frac{m}{n}\sum_{i=1}^n
\mathbb{E}\!\left[
\left\langle
\nabla_{\boldsymbol{\phi}_i} f_i(\boldsymbol{\phi}_i^{t-1},\boldsymbol{\gamma}^{t-1}),
\tilde{\boldsymbol{\phi}}_i^t-\boldsymbol{\phi}_i^{t-1}
\right\rangle
\right]. \quad \left(\text{by Lemma \ref{lemma:gamma_expectation}} \right)
\end{align*}
Define the block-wise step size matrix
\[
\mathbf{H}
:=
\mathrm{blkdiag}\bigl(\eta_1\mathbf{I}_{d_1},\ldots,\eta_B\mathbf{I}_{d_B}\bigr),
\qquad
\|\boldsymbol{v}\|_{\mathbf{H}}^2 := \boldsymbol{v}^\top \mathbf{H}\boldsymbol{v}
= \sum_{j=1}^B \eta_j \|(\boldsymbol{v})_j\|^2 .
\]

By extending Lemma~\ref{lemma:gamma_expectation}, we have
\begin{align}
\mathbb{E}\!\left[\boldsymbol{\gamma}^{t}-\boldsymbol{\gamma}^{t-1}\right]
&=
-\frac{1}{n}\sum_{i=1}^n
\mathbb{E}\!\left[
\mathbf{H} \,\nabla_{\boldsymbol{\lambda}_i}
f_i\!\left(\tilde{\boldsymbol{\phi}}_i^{t}, \tilde{\boldsymbol{\lambda}}_i^{t}\right)
\right].
\end{align}
Equivalently, for each $j\in[B]$,
\begin{align*}
\mathbb{E}\!\left[(\boldsymbol{\gamma}^{t})_j-(\boldsymbol{\gamma}^{t-1})_j\right]
&=
-\frac{\eta_j}{n}\sum_{i=1}^n
\mathbb{E}\!\left[
\bigl(\nabla_{\!\boldsymbol{\lambda}_i}
f_i\!(\tilde{\boldsymbol{\phi}}_i^{t}, \tilde{\boldsymbol{\lambda}}_i^{t})\bigr)_j
\right].
\end{align*}

As a result,
\begin{align*}
&\quad
\mathbb{E}\!\left[f(\boldsymbol{\phi}^t,\boldsymbol{\gamma}^t)\right]
-
\mathbb{E}\!\left[f(\boldsymbol{\phi}^{t-1},\boldsymbol{\gamma}^{t-1})\right]
-
\frac{L}{2}\mathbb{E}\|\boldsymbol{\gamma}^t-\boldsymbol{\gamma}^{t-1}\|^2
-
\frac{mL}{2}\delta_t \\
&\le
\mathbb{E}\!\left[
\left\langle
\nabla_{\boldsymbol{\lambda}} f(\boldsymbol{\phi}^{t-1},\boldsymbol{\gamma}^{t-1}),
-\frac{1}{n}\sum_{i=1}^n
\mathbf{H}\,\nabla_{\boldsymbol{\lambda}_i} f_i(\tilde{\boldsymbol{\phi}}_i^t,\tilde{\boldsymbol{\lambda}}_i^t)
\right\rangle
\right]
+
\frac{m}{n}\sum_{i=1}^n
\mathbb{E}\!\left[
\left\langle
\nabla_{\boldsymbol{\phi}_i} f_i(\boldsymbol{\phi}_i^{t-1},\boldsymbol{\gamma}^{t-1}),
\tilde{\boldsymbol{\phi}}_i^t-\boldsymbol{\phi}_i^{t-1}
\right\rangle
\right].
\end{align*}

Moreover,
\begin{align*}
&\quad \mathbb{E}\!\left[
\left\langle
\nabla_{\boldsymbol{\lambda}} f(\boldsymbol{\phi}^{t-1},\boldsymbol{\gamma}^{t-1}),
\frac{1}{n}\sum_{i=1}^n
\bigl(-\mathbf{H}\nabla_{\boldsymbol{\lambda}_i} f_i(\tilde{\boldsymbol{\phi}}_i^t,\tilde{\boldsymbol{\lambda}}_i^t)\bigr)
\right\rangle
\right] \\
&=
\mathbb{E}\!\left[
\left\langle
\nabla_{\boldsymbol{\lambda}} f(\boldsymbol{\phi}^{t-1},\boldsymbol{\gamma}^{t-1}),
\mathbf{H}\!\left(
\nabla_{\boldsymbol{\lambda}} f(\boldsymbol{\phi}^{t-1},\boldsymbol{\gamma}^{t-1})
-
\frac{1}{n}\sum_{i=1}^n
\nabla_{\boldsymbol{\lambda}_i} f_i(\tilde{\boldsymbol{\phi}}_i^t,\tilde{\boldsymbol{\lambda}}_i^t)
\right)
\right\rangle
\right]
-
\mathbb{E}\!\left\|
\nabla_{\boldsymbol{\lambda}} f(\boldsymbol{\phi}^{t-1},\boldsymbol{\gamma}^{t-1})
\right\|_{\mathbf{H}}^2 \\
&\le
\frac{1}{2}\,
\mathbb{E}\!\left[
\left\|
\nabla_{\boldsymbol{\lambda}} f(\boldsymbol{\phi}^{t-1},\boldsymbol{\gamma}^{t-1})
\right\|_{\mathbf{H}}^2
+
\left\|
\nabla_{\boldsymbol{\lambda}} f(\boldsymbol{\phi}^{t-1},\boldsymbol{\gamma}^{t-1})
-
\frac{1}{n}\sum_{i=1}^n
\nabla_{\boldsymbol{\lambda}_i} f_i(\tilde{\boldsymbol{\phi}}_i^t,\tilde{\boldsymbol{\lambda}}_i^t)
\right\|_{\mathbf{H}}^2
\right]
-
\mathbb{E}\!\left\|
\nabla_{\boldsymbol{\lambda}} f(\boldsymbol{\phi}^{t-1},\boldsymbol{\gamma}^{t-1})
\right\|_{\mathbf{H}}^2 \\
&=
\frac{1}{2}\,
\mathbb{E}\!\left[
\left\|
\nabla_{\boldsymbol{\lambda}} f(\boldsymbol{\phi}^{t-1},\boldsymbol{\gamma}^{t-1})
-
\frac{1}{n}\sum_{i=1}^n
\nabla_{\boldsymbol{\lambda}_i} f_i(\tilde{\boldsymbol{\phi}}_i^t,\tilde{\boldsymbol{\lambda}}_i^t)
\right\|_{\mathbf{H}}^2
\right]
-
\frac{1}{2}\,
\mathbb{E}\!\left\|
\nabla_{\boldsymbol{\lambda}} f(\boldsymbol{\phi}^{t-1},\boldsymbol{\gamma}^{t-1})
\right\|_{\mathbf{H}}^2 \\
&\le
\frac{1}{2n}\sum_{i=1}^n
\mathbb{E}\!\left\|
\mathbf{H}^{1/2}\!\left(
\nabla_{\boldsymbol{\lambda}_i} f_i(\boldsymbol{\phi}_i^{t-1},\boldsymbol{\gamma}^{t-1})
-
\nabla_{\boldsymbol{\lambda}_i} f_i(\tilde{\boldsymbol{\phi}}_i^{t},\tilde{\boldsymbol{\lambda}}_i^{t})
\right)
\right\|^2
-
\frac{1}{2}\,
\mathbb{E}\!\left\|
\nabla_{\boldsymbol{\lambda}} f(\boldsymbol{\phi}^{t-1},\boldsymbol{\gamma}^{t-1})
\right\|_{\mathbf{H}}^2 \\
&=
\frac{1}{2n}\sum_{i=1}^n
\sum_{j=1}^B \eta_j\,
\mathbb{E}\!\left\|
\bigl(\nabla_{\boldsymbol{\lambda}_i} f_i(\boldsymbol{\phi}_i^{t-1},\boldsymbol{\gamma}^{t-1})\bigr)_j
-
\bigl(\nabla_{\boldsymbol{\lambda}_i} f_i(\tilde{\boldsymbol{\phi}}_i^{t},\tilde{\boldsymbol{\lambda}}_i^{t})\bigr)_j
\right\|^2
-
\frac{1}{2}\,
\mathbb{E}\!\left\|
\nabla_{\boldsymbol{\lambda}} f(\boldsymbol{\phi}^{t-1},\boldsymbol{\gamma}^{t-1})
\right\|_{\mathbf{H}}^2 \\
&\le
\frac{1}{2n}\sum_{i=1}^n
\sum_{j=1}^B \eta_j L_j^2\,
\mathbb{E}\!\left[
\|\boldsymbol{\phi}_i^{t-1} - \tilde{\boldsymbol{\phi}}_i^{t}\|^2
+
\|\boldsymbol{\gamma}^{t-1} - \tilde{\boldsymbol{\lambda}}_i^{t}\|^2
\right]
-
\frac{1}{2}\,
\mathbb{E}\!\left\|
\nabla_{\boldsymbol{\lambda}} f(\boldsymbol{\phi}^{t-1},\boldsymbol{\gamma}^{t-1})
\right\|_{\mathbf{H}}^2 ,
\qquad (\text{by block-wise smoothness})\\
& = \frac{1}{2}\left(\sum_{j=1}^B \eta_j L_j^2 \right)(\delta_t + \epsilon_t) - \frac{1}{2}\,
\mathbb{E}\!\left\|
\nabla_{\boldsymbol{\lambda}} f(\boldsymbol{\phi}^{t-1},\boldsymbol{\gamma}^{t-1})
\right\|_{\mathbf{H}}^2.
\end{align*}
And the same as Lemma~\ref{lemma:combine_nonconvex}, we have
\begin{align*}
\frac{m}{n}\sum_{i=1}^n
\mathbb{E}\!\left[
\left\langle
\nabla_{\boldsymbol{\phi}_i} f_i(\boldsymbol{\phi}_i^{t-1},\boldsymbol{\gamma}^{t-1}),
\tilde{\boldsymbol{\phi}}_i^t-\boldsymbol{\phi}_i^{t-1}
\right\rangle
\right] & \leq \frac{Lm}{2}(G_{t-1} + \delta_t).
\end{align*}
Combining all of these, and letting
\(
S \;:=\; \sum_{j=1}^B \eta_j L_j^2,
\)
we obtain
\begin{align*}
&\quad
\mathbb{E}\!\left[f(\boldsymbol{\phi}^t,\boldsymbol{\gamma}^t)\right]
-
\mathbb{E}\!\left[f(\boldsymbol{\phi}^{t-1},\boldsymbol{\gamma}^{t-1})\right]
-
\frac{L}{2}\mathbb{E}\|\boldsymbol{\gamma}^t-\boldsymbol{\gamma}^{t-1}\|^2
-
\frac{mL}{2}\delta_t \\
&\le
\frac{1}{2}\left(\sum_{j=1}^B \eta_j L_j^2 \right)(\delta_t + \epsilon_t)
-
\frac{1}{2}\,
\mathbb{E}\!\left\|
\nabla_{\boldsymbol{\lambda}} f(\boldsymbol{\phi}^{t-1},\boldsymbol{\gamma}^{t-1})
\right\|_{\mathbf{H}}^2
+
\frac{Lm}{2}(G_{t-1} + \delta_t).
\end{align*}
Rearranging terms yields
\begin{align*}
\mathbb{E}\!\left[f(\boldsymbol{\phi}^t,\boldsymbol{\gamma}^t)\right]
-
\mathbb{E}\!\left[f(\boldsymbol{\phi}^{t-1},\boldsymbol{\gamma}^{t-1})\right]
&\le
\frac{L}{2}\mathbb{E}\|\boldsymbol{\gamma}^t-\boldsymbol{\gamma}^{t-1}\|^2
+
\Bigl(\frac{mL}{2}+\frac{S}{2}\Bigr)\delta_t
+
\frac{S}{2}\epsilon_t
+
\frac{Lm}{2}G_{t-1} \\
&\quad
-
\frac{1}{2}\,
\mathbb{E}\!\left\|
\nabla_{\boldsymbol{\lambda}} f(\boldsymbol{\phi}^{t-1},\boldsymbol{\gamma}^{t-1})
\right\|_{\mathbf{H}}^2.
\end{align*}
By \eqref{eq:delta_bound}, we have
\(
\delta_t
\;\le\;
\frac{2L_{\boldsymbol{\phi}}^2}{\mu^2}\bigl(G_{t-1}+\epsilon_t\bigr),
\)
and hence
\begin{align*}
& \quad \mathbb{E}\!\left[f(\boldsymbol{\phi}^t,\boldsymbol{\gamma}^t)\right]
-
\mathbb{E}\!\left[f(\boldsymbol{\phi}^{t-1},\boldsymbol{\gamma}^{t-1})\right]\\
&\le
\frac{L}{2}\mathbb{E}\|\boldsymbol{\gamma}^t-\boldsymbol{\gamma}^{t-1}\|^2
+
\Biggl(
\frac{Lm}{2}
+
\frac{L_{\boldsymbol{\phi}}^2}{\mu^2}\bigl(mL+S\bigr)
\Biggr)G_{t-1} +
\Biggl(
\frac{S}{2}
+
\frac{L_{\boldsymbol{\phi}}^2}{\mu^2}\bigl(mL+S\bigr)
\Biggr)\epsilon_t
-
\frac{1}{2}\,
\mathbb{E}\!\left\|
\nabla_{\boldsymbol{\lambda}} f(\boldsymbol{\phi}^{t-1},\boldsymbol{\gamma}^{t-1})
\right\|_{\mathbf{H}}^2, \\
&\le\;
\Biggl(
\frac{L}{2}
+
\frac{S}{2}
+
\frac{L_{\boldsymbol{\phi}}^2}{\mu^2}\bigl(mL+S\bigr)
\Biggr)\epsilon_t +
\Biggl(
\frac{Lm}{2}
+
\frac{L_{\boldsymbol{\phi}}^2}{\mu^2}\bigl(mL+S\bigr)
\Biggr)G_{t-1}
-
\frac{1}{2}\,
\mathbb{E}\!\left\|
\nabla_{\boldsymbol{\lambda}} f(\boldsymbol{\phi}^{t-1},\boldsymbol{\gamma}^{t-1})
\right\|_{\mathbf{H}}^2.
\end{align*}
\end{proof}
\end{lemma}

\begin{lemma}\label{lemma:main_nonconvex_block}
Let $\kappa_\phi := \frac{L_{\boldsymbol{\phi}}^2}{\mu^2}$ and define
\(
\mathbf{H}
:=
\mathrm{blkdiag}\bigl(\eta_1\mathbf{I}_{d_1},\ldots,\eta_B\mathbf{I}_{d_B}\bigr),
\qquad
\|\boldsymbol{v}\|_{\mathbf{H}}^2
:=
\sum_{j=1}^B \eta_j\|(\boldsymbol{v})_j\|^2 .
\)
Let
\(
\rho:=\frac{2(n-m)}{2n-m},
\qquad
\xi:=\frac{2m}{2n-m}+\frac{2n}{m},
\qquad
\eta_{\max}:=\max_{j\in[B]}\eta_j,
\)
\(
S:=\sum_{j=1}^B \eta_j L_j^2,
\qquad
E:=\frac{L}{2}+\frac{S}{2}+\kappa_\phi(mL+S),
\qquad
F:=\frac{Lm}{2}+\kappa_\phi(mL+S),
\)
where $L := \sqrt{L_\phi^2+\sum_{j=1}^B L_j^2}$.
Define
\(
D:=1-4(1+2\kappa_\phi)\sum_{j=1}^B \eta_j^2 L_j^2,
\qquad
p:=\frac{8(1+\kappa_\phi)\sum_{j=1}^B \eta_j^2 L_j^2}{D},
\qquad
q:=\frac{4\eta_{\max}}{D},
\)
and choose
\(
c_1
:=
-\frac{Ep+F}{\xi p+(\rho-1)},
\qquad
c_2
:=
\frac{1}{2}-(E+c_1\xi)\,q.
\)

Assume the step size satisfy
\begin{align}
\sum_{j=1}^B \eta_j^2 L_j^2
&<
\frac{1}{4(1+2\kappa_\phi)},\quad
8\xi(1+\kappa_\phi)\sum_{j=1}^B \eta_j^2 L_j^2
<
(1-\rho)\,D,\quad
8(E+c_1\xi)\,\eta_{\max}
<
D.
\label{eq:eta_conditions_all}
\end{align}
Then $D>0$, $c_1>0$, and $c_2>0$, and the iterates satisfy
\begin{align}
\mathbb{E}\!\left[f(\boldsymbol{\phi}^t,\boldsymbol{\gamma}^t)\right]
+c_1\,G_t
\le
\mathbb{E}\!\left[f(\boldsymbol{\phi}^{t-1},\boldsymbol{\gamma}^{t-1})\right]
+c_1\,G_{t-1}
-
c_2\,
\mathbb{E}\Bigl\|
\nabla_{\boldsymbol{\lambda}} f(\boldsymbol{\phi}^{t-1},\boldsymbol{\gamma}^{t-1})
\Bigr\|_{\mathbf{H}}^2 .
\label{eq:blockwise_descent}
\end{align}
\end{lemma}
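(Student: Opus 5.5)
The proof mirrors Lemma~\ref{lemma:main_nonconvex}: merge three one-step inequalities into a Lyapunov decrease for $\mathbb{E}[f(\boldsymbol{\phi}^t,\boldsymbol{\gamma}^t)]+c_1G_t$. The first step is to sum the block-wise $\epsilon$ bound (the lemma preceding the descent lemma) over $j\in[B]$. Using $\epsilon_t=\sum_j(\epsilon_t)_j$, this gives
\begin{align*}
\epsilon_t \le 4\Bigl(\sum_{j}\eta_j^2L_j^2\Bigr)\bigl((2+2\kappa_\phi)G_{t-1}+(1+2\kappa_\phi)\epsilon_t\bigr)+4\sum_j\eta_j^2\,\mathbb{E}\|(\nabla_{\boldsymbol{\lambda}} f)_j\|^2 .
\end{align*}
Next we bound $\sum_j\eta_j^2\|(\cdot)_j\|^2\le\eta_{\max}\|\cdot\|_{\mathbf{H}}^2$. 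The first condition in \eqref{eq:eta_conditions_all} gives $D>0$, so dividing by $D$ yields $\epsilon_t\le p\,G_{t-1}+q\,\mathbb{E}\|\nabla_{\boldsymbol{\lambda}} f(\boldsymbol{\phi}^{t-1},\boldsymbol{\gamma}^{t-1})\|_{\mathbf{H}}^2$ with the stated $p,q$.

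The second ingredient is the recursion $G_t\le\rho G_{t-1}+\xi\epsilon_t$ from the unpreconditioned nonconvex analysis. Its proof uses only the sampling structure and Lemma~\ref{lemma:gamma_bound}; the latter relies on Jensen and uniform sampling, not on $\eta$, so it transfers verbatim to Oracle~II. The third ingredient is the preceding descent lemma, $\mathbb{E}f^t-\mathbb{E}f^{t-1}\le E\epsilon_t+FG_{t-1}-\tfrac12\mathbb{E}\|\nabla_{\boldsymbol{\lambda}} f\|_{\mathbf{H}}^2$.

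Adding $c_1$ times the $G$-recursion to the descent inequality and substituting the $\epsilon_t$ bound gives
\begin{align*}
\bigl((E+c_1\xi)p+F+c_1(\rho-1)\bigr)G_{t-1}+\bigl((E+c_1\xi)q-\tfrac12\bigr)\mathbb{E}\|\nabla_{\boldsymbol{\lambda}} f\|_{\mathbf{H}}^2 .
\end{align*}
The choice $c_1=-(Ep+F)/(\xi p+\rho-1)$ kills the $G_{t-1}$ coefficient, and what remains is \eqref{eq:blockwise_descent} with $c_2=\tfrac12-(E+c_1\xi)q$. One caveat: $c_1>0$ must hold before multiplying the $G$-recursion, so that the inequality direction is preserved.

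It remains to verify the signs, which I expect to be the only delicate point. For $c_1>0$: the numerator $Ep+F$ is positive, so we need $\xi p<1-\rho$. This is exactly the second condition, since $\xi p=8\xi(1+\kappa_\phi)\sum_j\eta_j^2L_j^2/D$. For $c_2>0$: we need $(E+c_1\xi)\cdot 4\eta_{\max}/D<\tfrac12$, which is equivalent to $8(E+c_1\xi)\eta_{\max}<D$, the third condition.

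A subtlety is that $c_1$ itself depends on the $\eta_j$ through $S$, $E$, $F$ and $p$, so the third condition is implicit. Since the lemma assumes it outright, no further work is needed; I would only remark that it is satisfiable for small steps because $c_1$ stays bounded as $\eta_j\to0$.
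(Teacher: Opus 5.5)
Your proposal is correct and follows the paper's proof essentially step for step. You sum the block-wise $\epsilon$ bounds using $\eta_j^2\le\eta_{\max}\eta_j$ to get $\epsilon_t\le pG_{t-1}+q\,\mathbb{E}\|\nabla_{\boldsymbol{\lambda}} f\|_{\mathbf{H}}^2$, add $c_1$ times the $G$-recursion to the preconditioned descent inequality, choose $c_1$ to cancel the $G_{t-1}$ coefficient, and read off $D>0$, $c_1>0$, $c_2>0$ from the three step-size conditions. Your explicit remark that $c_1>0$ (hence $E+c_1\xi>0$) is needed before scaling and substituting slightly improves on the paper's ordering, which multiplies first and checks the signs afterwards.
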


\begin{proof}
Start from the three inequalities:
\begin{align}
&(\epsilon_t)_j
\le
4\eta_j^2 L_j^2\!\left(
2(1+\kappa_\phi)\,G_{t-1}
+(1+2\kappa_\phi)\,\epsilon_t
\right)
+
4\eta_j^2\,
\mathbb{E}\!\left\|
(\nabla_{\boldsymbol{\lambda}} f(\boldsymbol{\phi}^{t-1},\boldsymbol{\gamma}^{t-1}))_j
\right\|^2,
\qquad \forall j\in[B],
\label{eq:epsj-bound}
\\
&G_t
\le
\rho\,G_{t-1}+\xi\,\epsilon_t,
\label{eq:G-rec-block}
\\
&\mathbb{E}\!\left[f(\boldsymbol{\phi}^t,\boldsymbol{\gamma}^t)\right]
-
\mathbb{E}\!\left[f(\boldsymbol{\phi}^{t-1},\boldsymbol{\gamma}^{t-1})\right]
\le
E\,\epsilon_t+F\,G_{t-1}
-\frac{1}{2}\,
\mathbb{E}\Bigl\|
\nabla_{\boldsymbol{\lambda}} f(\boldsymbol{\phi}^{t-1},\boldsymbol{\gamma}^{t-1})
\Bigr\|_{\mathbf{H}}^2.
\label{eq:f-descent-block}
\end{align}

Summing over $j$ and using $\eta_j^2\le\eta_{\max}\eta_j$ yields
\[
\epsilon_t
\le
p\,G_{t-1}
+
q\,\mathbb{E}\|\nabla_{\boldsymbol{\lambda}} f(\boldsymbol{\phi}^{t-1},\boldsymbol{\gamma}^{t-1})\|_{\mathbf H}^2 ,
\]
where $D>0$.
Multiply \eqref{eq:G-rec-block} by $c_1$ and add it to \eqref{eq:f-descent-block}:
\begin{align*}
&\mathbb{E}\!\left[f(\boldsymbol{\phi}^t,\boldsymbol{\gamma}^t)\right]+c_1G_t
-
\Bigl(\mathbb{E}\!\left[f(\boldsymbol{\phi}^{t-1},\boldsymbol{\gamma}^{t-1})\right]+c_1G_{t-1}\Bigr)
\le
(A+c_1\xi)\epsilon_t
+\bigl(B+c_1(\rho-1)\bigr)G_{t-1}
-\frac{1}{2}\,
\mathbb{E}\Bigl\|
\nabla_{\boldsymbol{\lambda}} f(\boldsymbol{\phi}^{t-1},\boldsymbol{\gamma}^{t-1})
\Bigr\|_{\mathbf{H}}^2.
\end{align*}
Substituting the bound on $\epsilon_t$ gives
\begin{align*}
&\mathbb{E}\!\left[f(\boldsymbol{\phi}^t,\boldsymbol{\gamma}^t)\right]+c_1G_t
-
\Bigl(\mathbb{E}\!\left[f(\boldsymbol{\phi}^{t-1},\boldsymbol{\gamma}^{t-1})\right]+c_1G_{t-1}\Bigr)
\\
\le
&\Bigl((A+c_1\xi)p+B+c_1(\rho-1)\Bigr)G_{t-1}
+\Bigl((A+c_1\xi)q-\frac{1}{2}\Bigr)
\mathbb{E}\Bigl\|
\nabla_{\boldsymbol{\lambda}} f(\boldsymbol{\phi}^{t-1},\boldsymbol{\gamma}^{t-1})
\Bigr\|_{\mathbf{H}}^2.
\end{align*}
With the choice
\(
c_1=-(Ap+B)/(\xi p+(\rho-1)),
\)
the coefficient of $G_{t-1}$ vanishes. Defining
\(
c_2 := \frac{1}{2}-(A+c_1\xi)q,
\)
we obtain \eqref{eq:blockwise_descent}.
Moreover, $D>0$ holds whenever
\begin{equation}\label{eq:Dpos_simple}
\sum_{j=1}^B \eta_j^2 L_j^2
<
\frac{1}{4(1+2\kappa_\phi)}.
\end{equation}
Since
\(
c_1=-(Ap+\mathcal{B})/(\xi p+(\rho-1)),
\)
a sufficient condition for $c_1>0$ is
\begin{equation}\label{eq:c1pos_simple}
\xi p<1-\rho
\;\;\Longleftrightarrow\;\;
8\xi(1+\kappa_\phi)\sum_{j=1}^B \eta_j^2 L_j^2
<
(1-\rho)\,D.
\end{equation}
Similarly, since
\(
c_2=\frac{1}{2}-(A+c_1\xi)q,
\)
a sufficient condition for $c_2>0$ is
\begin{equation}\label{eq:c2pos_simple}
(A+c_1\xi)q<\frac{1}{2}
\;\;\Longleftrightarrow\;\;
8(A+c_1\xi)\,\eta_{\max}<D.
\end{equation}
Therefore, any step size $\{\eta_j\}_{j=1}^B$ satisfying
\eqref{eq:Dpos_simple}--\eqref{eq:c2pos_simple}
guarantee $D>0$, $c_1>0$, and $c_2>0$.
\end{proof}


\section{Details in Numerical Analysis and Additional Numerical Results} \label{app:experiments} 
\subsection{Experiments on Spatial Transcriptomics Data} \label{app:experiments_STD}
\paragraph{Full hierarchical model.}
Let $\mathbf{x}:=\{\mathbf{x}_1,\ldots,\mathbf{x}_n\}$ denote the observed data, where each
$\mathbf{x}_i\in\mathbb{R}^{d_x}$ for all $i\in[n]$.
Let $\boldsymbol{\zeta}:=\{\zeta_1,\ldots,\zeta_n\}$ denote the latent cluster assignments, where
$\zeta_i\in[K]:=\{1,\ldots,K\}$ indicates the cluster membership of $\mathbf{x}_i$.

We assume that, conditioned on the assignment $\zeta_i=k$, the observation $\mathbf{x}_i$
follows a Gaussian distribution with cluster-specific mean $\mathbf{c}_k$ and precision matrix $\mathbf{\Sigma}_0$, i.e., $\mathbf{x}_i\mid(\zeta_i=k,\mathbf{c}_k,\mathbf{\Sigma}_0)\sim\mathcal{N}(\mathbf{c}_k,\mathbf{\Sigma}_0)$.We assume that each cluster mean $\mathbf{c}_k$ follows a Gaussian prior, with $\mathbf{\Sigma}_0$ treated as a fixed hyperparameter.

In addition to the expression measurements, we incorporate spatial information.
For each data point $i$, let $l_i\in\mathbb{R}^2$ denote its spatial coordinates, and
collect these into $L=(l_1,\ldots,l_n)$.
Moreover, let $g_i\in\mathbb{R}^2$ denote a precomputed flow direction associated with point $i$,
and collect these into $G=(g_1,\ldots,g_n)$.
Both $L$ and $G$ are treated as observed and fixed throughout inference.

We define a neighborhood graph in the spatial domain.
For each $i\in[n]$, let $\mathcal{N}(i)$ denote the neighbor set of node $i$, e.g., obtained
via $k$-nearest neighbors using the coordinates $\{l_i\}_{i=1}^n$.
We then define the (undirected) edge set
\[
E=\{(i,j): j\in\mathcal{N}(i)\ \text{and}\ i\in\mathcal{N}(j)\}.
\]

The resulting hierarchical model is given by
\begin{align}
\nonumber p(\mathbf{x}\mid \boldsymbol{\zeta},\mathbf{c})
&= \prod_{i=1}^n
\mathcal{N}\!\left(\mathbf{x}_i \,\middle|\, \mathbf{c}_{\zeta_i},\mathbf{\Sigma}_0\right), \\[4pt]
\nonumber \mathbf{c}_k 
&\sim
\mathcal{N}\!\left(\boldsymbol{\xi},\mathbf{\Sigma}_1\right),
\qquad \forall \, k=1,\ldots,K, \\[4pt]
p(\boldsymbol{\zeta}\mid L,G)
&\propto
\exp\!\left(
\sum_{(i,j)\in E} r_{ij}\,\mathbb{I}(\zeta_i=\zeta_j)
\right), \label{eq:potts_model}
\end{align}
where the edge weight $r_{ij}$ for $(i,j)\in E$ is defined as
\[
r_{ij}
=
\frac{\bigl|g_i^\top (l_j-l_i)\bigr|}{\|g_i\|_2\,\|l_j-l_i\|_2}
+
\tau\!\left(
\frac{\mathbf{x}_i^\top \mathbf{x}_j}{\|\mathbf{x}_i\|_2\,\|\mathbf{x}_j\|_2}
+1
\right),
\qquad j\in\mathcal{N}(i),
\]
with $\tau\ge 0$, $\boldsymbol{\xi}$, $\mathbf{\Sigma}_0$, $\mathbf{\Sigma}_1$ fixed as hyperparameters.

\paragraph{Variational parameters}
Define 
\begin{align*}
q(\boldsymbol{\zeta}, \mathbf{c})
=
\Bigl(\prod_{i=1}^n q(\zeta_i)\Bigr)
\Bigl(\prod_{k=1}^K q(\mathbf{c_k})\Bigr),
\end{align*}
where
\[
q(\zeta_i) = \text{Categorical}(\phi_{i1}, \ldots, \phi_{iK}),
\]
and
\[
q(c_{kj})
=
\mathcal{N}\!\bigl(c_{kj} \mid m_{kj},s_{kj}^2\bigr), \; \forall\; k \in [K], j \in [d_x]
\]
The variational parameters are
\(
\{\phi_{ik}\}_{i,k}, \{c_{kj}\}_{k,j}, \{s_{kj}\}_{k,j}
\)
which are optimized to maximize the ELBO.

\paragraph{Evidence lower bound} We seek variational parameters such that the distribution $q$ approximates the true posterior as closely as possible. Accordingly, we aim to minimize the Kullback--Leibler divergence
\[
\mathrm{KL}\!\left(
q(\boldsymbol{\zeta},\mathbf{c})
\,\|\, 
p(\boldsymbol{\zeta},\mathbf{c} \mid \mathbf{x}, L, G)
\right).
\]
For notational simplicity, we omit the conditioning on $(L,G)$ in the remainder of the paper. Minimizing this KL divergence is equivalent to maximizing the evidence lower bound (ELBO). The ELBO is given by
\begin{align*}
\text{ELBO}(q)
&=
\mathbb{E}_q\!\left[\log p(\mathbf{x}, \boldsymbol{\zeta}, \mathbf{c})\right]
-
\mathbb{E}_q\!\left[\log q(\boldsymbol{\zeta}, \mathbf{c})\right].
\end{align*}

By computation, we have
\begin{align*}
- \text{ELBO}(q) = 
\underbrace{\sum_{i,k}\phi_{ik}\frac12\sum_j\Big[\log(2\pi\sigma_{0,j}^2)+\frac{(x_{ij}-m_{kj})^2+s_{kj}^2}{\sigma_{0,j}^2}\Big]}_{\text{(A) likelihood}}
+
\underbrace{\sum_k\frac12\sum_j\Big[\log(2\pi\sigma_{1,j}^2)+\frac{(m_{kj}-\xi_{j})^2+s_{kj}^2}{\sigma_{1,j}^2}\Big]}_{\text{(B) prior on }c}\\
\quad
+
\underbrace{\Big(-\sum_{(i,j)\in E} r_{ij}\sum_k\phi_{ik}\phi_{jk}+\log Z\Big)}_{\text{(C) Potts term}}
+
\underbrace{\sum_{i,k}\phi_{ik}\log\phi_{ik}}_{\text{(D) }q(\zeta)}
+
\underbrace{\sum_{k,j}\Big(-\frac12\log(2\pi s_{kj}^2)-\frac12\Big)}_{\text{(E) }q(c)},
\end{align*}
where $\boldsymbol{\Sigma}_0=\mathrm{diag}(\sigma_{0,1}^2,\dots,\sigma_{0,d_x}^2),\quad
\boldsymbol{\Sigma}_1=\mathrm{diag}(\sigma_{1,1}^2,\dots,\sigma_{1,d_x}^2),$  $\boldsymbol{\xi}_{k}=(\xi_{1},\dots,\xi_{d_x})$, and $Z$ is the partition function (normalizing constant) associated with the Potts model.

Our objective is equivalent to a finite-sum function:
\begin{align} \label{eq:spatial_objective}
\nonumber \min_{\{\phi_{ik}\}_{i,k},\,\{m_{kj}\}_{k,j},\,\{s_{kj}\}_{k,j}}
\;&
\sum_{i=1}^n
\Bigg[
\sum_{k=1}^K
\phi_{ik}
\Bigg(
\log \phi_{ik}
+
\frac{1}{2}\sum_{j=1}^{d_x}
\frac{m_{kj}^2+s_{kj}^2 - 2x_{ij} m_{kj}}{\sigma_{0,j}^2}
\Bigg)
\Bigg] \\[2pt]
\nonumber &\qquad
+\frac{1}{2}\sum_{k=1}^K\sum_{j=1}^{d_x}
\Bigg(
\frac{m_{kj}^2+s_{kj}^2 - 2\xi_{j} m_{kj}}{\sigma_{1,j}^2}
-\log s_{kj}^2
\Bigg)
-\sum_{(i,j)\in E} r_{ij}\sum_{k=1}^K \phi_{ik}\phi_{jk} \\[4pt]
&\qquad
+
\underbrace{
\frac{1}{2}\sum_{i=1}^n \sum_{j=1}^{d_x}
\Bigl(
\log(2\pi\sigma_{0,j}^2)
+\frac{x_{ij}^2}{\sigma_{0,j}^2}
\Bigr)
+
\frac{1}{2}\sum_{k=1}^K\sum_{j=1}^{d_x}\bigl(\log\sigma_{1,j}^2+\frac{\xi_{j}^2}{\sigma_{1,j}^2} -1\bigr)
+\log Z
}_{\text{constant}} .
\end{align}
Notice that the optimization variables are subject to implicit constraints. 
In particular, for each data point $i$, the variational weights $\{\phi_{ik}\}_{k=1}^K$ lie on the probability simplex, i.e.,
$\sum_{k=1}^K \phi_{ik} = 1$ and $\phi_{ik} \ge 0$. 
Moreover, the scale parameters satisfy $s_{kj} > 0$ for all $k$ and $j$.

Admittedly, the negative ELBO is not fully decomposable due to the coupling terms induced by the Potts model in \eqref{eq:potts_model}.  However, to handle the large-scale problems, we partition the full dataset into a collection of small patches and restrict the neighborhood graph to within-patch connections, ignoring interactions across different patches.
Under this patch-wise approximation, the objective becomes batch-decomposable, and can therefore be rewritten as
\begin{align*}
\min_{\{\phi_{ik}\},\{m_{kj}\},\{s_{kj}\}}
\;&
\sum_{b=1}^B
\sum_{i\in S_b}
\Bigg[
\sum_{k=1}^K
\Bigg(
\phi_{ik}
\Bigg(
\log \phi_{ik}
+
\frac{1}{2}\sum_{j=1}^{d_x}
\frac{m_{kj}^2+s_{kj}^2 - 2x_{ij} m_{kj}}{\sigma_{0,j}^2}
\Bigg)
+
\frac{1}{2n}\sum_{j=1}^{d_x}
\Bigg(
\frac{m_{kj}^2+s_{kj}^2 - 2\xi_{j} m_{kj}}{\sigma_{1,j}^2}
-\log s_{kj}^2
\Bigg) \\
&\qquad
-
\frac{1}{2}\sum_{\ell \in \mathcal{N}(i)}
r_{i\ell}\phi_{ik}\phi_{\ell k} \Bigg)
\Bigg] .
\end{align*}
Therefore, for the spatial transcriptomics dataset, we divide the problem into approximately 30 batches. This choice reflects a trade-off between computational efficiency and modeling accuracy: using more batches increases the approximation error due to neglected interactions across patch boundaries, whereas using fewer batches raises the computational cost within each batch.

\paragraph{Augmented Lagrangian subproblem}
In both \name\ and \pname, an augmented Lagrangian (AL) subproblem is solved at each iteration. Below, we present the batch-wise closed-form expression of the corresponding AL objective.  
We define $\rho_k = \log s_k^2$ and $\phi_{ik} = \frac{\exp(\alpha_{ik})}{\sum_{j=1}^K \exp(\alpha_{ij})}$, to change the problem into an unconstrained optimization problem.  
According to Algorithm~\ref{alg:meta}, at each iteration we solve the following augmented Lagrangian problem:


\begin{align*}
\mathcal{L}_b (\alpha_{ik}, m_{kj}^b, \rho_{kj}^b) = & 
\sum_{i\in S_b}
\Bigg[
\sum_{k=1}^K
\Bigg(
\frac{\exp{(\alpha_{ik})}}{\sum_{j =1}^K\exp{(\alpha_{ij})}}
\Bigg(
\log \frac{\exp{(\alpha_{ik})}}{\sum_{j =1}^K\exp{(\alpha_{ij})}}
+
\frac{1}{2}\sum_{j=1}^{d_x}
\frac{(m_{kj}^b)^2+ \exp{(\rho_{kj}^b)} - 2x_{ij} m_{kj}^b}{\sigma_{0,j}^2}
\Bigg)
\\
&\qquad +
\frac{1}{2n}\sum_{j=1}^{d_x}
\Bigg(
\frac{(m_{kj}^b)^2+\exp(\rho_{kj}^b) - 2\xi_{j} m_{kj}^b}{\sigma_{1,j}^2}
-\rho_{kj}^b
\Bigg) 
-
\frac{1}{2}\sum_{\ell \in \mathcal{N}(i)}
r_{i\ell}
\frac{\exp{(\alpha_{ik})}}{\sum_{j =1}^K\exp{(\alpha_{ij})}}
\frac{\exp{(\alpha_{\ell k})}}{\sum_{j =1}^K\exp{(\alpha_{\ell j})}}
\Bigg) 
\Bigg]\\
& \qquad + \mu_{kj}^b (m_{kj}^b - m_{kj}^0) + \gamma_{kj}^b (\rho_{kj}^b - \rho_{kj}^0) + \frac{1}{2\eta} [(m_{kj}^b - m_{kj}^0)^2 + (\rho_{kj}^b - \rho_{kj}^0)^2.
\end{align*}

\paragraph{Data preprocessing} For the spatial transcriptomics dataset, we apply standard preprocessing steps to the raw gene expression counts prior to model training. Specifically, we first identify a subset of highly variable genes and retain the top 5{,}000 genes to reduce dimensionality and noise. 
Next, the gene expression counts are normalized to account for differences in sequencing depth across spatial locations, followed by a logarithmic transformation. Finally, the normalized data are scaled by their standard deviations and clipped to a reasonable range to prevent extreme values from dominating the optimization and to stabilize training. The resulting preprocessed dataset is used for all subsequent experiments.

\paragraph{Bayesian model settings}
The hyperparameters of the Bayesian model are chosen as follows. We set $\boldsymbol{\xi}$ to the empirical mean vector of the data. At initialization, we perform $k$-means clustering on the dataset and use the resulting cluster assignments as the initial labels. The average within-cluster variance is used to define $\boldsymbol{\Sigma}_0$, while the variance of the cluster means is used to define $\boldsymbol{\Sigma}_1$.

\paragraph{Experimental Settings}
When implementing the different algorithms, we carefully tune the learning rates and other hyperparameters. We first conduct a coarse-grained search over a broad range of values (e.g., $[10^{-6}, 10^{-4}, 10^{-2}, 1, 10^{2}, 10^{4}]$) to identify a reasonable operating region, and then refine the hyperparameters through a finer-grained search within this region.

\subsection{Experiments on Synthetic Data} \label{app:experiment_syn}
For the synthetic experiments, we generate data from a Gaussian mixture model (GMM) with $K=5$ clusters in $d=10$ dimensions and equal mixture weights. The component parameters are first sampled from a Gaussian prior, after which data points are generated from the resulting mixture distribution.

The GMM model is
\begin{align*}
&\nonumber p(\mathbf{x}\mid \boldsymbol{\zeta},\mathbf{c})
= \prod_{i=1}^n
\mathcal{N}\!\left(\mathbf{x}_i \,\middle|\, \mathbf{c}_{\zeta_i},\mathbf{\Sigma}_0\right), \\[4pt]
&\nonumber \mathbf{c}_k 
\sim
\mathcal{N}\!\left(\boldsymbol{\xi},\mathbf{\Sigma}_1\right),
\qquad \forall \, k=1,\ldots,K, \\[4pt]
&\zeta_i \sim \text{Categorical}(1/K, \ldots, 1/K), \quad \forall \; i = 1, \ldots, n.
\label{eq:potts_model}
\end{align*}
Here $\boldsymbol{\xi}$, $\mathbf{\Sigma}_0$, and $\mathbf{\Sigma}_1$ are set in an empirical-Bayes manner:
$\boldsymbol{\xi}$ is the empirical mean of the data,
$\mathbf{\Sigma}_0$ is determined by the average within-cluster variance from an initial $k$-means clustering,
and $\mathbf{\Sigma}_1$ is determined by the variance of the cluster means.

\paragraph{Experimental Settings}
In this experiment, we use 100{,}000 data points and adopt a mini-batch training scheme with a batch size of 1{,}000. 
The mini-batches are intentionally constructed to be biased rather than uniformly sampled: each mini-batch predominantly contains samples from a single cluster, resulting in non-i.i.d.\ and cluster-imbalanced batches. 
This setting is designed to stress-test the stability of the proposed algorithm under heterogeneous and biased sampling conditions.

\subsection{Experiments on Quadratic Synthetic Data} \label{app:exp_quadratic}
To evaluate convergence under strong convexity, we compare \name, \pname, and the baselines on the following quadratic program:
\[
\begin{aligned}
\min_{\{\boldsymbol{\phi}_i,\boldsymbol{\lambda}_i\}_{i=1}^n, \boldsymbol{\lambda}_0}\;
&\frac{1}{n}\sum_{i=1}^n 
\begin{pmatrix}
\boldsymbol{\phi}_i^\top ,
\boldsymbol{\lambda}_i^\top
\end{pmatrix}
\boldsymbol{Q}_i
\begin{pmatrix}
\boldsymbol{\phi}_i \\
\boldsymbol{\lambda}_i
\end{pmatrix}
\;+\;
\boldsymbol{v}_i^{\top}
\begin{pmatrix}
\boldsymbol{\phi}_i \\
\boldsymbol{\lambda}_i
\end{pmatrix}
\\
\text{s.t.}\quad
&\boldsymbol{\lambda}_i = \boldsymbol{\lambda}_0
\end{aligned}
\]
where $\boldsymbol{Q}_i \in \mathbb{R}^{10 \times 10}, \boldsymbol{Q}_i \in \mathbb{R}^{++}_{10}$, and $\boldsymbol{v}_i \in \mathbb{R}^{10}$ are the local information for each $i$. The condition number for each $\boldsymbol{Q}_i$ is 1000. We set $n=10{,}000$, with decision variables $\boldsymbol{\lambda}_i \in \mathbb{R}^{5} $ and $\boldsymbol{\phi}_i \in \mathbb{R}^{5}$. In our experiments, we use $\boldsymbol{v}_i=\mathbf{0}$, so the optimal objective value is zero, which makes convergence straightforward to assess.

\end{document}